\documentclass{article}

\usepackage{microtype}
\usepackage{graphicx}
\usepackage{subfigure}
\usepackage{booktabs} 

\usepackage{hyperref}

\usepackage[accepted]{icml2026}

\usepackage{amsmath}
\usepackage{amssymb}
\usepackage{mathtools}
\usepackage{amsthm}

\usepackage{multirow}
\usepackage{threeparttable}
\usepackage{makecell}
\usepackage{array}

\usepackage[capitalize,noabbrev]{cleveref}

\usepackage{enumitem}   
\usepackage{algorithm}
\usepackage{algorithmic}

\theoremstyle{plain}
\newtheorem{theorem}{Theorem}[section]
\newtheorem{proposition}[theorem]{Proposition}
\newtheorem{lemma}[theorem]{Lemma}
\newtheorem{corollary}[theorem]{Corollary}
\theoremstyle{definition}
\newtheorem{definition}[theorem]{Definition}
\newtheorem{assumption}[theorem]{Assumption}
\theoremstyle{remark}

\theoremstyle{plain}
\newtheorem{subtheorem}{Theorem}[subsection]
\newtheorem{subproposition}[subtheorem]{Proposition}

\usepackage[textsize=tiny]{todonotes}

\icmltitlerunning{General Quantification of Covariate and Concept Shifts}

\begin{document}

\twocolumn[
  \icmltitle{General Quantification of Covariate and Concept Shifts}



  \icmlsetsymbol{equal}{*}

  \begin{icmlauthorlist}
    \icmlauthor{Hongbo Chen}{yyy}
    \icmlauthor{Li Charlie Xia}{yyy}
  \end{icmlauthorlist}

  \icmlaffiliation{yyy}{Department of Statistics and Financial Mathematics, School of Mathematics, South China University of Technology, Guangzhou, China. First author: Hongbo Chen \textless{}hongboc616@gmail.com\textgreater{}} 
  \icmlcorrespondingauthor{Li Charlie Xia}{lcxia@scut.edu.cn}

  \icmlkeywords{Machine Learning, ICML}

  \vskip 0.3in
]



\printAffiliationsAndNotice{}  

\begin{abstract}
Generalization under distribution shift remains a core challenge in modern machine learning, yet existing learning bound theory is limited to narrow, idealized settings and is non-estimable from samples. In this paper, we bridge the gap between theory and practical applications. We first show that existing definition of concept shift breaks when the source and target supports mismatch. Leveraging entropic optimal transport, we propose a key notion: $\gamma^{*}\!$-concept shifts, and derive a general error bound unifying covariate and $\gamma^{*}\!$-concept shifts, which applies to broad loss functions, label spaces, and stochastic labeling. We further develop estimators for these shifts with concentration guarantees, and the DataShifts algorithm, which can quantify distribution shifts and estimate the error bound in most applications - a rigorous and general tool for analyzing learning error under distribution shift.
\end{abstract}

\section{Introduction}

With the growth of data and computing power, supervised learning has achieved remarkable success. Nonetheless, traditional supervised learning assumes that the training data (source domain) and the test or deployment data (target domain) share the same distribution. However, in many real-world applications, the test data distribution can differ substantially from the training distribution, and this discrepancy can significantly impact model performance in the target domain. To analyze such challenges, researchers theorized that the distributions between the source and target domains are shifted and developed methods to assess how learners trained in the source domain could perform on the target domain. Depending on whether the target domain data is accessible, the problems are further categorized into domain adaptation and domain generalization.

Theoretical results on distribution shift usually bound a model's target domain error by its source domain error plus a measure of the distribution shift. The shift is further dissected into X (covariate) and Y$\mid$X (concept) shifts \citep{moreno2012unifying,liu2021towards,zhang2023nico}. Early studies proposed using $\mathcal{H}$-divergence to measure  X shift and derived an error bound for binary classification \citep{ben2006analysis,ben2010theory}. Later works improved X shift using the maximum mean discrepancy \citep{long2015learning} and the Wasserstein distance \citep{shen2018wasserstein,courty2017joint}. Further studies proposed more complex metrics for the X shift to obtain bounds for multiclass classification \citep{zhang2020unsupervised, zhang2019bridging}. These results focused only on X shift and additionally relied on a joint-error term between the source and target domains. Lately, \citet{zhao2019learning} improved this loose joint-error term and proposed a bound that explicitly considers both X and Y$\mid$X shifts for binary classification, and \citet{zhang2023nico} extended the theory to multiclass classification.

In this paper, we focus on two remaining key problems in the existing theoretical frameworks, which significantly hinder researchers from analyzing X and Y$\mid$X shifts in applications: 

\begin{itemize}[leftmargin=1.6em, labelsep=0.5em]  
  \item \textbf{Generalizability.} Existing theories rely on restrictive assumptions:
        they require deterministic labeling, omitting label noise and latent
        confounders commonly seen in practice; moreover, they only apply to classification with absolute error, excluding broader tasks such as regression and loss families.
  \item \textbf{Estimability.} Although existing theories provide a preliminary
        definition of Y$\mid$X shift, this definition and the accompanying error
        bounds—are \emph{not} estimable. As a result, one cannot rigorously quantify
        Y$\mid$X shift on real data, nor assess its impact on model performance.
\end{itemize}

We aim to provide a general theoretical framework unifying X and Y$\mid$X shifts that widely applies to stochastic labeling, most supervised tasks, and broader loss functions. Moreover, these two shifts can be accurately estimated from samples, thereby offering a rigorous, plug-and-play tool for quantifying and analyzing distribution shifts in real applications.

Specifically, our key observation is that if X shift occurs, the supports of the source and target covariate distributions may not overlap. Such a support mismatch renders the existing theories' Y$\mid$X shift ill-defined, fundamentally causing the error bounds non-estimable and loose. Our key innovation is to employ the entropic optimal transport to give a general definition for X and Y$\mid$X shift. The $\gamma^{*}$-Y$\mid$X shift we propose, which depends on the entropic optimal transport coupling of X shift, stays well-defined even when supports mismatch, and applies to the stochastic labeling and general label space. Based on that, we derived a new error bound that considers both the X and the $\gamma^{*}$-Y$\mid$X shifts. Our new bound relies only on the Lipschitz continuity of the hypothesis $h$ and loss $\ell$, and is agnostic to specific hypothesis space, label space, or loss function, which naturally generalizes to binary and multiclass classification, regression and other tasks.

Since our $\gamma^{*}$-Y$\mid$X shift is well-defined regardless of support mismatch, X and Y$\mid$X shifts, and our new error bound becomes estimable from samples. Nonetheless, for X shift, the traditional plug-in estimator for entropic optimal transport tends to overestimate due to the curse of dimensionality. We thus further developed a debiased estimator that remains accurate in high dimensions. For our $\gamma^{*}$-Y$\mid$X shift, we also proposed an estimator. We proved both estimators' concentration inequalities to their true values. Leveraging these two estimators, we presented the DataShifts algorithm, enabling quantification of X and Y$\mid$X shifts on general labeled data. Finally, we apply our theoretical framework and DataShifts to three distinct tasks: Novozymes enzyme prediction, ColoredMNIST, and PACS, clearly validating the general effectiveness of our theoretical results.

\textbf{Contributions.} In summary, our major contributions are:
\begin{itemize}[leftmargin=1.6em, labelsep=0.5em]
\item We show that support mismatch makes the existing Y$\mid$X shift ill-defined—hence loose and non-estimable. We introduce the $\gamma^{*}$-Y$\mid$X shift as a well-defined concept shift, which possesses many desirable properties.
\item We derive a general error bound unifying covariate and $\gamma^{*}\!$-concept shifts. Our bound covers broader learning scenarios, far beyond traditional binary classification.
\item For the X and $\gamma^{*}$-Y$\mid$X shifts in our theory, we propose two estimators and prove their concentration inequalities, ensuring that these shifts can be rigorously estimated from finite samples.
\item We integrate our theoretical results into the DataShifts algorithm, which can estimate X and $\gamma^{*}$-Y$\mid$X shifts from real data, supplying a rigorous and general tool for quantifying and analyzing distribution shifts.
\end{itemize}

The paper is organized as follows: Section~\ref{sec:preliminaries} covers the preliminaries, Section~\ref{sec:theory} presents the population-level theoretical results, Section~\ref{sec:estimation} focuses on the statistical results, and Section~\ref{sec:experiments} presents experiments validating our theory.


\section{Preliminary}
\label{sec:preliminaries}

\subsection{Problem Setup}

Our problem is to bound the error of models under distribution shift. Let $\mathcal{X}$ and $\mathcal{Y}$ be the covariate space and the label space, respectively. Let $\mathcal D_{XY}^{S}$ and $\mathcal D_{XY}^{T}$ be the joint distributions of covariates and labels on $\mathcal X\times\mathcal Y$ for the source and target domains, respectively. $\mathcal D_{X}^{S}$, $\mathcal D_{X}^{T}$ are their covariate marginals on $\mathcal X$. For any $x\in\mathcal X$, we let $\mathcal D_{Y|X=x}^{S}$ and $\mathcal D_{Y|X=x}^{T}$ be the conditional label distributions at $x$ in the source and target domain. Let $\mathcal Y^{\prime}$ be the output space of the learner, $\ell:\mathcal Y\times\mathcal Y^{\prime}\to\mathbb R$ the loss, and $\mathcal H\subseteq\{g:\mathcal X\to\mathcal Y^{\prime}\}$ the hypothesis space. For a hypothesis $h\in\mathcal H$, the learning errors for source and target domains are: 
\[
\epsilon_S(h) = \mathbb E_{(x_S,y_S)\sim\mathcal D_{XY}^{S}}\bigl[\ell(y_S,h(x_S))\bigr] \\
\]
\[
\epsilon_T(h) = \mathbb E_{(x_T,y_T)\sim\mathcal D_{XY}^{T}}\bigl[\ell(y_T,h(x_T))\bigr]
\]
We hope to bound $\epsilon_T(h)$ by $\epsilon_S(h)$ and a measure of distribution shift, consistent with existing theoretical results \citep{ben2006analysis,zhao2019learning}. Notably, the space $\mathcal X$ can be the raw input space or a representation space output by an upstream learner \citep{ben2006analysis}. Our theory treats them in the same way, so it applies to both raw data and learned representations.

\paragraph{Stochastic and deterministic labeling.} Above, we assume that the label $y$ follows the conditional distribution at point $x$: $y\sim\mathcal D_{Y|X=x}$, namely the stochastic labeling setting \citep{zhao2019learning}. It enables our theory to accommodate latent confounders and label noise, which are common in practice. In contrast, existing theories oversimplify by using deterministic labeling: a labeling function $f:\mathcal X\to\mathcal Y$ with $y=f(x)$, a special case of stochastic labeling in which the conditional distribution collapses to a Dirac mass at $f(x)$, i.e., $\mathcal D_{Y|X=x}=\delta_{f(x)}$.

\subsection{Existing Theory and Ill-Defined Y$\mid$X Shift}

We first demonstrate that support mismatch leads to an ill-defined Y$\mid$X shift, a key flaw in existing theory.

\begin{definition}[Support]
\label{def:support}
Let $\mu$ be a probability measure on the topological space $(\mathcal X,\tau)$. Its support is defined as:
\[
\operatorname{supp}(\mu) \;=\; \{\,x\in\mathcal X \mid \forall\,U\in\tau,\;x\in U,\;\mu(U)>0\}
\]
\end{definition}
$\operatorname{supp}(\mu)$ is the closure of every region where $\mu$ has positive measure, or equivalently, the complement of the union of all $\mu$-null open sets:
\[
\operatorname{supp}(\mu) \;=\; \mathcal X \setminus \Bigl\{\,\bigcup \{\,U\in\tau \mid \mu(U)=0\}\Bigr\}
\]

\begin{lemma}[Ill-Defined Expectation of Conditional Probability]
\label{lem:ill-defined-expectation}
For probability measure $P_X$, a $P_X$-measurable function $P(A\mid X=x):\mathcal X\to[0,1]$ is the conditional probability of event $A$ given $X=x$. For probability measure $P_X^{\prime}$ satisfying $\operatorname{supp}(P_X^{\prime}) \setminus \operatorname{supp}(P_X) \neq \varnothing$, the expectation $\mathbb E_{P_X^{\prime}}[P(A\mid X=x)]$ is arbitrary.
\end{lemma}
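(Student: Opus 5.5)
The plan is to exploit the non-uniqueness of conditional probability: $P(A\mid X=x)$ is only determined up to $P_X$-null sets, and I will show that $P_X'$ charges a $P_X$-null set. The claim then reads: for any target value $c$ in the admissible range, some version of $P(A\mid X=\cdot)$ gives $\mathbb E_{P_X'}[P(A\mid X=x)]=c$. "Arbitrary" will be made precise as \emph{not determined by $P$}, i.e.\ different versions give different values, ranging over an interval of positive length.

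\textbf{Step 1 (the null set).} Set $N=\mathcal X\setminus\operatorname{supp}(P_X)$. By the equivalent form of Definition~\ref{def:support}, $N$ is the union of all $P_X$-null open sets, so $N$ is open. I need $P_X(N)=0$. This holds when the union can be reduced to a countable one, for example if $\mathcal X$ is second countable (separable metric). Then $N$ is a countable union of null basic open sets, hence null. I will state this standing assumption explicitly; it covers the settings of the paper.

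\textbf{Step 2 (the target measure charges $N$).} Pick $x_0\in\operatorname{supp}(P_X')\setminus\operatorname{supp}(P_X)$. Then $x_0\in N$, and $N$ is an open neighbourhood of $x_0$. By the definition of $\operatorname{supp}(P_X')$, every open neighbourhood of $x_0$ has positive $P_X'$-measure, so $P_X'(N)=:\alpha>0$. This is the only place the hypothesis is used.

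\textbf{Step 3 (modifying the version).} Fix any version $g$ of $P(A\mid X=\cdot)$. For a measurable $\phi:N\to[0,1]$, define
\[
g_\phi(x)=g(x)\mathbf 1_{N^c}(x)+\phi(x)\mathbf 1_N(x).
\]
Since $g_\phi=g$ $P_X$-a.e., $g_\phi$ is again a valid conditional probability: for every measurable $B$, the defining identity $\int_B g_\phi\,dP_X=P(A\cap\{X\in B\})$ still holds. Then
\[
\mathbb E_{P_X'}[g_\phi]=\int_{N^c}g\,dP_X'+\int_N\phi\,dP_X'.
\]
Taking $\phi\equiv t$ with $t\in[0,1]$ makes this value range continuously over $[\beta,\beta+\alpha]$, where $\beta=\int_{N^c}g\,dP_X'$. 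This is an interval of positive length, so the expectation is not determined by $P$ and is arbitrary within that range. In particular, two versions ($t=0$ and $t=1$) give values differing by $\alpha>0$.

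\textbf{Main obstacle.} The delicate point is Step 1, namely that $N$ is genuinely $P_X$-null. For an uncountable union of null open sets this can fail in pathological (non-second-countable) spaces. I would handle it either by the second-countability assumption or by assuming $P_X$ is a Radon (inner regular) measure: then any compact subset of $N$ is covered by finitely many null open sets, so $P_X(N)=\sup_K P_X(K)=0$. Everything else is routine.
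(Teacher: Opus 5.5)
Your proposal is the paper's argument: find a $P_X$-null Borel set that $P_X'$ charges, overwrite a version of $P(A\mid X=\cdot)$ there by a constant $t\in[0,1]$ (which is still a valid version), and observe that the $P_X'$-expectation shifts with $t$. The one divergence is Step~1, and that detour is self-inflicted. You take the whole complement $N=\mathcal X\setminus\operatorname{supp}(P_X)$, whose nullity does require second countability or inner regularity. The paper instead uses the single open neighbourhood $U\ni x_0$ with $P_X(U)=0$, which the definition of support gives directly from $x_0\notin\operatorname{supp}(P_X)$. Your Step~2 applies verbatim to $U$ and gives $P_X'(U)>0$. So your ``main obstacle'' and the extra hypothesis both disappear, and the lemma holds on an arbitrary topological space as stated.
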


\paragraph{Remark} This problem arises because the conditional probability $P(A\mid X=x)$ is unique almost everywhere with respect to $P_X$ (unique $P_X$-a.e.), rather than to $P_X^{\prime}$. When $\operatorname{supp}(P_X^{\prime}) \setminus \operatorname{supp}(P_X) \neq \varnothing$, $P_X^{\prime}$ assigns positive mass to some $P_X$-null sets. In this case, $\mathbb E_{P_X^{\prime}}[P(A\mid X=x)]$ depends on the values of  $P(A\mid X=x)$ on sets where it is not uniquely determined. Hence, support mismatch leads to the ill-defined expectation of conditional probability. This issue directly impacts the definition and computation of Y$\mid$X shift in existing theories, since the Y$\mid$X shift is typically formulated as an expectation over conditional labeling distributions or its collapsed labeling functions.

With deterministic labeling, \citet{zhao2019learning} used $\mathcal H$-divergence \citep{ben2006analysis} to derive an error bound for soft-label binary classification under X and Y$\mid$X shifts:

\begin{theorem}[Existing Learning Bound on Distribution Shift]
\label{thm:existing-bound}
Let $\mathcal D_{X}^{S}, \mathcal D_{X}^{T}$ be the covariate distributions and $f_S,f_T:\mathcal X\to[0,1]$ be the labeling functions for the source and target domain. Using absolute error loss $|\cdot|$, for any hypothesis space $\mathcal H\subseteq[0,1]^{\mathcal X}$ define:
\[
\tilde{\mathcal H}
= \bigl\{\operatorname{sgn}(|h(x)-h^{\prime}(x)| - t)
\;\bigm|\; h,h^{\prime}\in\mathcal H,\; t\in[0,1]\bigr\},
\]
then for any $h\in\mathcal H$,
\begin{align}
\epsilon_T(h)
&\;\le\;
\epsilon_S(h)
+ d_{\tilde{\mathcal H}}\!\bigl(\mathcal D_{X}^{S},\mathcal D_{X}^{T}\bigr)
+ \min\Bigl\{
    \mathbb{E}_{x \sim \mathcal D_{X}^{S}}
      \bigl[\,|f_S(x) \notag \\[2pt]   
&\qquad
    -f_T(x)|\,\bigr],\mathbb{E}_{x \sim \mathcal D_{X}^{T}}
      \bigl[\,|f_S(x)-f_T(x)|\,\bigr]
  \Bigr\} \label{eq:existing-bound}                     
\end{align}
\end{theorem}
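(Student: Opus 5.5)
We will follow the argument of \citet{zhao2019learning}. Write $\epsilon_S(h,f)=\mathbb E_{x\sim\mathcal D_X^S}|h(x)-f(x)|$ and $\epsilon_T(h,f)=\mathbb E_{x\sim\mathcal D_X^T}|h(x)-f(x)|$ for an arbitrary $f:\mathcal X\to[0,1]$. The true errors are then $\epsilon_S(h)=\epsilon_S(h,f_S)$ and $\epsilon_T(h)=\epsilon_T(h,f_T)$. The proof has three steps: a triangle inequality that switches labeling functions within one domain, a change of domain controlled by $d_{\tilde{\mathcal H}}$, and a symmetric second route whose minimum gives the $\min\{\cdot,\cdot\}$ term.

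\textbf{Step 1 (switching labels).} Since $|\cdot|$ obeys the pointwise triangle inequality, for any measure $\mu$ we have
\[
\mathbb E_\mu|h-f_T|\le \mathbb E_\mu|h-f_S|+\mathbb E_\mu|f_S-f_T|.
\]
Taking $\mu=\mathcal D_X^T$ gives the route $\epsilon_T(h)\le \epsilon_T(h,f_S)+\mathbb E_{\mathcal D_X^T}|f_S-f_T|$. Taking $\mu=\mathcal D_X^S$ and arguing on the other side gives $\epsilon_T(h)=\epsilon_T(h,f_T)$ together with $\epsilon_S(h,f_T)\le\epsilon_S(h)+\mathbb E_{\mathcal D_X^S}|f_S-f_T|$.

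\textbf{Step 2 (changing domain, the main obstacle).} Here we must show
\[
|\epsilon_S(h,f)-\epsilon_T(h,f)|\le d_{\tilde{\mathcal H}}(\mathcal D_X^S,\mathcal D_X^T),
\]
and do so in a way that fits the definition of $\tilde{\mathcal H}$. The tool is the layer-cake formula: for $g=|h-h'|\in[0,1]$, $\mathbb E_\mu g=\int_0^1\mu(g>t)\,dt$. Hence
\[
|\mathbb E_{S}g-\mathbb E_T g|\le\sup_t|\mathcal D_X^S(g>t)-\mathcal D_X^T(g>t)|\le d_{\tilde{\mathcal H}}.
\]
The sets $\{g>t\}$ are exactly the positive regions of the functions $\operatorname{sgn}(|h-h'|-t)\in\tilde{\mathcal H}$, and we use the convention $d_{\tilde{\mathcal H}}=\sup_{A}|\mathcal D^S(A)-\mathcal D^T(A)|$ over these sets; if the factor of 2 convention is used, the same bound holds a fortiori.

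The difficulty is that $f_S$ and $f_T$ need not lie in $\mathcal H$. The layer-cake argument only controls $|h-h'|$ for $h,h'\in\mathcal H$, so it does not directly cover $g=|h-f_S|$. Following Zhao et al., we will handle this by the assumption implicit in their statement that the labeling functions are realizable in $\mathcal H$ (equivalently, by taking the class generated with $h'$ ranging over $\mathcal H\cup\{f_S,f_T\}$); we state this explicitly. Applying the layer-cake bound with $h'=f_S$ and $h'=f_T$ then yields
\[
\epsilon_T(h,f_S)\le\epsilon_S(h)+d_{\tilde{\mathcal H}}
\quad\text{and}\quad
\epsilon_T(h)\le\epsilon_S(h,f_T)+d_{\tilde{\mathcal H}}.
\]

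\textbf{Step 3 (combining).} The first route chains Step 1 on the target with the first Step 2 bound, giving
\[
\epsilon_T(h)\le\epsilon_S(h)+d_{\tilde{\mathcal H}}+\mathbb E_{\mathcal D_X^T}|f_S-f_T|.
\]
The second route chains the second Step 2 bound with the source-side inequality of Step 1, giving the same bound with $\mathbb E_{\mathcal D_X^S}|f_S-f_T|$ in place of $\mathbb E_{\mathcal D_X^T}|f_S-f_T|$. Both bounds hold simultaneously, so we may take the minimum of the two expectation terms, which is \eqref{eq:existing-bound}.

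The measurability needed for the layer-cake formula is routine. The only genuinely delicate point is Step 2's membership of $f_S$ and $f_T$ in the class defining $\tilde{\mathcal H}$.
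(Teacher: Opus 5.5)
The paper gives no proof of this theorem; it imports the result from \citet{zhao2019learning}. Your argument is exactly the one given there: the pointwise triangle inequality to swap $f_S$ and $f_T$, then the layer-cake estimate $\bigl|\mathbb E_{\mathcal D_X^S}g-\mathbb E_{\mathcal D_X^T}g\bigr|\le\sup_{t\in[0,1]}\bigl|\mathcal D_X^S(g>t)-\mathcal D_X^T(g>t)\bigr|\le d_{\tilde{\mathcal H}}$ for $g=|h-h'|$, and finally the two routes combined by a minimum. Given the extra hypothesis you introduce, each step is sound.

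What you should change is how you present that hypothesis. The requirement that $f_S,f_T$ enter the class generating $\tilde{\mathcal H}$ is not implicit in the statement, and it is not merely a limitation of the layer-cake technique. Without it the displayed inequality is false. Here is a counterexample:
\begin{itemize}
\item Take $\mathcal X=\{a,b\}$ and $\mathcal H=\{h\}$ with $h\equiv 0$.
\item Take $f_S=f_T=\mathbf 1_{\{b\}}$, $\mathcal D_X^S=\delta_a$ and $\mathcal D_X^T=\delta_b$.
\item Every element $\operatorname{sgn}(-t)$ of $\tilde{\mathcal H}$ is constant, so $d_{\tilde{\mathcal H}}(\mathcal D_X^S,\mathcal D_X^T)=0$.
\item Both expectation terms vanish and $\epsilon_S(h)=0$, yet $\epsilon_T(h)=1$.
\end{itemize}
So you are proving a corrected statement, with $f_S,f_T\in\mathcal H$ added as a hypothesis. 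That is the best one can do, and you should say so explicitly rather than attribute the assumption to the original statement.

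Your two fixes are also not ``equivalent.'' Assuming $f_S,f_T\in\mathcal H$ keeps $\tilde{\mathcal H}$ as defined. Enlarging the generating class instead changes, and may increase, the divergence term. Either one suffices. For a fixed $h$, it is enough to add the functions $\operatorname{sgn}(|h-f_S|-t)$ and $\operatorname{sgn}(|h-f_T|-t)$, $t\in[0,1]$, since those are the only threshold sets your two routes use.
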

Here, $d_{\tilde{\mathcal H}}$ measures X shift, and the two expectations $\mathbb E_{x\sim\mathcal D_{X}^{S}}[|f_S(x)-f_T(x)|]$, $\mathbb E_{x\sim\mathcal D_{X}^{T}}[|f_S(x)-f_T(x)|]$ are both Y$\mid$X shift but measured with the source or target covariate distribution, with the smaller one taken in the bound.

\paragraph{Remark* (Limitations of Theorem~\ref{thm:existing-bound})} \label{remark:weakness}
First, it is highly specialized, applying only to deterministic labeling, binary classification, and absolute loss. Moreover, when X shift causes support mismatch : $\operatorname{supp}(\mathcal D_{X}^{T})\setminus\operatorname{supp}(\mathcal D_{X}^{S})\neq\varnothing$, the conditional distribution $\mathcal D_{Y|X=x}^{S}$ (i.e., $f_S(x)$) is only unique $\mathcal D_{X}^{S}$-a.e. and arbitrary on the mismatched region, so the Y$\mid$X shift $\mathbb E_{x\sim\mathcal D_{X}^{T}}[|f_S(x)-f_T(x)|]$ is ill-defined. Similarly, if $\operatorname{supp}(\mathcal D_{X}^{S})\setminus\operatorname{supp}(\mathcal D_{X}^{T})\neq\varnothing$, the other Y$\mid$X shift term is also ill-defined. This additionally causes two problems:

\begin{itemize}[leftmargin=1.6em, labelsep=0.5em]  
  \item \textbf{Loose bound.} The ill-defined $Y|X$ shifts can take arbitrary values,
        making the bound in Eq.~\eqref{eq:existing-bound} loose.
  \item \textbf{Non-estimable.} When the real concept $\mathcal D_{Y|X=x}^{S}$ or $f_S(x)$ is unknown, we cannot sample it outside 
        $\operatorname{supp}(\mathcal D_{X}^{S})$;
        hence, under support mismatch the expectations
        $\mathbb E_{x\sim\mathcal D_X^{T}}\!\bigl[|f_S(x)-f_T(x)|\bigr]$
        is non-estimable, as well as $\mathbb E_{x\sim\mathcal D_X^{S}}\!\bigl[|f_S(x)-f_T(x)|\bigr]$.
\end{itemize}

\subsection{Entropic Optimal Transport}

We introduce entropic optimal transport, which we will employ to give the general definitions of X and Y$\mid$X shifts properly. It measures the distance between two probability distributions, augmenting conventional optimal transport with a relative-entropy regularizer. Given two probability distributions $\mathbb P$ and $\mathbb Q$ on the metric space $(\Omega,\rho)$ and a parameter $\beta\ge0$, the order-1 entropic optimal transport is:
\begin{align}
W_{\beta}(\mathbb P,\mathbb Q)
&=
\inf_{\gamma\in\Gamma(\mathbb P,\mathbb Q)}
\biggl\{
    \int \rho\,d\gamma +\beta\,H\bigl(\gamma \bigm| \mathbb P\otimes\mathbb Q\bigr)
\biggr\} \label{eq:entropic-OT}
\end{align}

where $
H\!\bigl(\gamma \bigm| \mathbb P\otimes\mathbb Q\bigr)\!=\!
\int \log\Bigl(\tfrac{d\gamma(x_1,x_2)}{d\mathbb P(x_1)\,d\mathbb Q(x_2)}\Bigr)\,d\gamma(x_1,x_2)$.
Here, $\rho(x_1,x_2)$ is the cost of transporting mass between points, $\Gamma(\mathbb P,\mathbb Q)$ is the set of joint distributions with marginals $\mathbb P$ and $\mathbb Q$, and $\gamma(x_1,x_2)$ is a transport coupling. Hence, $W_\beta(\mathbb P,\mathbb Q)$ is the minimum total transport cost between $\mathbb P$ and $\mathbb Q$ under an entropy regularizer. When $\beta=0$, this reduces to the Wasserstein-1 distance, denoted by $W_{1}(\mathbb P,\mathbb Q)$. Compared with other distribution distances, (entropic) optimal transport has a well-established geometric meaning \citep{gangbo1996geometry}, and we will consistently use it to measure distribution shifts.

\section{Theoretical Results}
\label{sec:theory}

In this section, we focus on population-level theoretical results. We assume that covariate $X$ and label $Y$ are distributed on the metric spaces $(\mathcal X,\rho_{\mathcal X})$ and $(\mathcal Y,\rho_{\mathcal Y})$, respectively, and give the rigorous and general definitions of their distribution shifts and essential theorems below.

\subsection{General X and Y$\mid$X Shifts}

\begin{definition}[X Shift]
\label{def:x-shift}
Using entropic optimal transport, the X (covariate) shift is defined as:
\begin{align}
S_{Cov}
&=
W_{\beta}\!\bigl(\mathcal D_{X}^{S},\mathcal D_{X}^{T}\bigr)
=
\inf_{\gamma\in\Gamma(\mathcal D_{X}^{S},\mathcal D_{X}^{T})}
\Bigl\{
  \int \rho_{\mathcal X}(x_S,x_T)\, \notag \\[4pt]
&\qquad
  d\gamma(x_S,x_T)
  + \beta\,H\bigl(\gamma \bigm|
     \mathcal D_{X}^{S}\otimes\mathcal D_{X}^{T}\bigr)
\Bigr\} \label{eq:cov-shift}
\end{align}
where the optimal coupling is denoted as $\gamma^{*}$, a joint distribution of $\mathcal D_{X}^{S}$, $\mathcal D_{X}^{T}$ that gives the minimum transport cost.
\end{definition}

With realistic stochastic labeling, the label follows a conditional distribution given the covariate; we give a general and rigorous definition of Y$\mid$X shift below:

\begin{definition}[$\gamma^{*}$-Y$\mid$X Shift]
\label{def:total-pair-yx-shift}
Let $\gamma^{*}$ be the optimal transport coupling as in Definition~\ref{def:x-shift}, $S_{pair}(x_S, x_T)
= W_{1}\bigl(\mathcal D_{Y|X=x_S}^{S}, \mathcal D_{Y|X=x_T}^{T}\bigr)$, then the $\gamma^{*}$-Y$\mid$X shift is defined as the expectation of $S_{pair}(x_S, x_T)$ under $\gamma^{*}$:
\begin{align}
S_{Cpt}^{\gamma^{*}}
&=
\mathbb E_{(x_S, x_T)\sim \gamma^{*}}\bigl[S_{pair}(x_S, x_T)\bigr] \notag \\[3pt]
&=
\int W_{1}\bigl(\mathcal D_{Y|X=x_S}^{S}, \mathcal D_{Y|X=x_T}^{T}\bigr)\,
   d\gamma^{*}(x_S, x_T)
\label{eq:cpt-gamma}
\end{align}
\end{definition}

This definition is based on the optimal transport coupling $\gamma^{*}$ for X shift. $S_{pair}(x_S, x_T)$ denotes the paired conditional distribution shift between the source domain at $x_S$ and the target domain at $x_T$. Intuitively, since optimal transport coupling $\gamma^{*}$ places most of its mass on nearby pairs $(x_S, x_T)$, $S_{Cpt}^{\gamma^{*}}$ is evaluated mainly on such neighboring points.

With deterministic labeling, this definition reduces to:

\begin{corollary}[Consistency under Deterministic Labeling]
\label{cor:consistency-under-deterministic-labeling}
Assume deterministic labeling $\mathcal D_{Y|X=x}^{S} = \delta_{f_S(x)}$, $\mathcal D_{Y|X=x}^{T} = \delta_{f_T(x)}$, then:
\begin{align}
S_{Cpt}^{\gamma^{*}}=
\mathbb E_{\gamma^{*}}\bigl[S_{pair}(x_S, x_T)\bigr]
=\mathbb E_{\gamma^{*}}\bigl[\rho_{\mathcal Y}(f_S(x_S), f_T(x_T))\bigr] \notag
\end{align}
\end{corollary}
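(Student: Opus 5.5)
The plan is to substitute the Dirac conditionals into the definition of $S_{pair}$ in Definition~\ref{def:total-pair-yx-shift} and evaluate the Wasserstein-1 distance between two point masses directly. The corollary then follows by integrating the resulting pointwise identity against $\gamma^{*}$.

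\textbf{Step 1 (pointwise identity).} Fix a pair $(x_S,x_T)$ and write $a=f_S(x_S)$, $b=f_T(x_T)$. We claim that $W_{1}(\delta_a,\delta_b)=\rho_{\mathcal Y}(a,b)$. To see this, identify the couplings in $\Gamma(\delta_a,\delta_b)$.
\begin{itemize}
\item Any coupling $\pi$ satisfies $\pi\bigl((\mathcal Y\setminus\{a\})\times\mathcal Y\bigr)=\delta_a(\mathcal Y\setminus\{a\})=0$.
\item Likewise $\pi\bigl(\mathcal Y\times(\mathcal Y\setminus\{b\})\bigr)=0$.
\item Hence $\pi$ gives full mass to the singleton $\{(a,b)\}$, so $\pi=\delta_{(a,b)}$.
\end{itemize}
Singletons are closed in a metric space, hence Borel, so these sets are measurable. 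The coupling set is therefore the single element $\delta_{(a,b)}$. The infimum in the $\beta=0$ case of Eq.~\eqref{eq:entropic-OT} is then attained there and equals $\int\rho_{\mathcal Y}\,d\delta_{(a,b)}=\rho_{\mathcal Y}(a,b)$.

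\textbf{Step 2 (integration).} By Step 1, $S_{pair}(x_S,x_T)=\rho_{\mathcal Y}(f_S(x_S),f_T(x_T))$ for every pair $(x_S,x_T)$, not merely $\gamma^{*}$-a.e. Integrating against $\gamma^{*}$ in Eq.~\eqref{eq:cpt-gamma} gives the stated identity, with both sides taking the same value in $[0,\infty]$.

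\textbf{Main obstacle: measurability.} The integrand $(x_S,x_T)\mapsto\rho_{\mathcal Y}(f_S(x_S),f_T(x_T))$ must be measurable for the expectation to be well defined. I would assume $f_S,f_T$ are Borel measurable, which is implicit in calling them labeling functions. The integrand is then a composition of the continuous map $\rho_{\mathcal Y}$ with the measurable map $(f_S,f_T)$, so it is measurable, assuming $\mathcal Y$ is separable so that the product Borel $\sigma$-algebra on $\mathcal Y\times\mathcal Y$ is the Borel $\sigma$-algebra of the product. The same reasoning covers measurability of $S_{pair}$ in general, since here it coincides with this integrand.

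Finally, I would remark that Lemma~\ref{lem:ill-defined-expectation} does not bite here. The values of $f_S$ and $f_T$ are only needed $\gamma^{*}$-a.e. Because $\gamma^{*}$ has marginals $\mathcal D_X^S$ and $\mathcal D_X^T$, a $\mathcal D_X^S$-null modification of $f_S$ changes the integrand only on a $\gamma^{*}$-null set, and the same holds for a $\mathcal D_X^T$-null modification of $f_T$. The right-hand side is therefore well defined.
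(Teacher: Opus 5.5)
Your proof is correct and follows the paper's argument: the paper likewise shows that $\delta_{(a,b)}$ is the only coupling in $\Gamma(\delta_a,\delta_b)$, so $W_1(\delta_a,\delta_b)=\rho_{\mathcal Y}(a,b)$, and then integrates this pointwise identity against $\gamma^{*}$. Your measurability and null-set remarks are extra care the paper omits here (the null-set point is essentially the argument of Theorem~\ref{thm:uniqueness-total-pair}), but they do not change the route.
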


The following three lemmas guarantee the good properties of $\gamma^{*}$-Y$\mid$X Shift.

\begin{lemma}[Support of $\gamma^{*}$]
\label{lem:support-of-gamma}
For any $\gamma\in\Gamma(\mathcal D_X^{S},\mathcal D_X^{T})$, $\operatorname{supp}(\gamma) \subseteq \operatorname{supp}(\mathcal D_{X}^{S})\times\operatorname{supp}(\mathcal D_{X}^{T})$.
\end{lemma}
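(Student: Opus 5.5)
We argue via the complement characterization of the support in Definition~\ref{def:support}: a point lies outside $\operatorname{supp}(\gamma)$ as soon as it has an open neighbourhood of $\gamma$-measure zero. So we take an arbitrary point $(x_S,x_T)\notin \operatorname{supp}(\mathcal D_X^{S})\times\operatorname{supp}(\mathcal D_X^{T})$ and exhibit such a neighbourhood. The product topology on $\mathcal X\times\mathcal X$ is the one induced by the metric $\rho_{\mathcal X}$ on each factor. We use only the marginal constraints of $\gamma\in\Gamma(\mathcal D_X^{S},\mathcal D_X^{T})$, so the statement holds for every coupling, and in particular for the entropic optimal coupling $\gamma^{*}$ of Definition~\ref{def:x-shift}.

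Since $(x_S,x_T)$ is not in the product, either $x_S\notin\operatorname{supp}(\mathcal D_X^{S})$ or $x_T\notin\operatorname{supp}(\mathcal D_X^{T})$. Consider the first case. By Definition~\ref{def:support} there is an open set $U\ni x_S$ with $\mathcal D_X^{S}(U)=0$. Then $U\times\mathcal X$ is open in the product topology and contains $(x_S,x_T)$. By the first marginal constraint,
\[
\gamma(U\times\mathcal X)=\mathcal D_X^{S}(U)=0 .
\]
Hence $(x_S,x_T)\notin\operatorname{supp}(\gamma)$. The second case is symmetric: take an open $V\ni x_T$ with $\mathcal D_X^{T}(V)=0$ and use the open set $\mathcal X\times V$ together with the second marginal constraint. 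Taking complements gives $\operatorname{supp}(\gamma)\subseteq\operatorname{supp}(\mathcal D_X^{S})\times\operatorname{supp}(\mathcal D_X^{T})$.

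There is no real obstacle; the only points to check are measure-theoretic. We need the cylinder sets $U\times\mathcal X$ and $\mathcal X\times V$ to be Borel in the product, which holds because they are open. The marginal identities $\gamma(A\times\mathcal X)=\mathcal D_X^{S}(A)$ and $\gamma(\mathcal X\times B)=\mathcal D_X^{T}(B)$ are exactly the definition of $\Gamma(\mathcal D_X^{S},\mathcal D_X^{T})$. Notably, the argument does not need separability, and so does not rely on the fact that in separable spaces the support carries full measure.
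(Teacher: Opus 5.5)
Your proof is correct and follows essentially the same route as the paper's: pick an open $U\ni x_S$ (or $V\ni x_T$) of marginal measure zero and use the marginal constraint to get a $\gamma$-null open neighbourhood of $(x_S,x_T)$. The only cosmetic difference is that the paper uses the box $U\times V$ and the monotonicity bound $\gamma(U\times V)\le\gamma(U\times\mathcal X)=0$, whereas you take the cylinder $U\times\mathcal X$ directly.
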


\begin{lemma}[Uniqueness of $\gamma^{*}$]
\label{lem:uniqueness-of-gamma}
If $\beta>0$, then the entropic optimal transport coupling $\gamma^{*}$ in Definition~\ref{def:x-shift} is unique.
\end{lemma}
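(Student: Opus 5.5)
The plan is to prove uniqueness by strict convexity of the entropic objective on the convex set $\Gamma(\mathcal D_X^S,\mathcal D_X^T)$, restricted to couplings where the objective is finite.

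Write $\mathbb P=\mathcal D_X^S$, $\mathbb Q=\mathcal D_X^T$ and
\[
F(\gamma)=\int \rho_{\mathcal X}\,d\gamma+\beta H(\gamma\mid\mathbb P\otimes\mathbb Q).
\]
First I handle finiteness. The product coupling $\mathbb P\otimes\mathbb Q$ has zero relative entropy. If the cost has finite first moments, then $F(\mathbb P\otimes\mathbb Q)<\infty$, so the infimum is finite. Any minimizer $\gamma^*$ must therefore satisfy $H(\gamma^*\mid\mathbb P\otimes\mathbb Q)<\infty$. In particular $\gamma^*\ll\mathbb P\otimes\mathbb Q$, with density $p^*=d\gamma^*/d(\mathbb P\otimes\mathbb Q)$.

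Next I argue by contradiction. Suppose $\gamma_1\neq\gamma_2$ are both minimizers, with densities $p_1,p_2$ with respect to $\mathbb P\otimes\mathbb Q$. Then $p_1\neq p_2$ on a set of positive $\mathbb P\otimes\mathbb Q$-measure. Let $\bar\gamma=\tfrac12(\gamma_1+\gamma_2)$, which has density $\bar p=\tfrac12(p_1+p_2)$. It lies in $\Gamma(\mathbb P,\mathbb Q)$ because the marginal constraints are linear. The transport term $\int\rho_{\mathcal X}\,d\gamma$ is linear in $\gamma$.

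For the entropy term, $H(\gamma\mid\mathbb P\otimes\mathbb Q)=\int \phi(p)\,d(\mathbb P\otimes\mathbb Q)$ with $\phi(t)=t\log t$. Since $\phi$ is strictly convex on $[0,\infty)$,
\[
\phi(\bar p)\le\tfrac12\phi(p_1)+\tfrac12\phi(p_2)
\]
pointwise, with strict inequality exactly where $p_1\neq p_2$. Integrating over a set of positive measure gives
\[
H(\bar\gamma\mid\mathbb P\otimes\mathbb Q)<\tfrac12 H(\gamma_1\mid\mathbb P\otimes\mathbb Q)+\tfrac12 H(\gamma_2\mid\mathbb P\otimes\mathbb Q).
\]
Multiplying by $\beta>0$ and adding the linear transport term yields $F(\bar\gamma)<\tfrac12F(\gamma_1)+\tfrac12F(\gamma_2)=\inf F$. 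This contradicts minimality, so the minimizer is unique.

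The main obstacle is integrability bookkeeping rather than convexity. I must ensure that all terms are finite, so that no $\infty-\infty$ arises, and that the integral inequality stays strict. The strictness follows from integrating a nonnegative function that is positive on a set of positive measure. Finiteness uses the bound $\phi\ge -1/e$ together with the finiteness of $F(\gamma_i)$. I will state the standing assumption that $\int\rho_{\mathcal X}\,d(\mathbb P\otimes\mathbb Q)<\infty$, for example finite first moments, so that $S_{Cov}<\infty$. Existence of $\gamma^*$ is presupposed in Definition~\ref{def:x-shift}.
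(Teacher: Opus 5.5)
Your proposal is correct and follows the same route as the paper's proof: the transport term is linear, the relative-entropy term is strictly convex because $t\log t$ is, and averaging two distinct minimizers gives a contradiction. Your bookkeeping is actually tighter than the paper's. You use finite first moments to make the infimum finite, and finite entropy to force $\gamma^*\ll\mathcal D_X^S\otimes\mathcal D_X^T$. The paper instead appeals to the support lemma, which by itself does not yield absolute continuity.
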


\begin{lemma}[Collapse of $\gamma^{*}$]
\label{lem:collapse-of-gamma}
If $\beta=0$ and $\mathcal D_{X}^{S} = \mathcal D_{X}^{T}$, $\gamma^{*}$ collapses to diagonal (identity)
coupling $\gamma^{*}=(\mathrm{Id},\mathrm{Id})_{\#}\mathcal D_{X}^{S}$, equivalently,
for every measurable $A\subseteq\mathcal X\times\mathcal X$,
$\gamma^{*}(A)=\int \mathbf{1}_{\{(x,x)\in A\}}\,d\mathcal D_{X}^{S}(x)$, where $\mathbf{1}_{\{\cdot\}}$ is the indicator function.
\end{lemma}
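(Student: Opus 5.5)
The plan is to show that when $\beta=0$ and $\mathcal D_X^S=\mathcal D_X^T=:\mu$, the diagonal coupling $\gamma_0=(\mathrm{Id},\mathrm{Id})_{\#}\mu$ is admissible and attains cost zero. We then show that any coupling with zero cost must equal $\gamma_0$. For this we use that $\rho_{\mathcal X}$ is a metric, so $\rho_{\mathcal X}\ge 0$ with equality exactly on the diagonal $\Delta=\{(x,x):x\in\mathcal X\}$.

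\textbf{Step 1: admissibility and zero cost.} For measurable $A\subseteq\mathcal X$ we have $\gamma_0(A\times\mathcal X)=\mu(\{x:x\in A\})=\mu(A)$, and likewise for the second marginal. Hence $\gamma_0\in\Gamma(\mu,\mu)$. Moreover $\int\rho_{\mathcal X}\,d\gamma_0=\int\rho_{\mathcal X}(x,x)\,d\mu(x)=0$. Since the objective with $\beta=0$ is $\int\rho_{\mathcal X}\,d\gamma\ge0$ for every coupling, this gives $W_0(\mu,\mu)=0$, and $\gamma_0$ is optimal.

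\textbf{Step 2: every optimal coupling is diagonal.} Let $\gamma^*$ be any optimal coupling. Then $\int\rho_{\mathcal X}\,d\gamma^*=0$ with a nonnegative integrand, so $\rho_{\mathcal X}=0$ holds $\gamma^*$-a.e. This means $\gamma^*(\Delta^c)=0$. Here $\Delta$ is closed, hence Borel, because $\rho_{\mathcal X}$ is continuous, so $\Delta=\rho_{\mathcal X}^{-1}(\{0\})$.

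Now fix a measurable $A\subseteq\mathcal X\times\mathcal X$ and let $B=\{x:(x,x)\in A\}$. The set $B$ is measurable because it is the preimage of $A$ under the measurable map $x\mapsto(x,x)$. Since $\gamma^*$ is concentrated on $\Delta$,
\[
\gamma^*(A)=\gamma^*(A\cap\Delta)=\gamma^*\bigl((B\times\mathcal X)\cap\Delta\bigr)=\gamma^*(B\times\mathcal X)=\mu(B).
\]
The middle equality holds because $A\cap\Delta=\{(x,x):x\in B\}=(B\times\mathcal X)\cap\Delta$. The final quantity is $\mu(B)=\int\mathbf 1_{\{(x,x)\in A\}}\,d\mu(x)=\gamma_0(A)$, which is exactly the claimed formula. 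So $\gamma^*=\gamma_0$, and this also shows the $\beta=0$ optimizer is unique in this case, even though Lemma~\ref{lem:uniqueness-of-gamma} does not apply.

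\textbf{Main obstacle.} The only delicate point is measure-theoretic. We need $\Delta$ to be measurable in the product $\sigma$-algebra, so that "$\gamma^*$ concentrated on $\Delta$" makes sense and the set identities above are legitimate. For a separable metric space the Borel $\sigma$-algebra of $\mathcal X\times\mathcal X$ coincides with the product $\sigma$-algebra, and closedness of $\Delta$ settles the issue. I would state separability (Polish $\mathcal X$, as is standard in optimal transport) as the standing assumption. Without it, one can instead work directly with the Borel $\sigma$-algebra of the product on which $\gamma^*$ is defined.
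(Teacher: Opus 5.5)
Your proof is correct and follows the same route as the paper. Both arguments exhibit the diagonal coupling with zero cost, conclude that any optimal coupling has zero cost and is therefore concentrated on the diagonal, and then identify it through its first marginal. Your set identity $A\cap\Delta=(B\times\mathcal X)\cap\Delta$ with $B=\{x:(x,x)\in A\}$ spells out the paper's terse final step $\gamma^{*}(A\cap\zeta)=\int\mathbf{1}_{\{(x,x)\in A\}}\,d\mathcal D_X^{S}(x)$.
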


Combining Lemmas~\ref{lem:support-of-gamma} and ~\ref{lem:uniqueness-of-gamma}, the rigor of $\gamma^{*}$-Y$\mid$X Shift is ensured:

\begin{theorem}[Well-Definedness of $\gamma^{*}$-Y$\mid$X Shift]
\label{thm:uniqueness-total-pair}
For $\beta>0$, the $\gamma^{*}$-Y$\mid$X shift $S_{Cpt}^{\gamma^{*}}$ in Definition~\ref{def:total-pair-yx-shift} is unique even when $\operatorname{supp}(\mathcal D_{X}^{S}) \neq \operatorname{supp}(\mathcal D_{X}^{T})$.
\end{theorem}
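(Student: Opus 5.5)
The plan is to show that the integral in Eq.~\eqref{eq:cpt-gamma} depends neither on the choice of coupling nor on the choice of versions of the conditional distributions $\mathcal D^{S}_{Y|X=x}$ and $\mathcal D^{T}_{Y|X=x}$. These are the two possible sources of non-uniqueness.

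\textbf{Step 1: uniqueness of the coupling.} By Lemma~\ref{lem:uniqueness-of-gamma}, since $\beta>0$ the optimal coupling $\gamma^{*}$ is unique. The integrating measure is therefore determined by $\mathcal D_X^S$ and $\mathcal D_X^T$ alone.

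\textbf{Step 2: independence from the versions of the conditionals.} Let $\kappa^S,\tilde\kappa^S$ be two versions of $x\mapsto\mathcal D^S_{Y|X=x}$. They agree outside a $\mathcal D_X^S$-null set $N_S$. Similarly, let $\kappa^T,\tilde\kappa^T$ be two versions of $x\mapsto\mathcal D^T_{Y|X=x}$; they agree outside a $\mathcal D_X^T$-null set $N_T$. The two resulting integrands $S_{pair}$ and $\tilde S_{pair}$ then coincide outside $(N_S\times\mathcal X)\cup(\mathcal X\times N_T)$.

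The key observation is that, because $\gamma^{*}$ has marginals $\mathcal D_X^S$ and $\mathcal D_X^T$,
\[
\gamma^{*}(N_S\times\mathcal X)=\mathcal D_X^S(N_S)=0,\qquad \gamma^{*}(\mathcal X\times N_T)=\mathcal D_X^T(N_T)=0 .
\]
Hence $S_{pair}=\tilde S_{pair}$ holds $\gamma^{*}$-a.e., and the two integrals agree.

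Lemma~\ref{lem:support-of-gamma} supplies the geometric picture behind this. Since $\gamma^{*}$ lives on $\operatorname{supp}(\mathcal D_X^S)\times\operatorname{supp}(\mathcal D_X^T)$, each pair $(x_S,x_T)$ evaluates the source conditional only where it is determined, and likewise for the target conditional. This holds regardless of whether the two supports coincide. This is exactly the contrast with Lemma~\ref{lem:ill-defined-expectation}, where $P_X'$ could charge $P_X$-null sets.

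\textbf{Step 3 (main obstacle): measurability.} I expect the only delicate point to be showing that $(x_S,x_T)\mapsto W_1(\kappa^S(x_S),\kappa^T(x_T))$ is measurable, so that the integral makes sense in $[0,\infty]$. The intended route is the following chain of facts, all valid on a Polish label space $\mathcal Y$:
\begin{itemize}[leftmargin=1.6em, labelsep=0.5em]
  \item A regular conditional distribution is a measurable map into the space $\mathcal P(\mathcal Y)$ of probability measures, equipped with the weak topology.
  \item By Kantorovich--Rubinstein duality, $W_1$ is a supremum of linear functionals of the measures.
  \item Hence $W_1$ is lower semicontinuous on $\mathcal P(\mathcal Y)\times\mathcal P(\mathcal Y)$, and therefore Borel.
\end{itemize}
Composing this Borel map with the measurable pair of kernels gives a measurable integrand.

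Nonnegativity of $W_1$ means the integral is always defined, possibly as $+\infty$. Combining Steps 1--3 then shows that $S^{\gamma^{*}}_{Cpt}$ is unique even when $\operatorname{supp}(\mathcal D_X^S)\neq\operatorname{supp}(\mathcal D_X^T)$.
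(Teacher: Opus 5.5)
Your proposal is correct and follows essentially the paper's own proof: uniqueness of $\gamma^{*}$ comes from Lemma~\ref{lem:uniqueness-of-gamma}, and the two versions of the conditionals differ only on $(N_S\times\mathcal X)\cup(\mathcal X\times N_T)$, which is $\gamma^{*}$-null because the marginals of $\gamma^{*}$ are $\mathcal D_X^S$ and $\mathcal D_X^T$; in both arguments the support lemma serves only as intuition. Your Step~3 on measurability of the integrand is a worthwhile addition the paper omits, but it needs one adjustment: take the Kantorovich--Rubinstein supremum over \emph{bounded} 1-Lipschitz test functions (or simply cite weak lower semicontinuity of the transport cost), because $\mu\mapsto\int f\,d\mu$ is not weakly continuous for unbounded Lipschitz $f$, so lower semicontinuity does not follow as you wrote it.
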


\paragraph{Remark} Such $\gamma^{*}$-Y$\mid$X shift not only avoids the ill-definition encountered by existing theory, but also makes Y$\mid$X shift tight and estimable. On the other hand, combining Corollary~\ref{cor:consistency-under-deterministic-labeling} and Lemma~\ref{lem:collapse-of-gamma}, the relationship between $\gamma^{*}$-Y$\mid$X shift and existing Y$\mid$X shift is shown as follows:

\begin{proposition}[Relationship to existing Y$\mid$X shift]
\label{pro:relationship-to-existing-yx-shift}
Assume deterministic labeling $\mathcal D_{Y|X=x}^{S} = \delta_{f_S(x)}$, $\mathcal D_{Y|X=x}^{T} = \delta_{f_T(x)}$ and $\mathcal Y\subset\mathbb R$ with $\rho_{\mathcal Y} = |\cdot|$, if $\beta=0$ and $\mathcal D_{X}^{S} = \mathcal D_{X}^{T}$, then:
\begin{align}
S_{Cpt}^{\gamma^{*}}
&=
\mathbb E_{x\sim\mathcal D_X^{S}}\!\bigl[|f_S(x)-f_T(x)|\bigr] \notag \\[3pt]
&=
\mathbb E_{x\sim\mathcal D_X^{T}}\!\bigl[|f_S(x)-f_T(x)|\bigr] \notag
\label{eq:cpt-equality}
\end{align}
\end{proposition}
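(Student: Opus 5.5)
The plan is to chain Corollary~\ref{cor:consistency-under-deterministic-labeling} with Lemma~\ref{lem:collapse-of-gamma} and then push the resulting integral forward along the diagonal map. First, under deterministic labeling, Corollary~\ref{cor:consistency-under-deterministic-labeling} rewrites the shift as
\[
S_{Cpt}^{\gamma^{*}}=\mathbb E_{(x_S,x_T)\sim\gamma^{*}}\bigl[|f_S(x_S)-f_T(x_T)|\bigr].
\]
This uses $\rho_{\mathcal Y}=|\cdot|$ and the elementary fact $W_1(\delta_a,\delta_b)=|a-b|$: the only coupling of two Dirac masses is $\delta_{(a,b)}$.

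Next, since $\beta=0$ and $\mathcal D_X^S=\mathcal D_X^T$, Lemma~\ref{lem:collapse-of-gamma} gives $\gamma^{*}=(\mathrm{Id},\mathrm{Id})_{\#}\mathcal D_X^S$. Applying the change-of-variables formula for pushforward measures to the nonnegative measurable function $g(x_S,x_T)=|f_S(x_S)-f_T(x_T)|$, we get
\[
\int g\,d\gamma^{*}=\int g(x,x)\,d\mathcal D_X^S(x)=\mathbb E_{x\sim\mathcal D_X^S}\bigl[|f_S(x)-f_T(x)|\bigr].
\]
This can be checked first on indicators via the formula $\gamma^{*}(A)=\int\mathbf 1_{\{(x,x)\in A\}}\,d\mathcal D_X^S$ stated in the lemma, and then extended by the standard simple-function and monotone-convergence argument. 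Finally, because $\mathcal D_X^S=\mathcal D_X^T$, the expectation under $\mathcal D_X^T$ is literally the same integral, which yields the second equality.

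The main subtlety is not computational but one of well-definedness and measurability. With $\beta=0$, Lemma~\ref{lem:uniqueness-of-gamma} does not apply, so we should point out that the diagonal coupling is the optimal plan singled out by Lemma~\ref{lem:collapse-of-gamma}. Indeed, it has cost $0$, which is the minimum, and any zero-cost plan is concentrated on $\{x_S=x_T\}$ and hence equals the diagonal coupling. So $\gamma^{*}$ is unambiguous in this case.

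Because the two marginals coincide, the functions $f_S$ and $f_T$ are determined $\mathcal D_X^S$-a.e., and all three expressions depend only on these a.e.\ classes. The support-mismatch issue of Lemma~\ref{lem:ill-defined-expectation} therefore does not arise here.

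Measurability of $g$ follows from the measurability of $f_S$ and $f_T$ together with continuity of $|\cdot|$. Nonnegativity of $g$ ensures that the integrals exist, possibly taking the value $+\infty$, and agree.
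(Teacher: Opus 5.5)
Your proposal is correct and follows essentially the same route as the paper. Both arguments apply Corollary~\ref{cor:consistency-under-deterministic-labeling} to reduce $S_{Cpt}^{\gamma^{*}}$ to $\mathbb E_{\gamma^{*}}\bigl[|f_S(x_S)-f_T(x_T)|\bigr]$, substitute the diagonal coupling from Lemma~\ref{lem:collapse-of-gamma}, and use $\mathcal D_X^{S}=\mathcal D_X^{T}$ for the second equality; your remarks on the pushforward change of variables and on why $\gamma^{*}$ is unambiguous at $\beta=0$ make explicit what the paper leaves implicit.
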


\paragraph{Remark} The above $\gamma^{*}$-Y$\mid$X shift, defined via the entropic optimal transport coupling $\gamma^{*}$, applies to general settings including stochastic labeling. When $\mathcal D_{X}^{S}=\mathcal D_{X}^{T}$, it recovers the existing Y$\mid$X shift; and when the supports of $\mathcal D_{X}^{S}$ and $\mathcal D_{X}^{T}$ are mismatched, it still remains rigorous.

\subsection{General Learning Bound}

We now give a new cross-domain learning error bound based on X and $\gamma^{*}$-Y$\mid$X shift. To be general, the output space of the learner is a metric space $(\mathcal Y^{\prime}, \rho_{\mathcal Y}^{\prime})$, possibly different from the true label space $(\mathcal Y, \rho_{\mathcal Y})$. For the loss function $\ell:\mathcal Y\times\mathcal Y^{\prime}\to\mathbb R$, we require the following basic assumption from them:

\begin{assumption}[Separately Lipschitz Continuity]
\label{ass:separately-lipschitz}
For metric spaces $(\mathcal Y, \rho_{\mathcal Y})$ and $(\mathcal Y^{\prime}, \rho_{\mathcal Y}^{\prime})$, a function $\ell:\mathcal Y\times\mathcal Y^{\prime}\to\mathbb R$ satisfies separately $(L_{\ell}, L_{\ell}^{\prime})$-Lipschitz if there exist $L_{\ell}, L_{\ell}^{\prime}\ge0$ such that for any $y_{1}, y_{2}\in\mathcal Y$ and $y_{1}^{\prime}, y_{2}^{\prime}\in\mathcal Y^{\prime}$, there is:
\begin{equation}
\bigl|\ell(y_{1}, y_{1}^{\prime}) - \ell(y_{2}, y_{2}^{\prime})\bigr|
\!\le\!L_{\ell}\,\rho_{\mathcal Y}(y_{1}, y_{2})
+ L_{\ell}^{\prime}\,\rho_{\mathcal Y}^{\prime}(y_{1}^{\prime}, y_{2}^{\prime})
\end{equation}
\end{assumption}

\paragraph{Remark} This is a mild assumption: most loss functions are differentiable, which already implies continuity. And their stable optimization usually needs bounded gradients in a region, further ensuring Lipschitz continuity \citep{boyd2004convex}. Note that this assumption also covers asymmetric losses. For instance, for binary classification with label space $[0,1]$, output space is typically $[a,1-a]$ ($a\in(0,0.5)$) by Sigmoid function and $\rho_{\mathcal Y}=\rho_{\mathcal Y}^{\prime}=|\cdot|$, the cross-entropy loss: $
\ell_{CE}(y,\hat y) = -\,y\log\hat y \;-\; (1-y)\log(1-\hat y)
$ satisfies separately $(L_{\ell},L_{\ell}^{\prime})$-Lipschitz with $L_{\ell} = \log\bigl(\tfrac{1-a}{a}\bigr)$ and $L_{\ell}^{\prime} = \tfrac1a$. 

\begin{corollary}[Composition Preserves Separate Lipschitzness]
\label{cor:composition-preserves-separate-lipschitzness}
Let $h:\mathcal X\to\mathcal Y'$ be $L_h$-Lipschitz and let $\ell:\mathcal Y\times\mathcal Y'\to\mathbb R$ be separately $(L_\ell,L'_\ell)$-Lipschitz. Then composite function $\ell \bigl(y,h(x)\bigr):\mathcal Y\times\mathcal X\to\mathbb R$ is separately $(L_\ell,\,L_hL'_\ell)$-Lipschitz.
\end{corollary}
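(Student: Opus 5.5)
The plan is to verify the defining inequality of Assumption~\ref{ass:separately-lipschitz} directly for the composite map $g(y,x)=\ell\bigl(y,h(x)\bigr)$, regarded as a function on the product of the metric spaces $(\mathcal Y,\rho_{\mathcal Y})$ and $(\mathcal X,\rho_{\mathcal X})$. The second argument now lives in $\mathcal X$ rather than $\mathcal Y'$, so the metric on it is $\rho_{\mathcal X}$. Concretely, fix arbitrary $y_1,y_2\in\mathcal Y$ and $x_1,x_2\in\mathcal X$, and set $y'_1=h(x_1)$, $y'_2=h(x_2)$, which lie in $\mathcal Y'$.

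The first step applies the separate Lipschitz property of $\ell$ at the points $(y_1,y'_1)$ and $(y_2,y'_2)$:
\[
\bigl|\ell(y_1,h(x_1))-\ell(y_2,h(x_2))\bigr|\le L_\ell\,\rho_{\mathcal Y}(y_1,y_2)+L'_\ell\,\rho'_{\mathcal Y}\bigl(h(x_1),h(x_2)\bigr).
\]
The second step uses that $h$ is $L_h$-Lipschitz from $(\mathcal X,\rho_{\mathcal X})$ to $(\mathcal Y',\rho'_{\mathcal Y})$, which gives $\rho'_{\mathcal Y}(h(x_1),h(x_2))\le L_h\,\rho_{\mathcal X}(x_1,x_2)$. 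Because $L'_\ell\ge 0$, multiplying this bound by $L'_\ell$ preserves the inequality.

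Substituting the second bound into the first yields
\[
\bigl|g(y_1,x_1)-g(y_2,x_2)\bigr|\le L_\ell\,\rho_{\mathcal Y}(y_1,y_2)+L_hL'_\ell\,\rho_{\mathcal X}(x_1,x_2)
\]
for all choices of the four points. This is exactly separate $(L_\ell,L_hL'_\ell)$-Lipschitzness, and the constants are nonnegative as the assumption requires.

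There is no real obstacle in this argument. The only point needing care is bookkeeping: Assumption~\ref{ass:separately-lipschitz} is stated for $\mathcal Y\times\mathcal Y'$, so the statement should be read with $\mathcal Y'$ replaced by $\mathcal X$ and $\rho'_{\mathcal Y}$ replaced by $\rho_{\mathcal X}$. The nonnegativity of $L'_\ell$ is what justifies the substitution step.
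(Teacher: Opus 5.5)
Your proposal is correct and matches the paper's proof step for step: apply the separate $(L_\ell,L'_\ell)$-Lipschitz bound at $(y_1,h(x_1))$ and $(y_2,h(x_2))$, then bound $\rho_{\mathcal Y}^{\prime}(h(x_1),h(x_2))\le L_h\,\rho_{\mathcal X}(x_1,x_2)$ and substitute. You also note explicitly that $L'_\ell\ge 0$ is what justifies the substitution, which the paper leaves implicit.
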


The following two lemmas characterize the transport coupling of joint distributions $\mathcal D_{XY}^{S}$ and $\mathcal D_{XY}^{T}$:

\begin{lemma}[Separately Weak Duality]
\label{lem:separately-weak-duality}
Let $\mathcal{G} \bigl(y,x\bigr):\mathcal Y\times\mathcal X\to\mathbb R$ is separately $(L_\mathcal Y,\,L_\mathcal X )$-Lipschitz, for any coupling of joint distributions $\gamma_{_{XY}}\in\Gamma(\mathcal D_{XY}^{S},\mathcal D_{XY}^{T})$, there exists:
\begin{equation}
\bigl| \mathbb E_{\mathcal D_{XY}^{S}}\bigl[ \mathcal{G}\bigr]\bigr] -\mathbb E_{\mathcal D_{XY}^{T}}\bigl[ \mathcal{G}\bigr]\bigr|
\le
L_\mathcal X E_{\gamma_{_{XY}}}\bigl[ \rho_{\mathcal X} \bigr] + L_\mathcal Y E_{\gamma_{_{XY}}}\bigl[ \rho_{\mathcal Y} \bigr]
\end{equation}
\end{lemma}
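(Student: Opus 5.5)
The plan is to rewrite both expectations as a single integral against the coupling $\gamma_{XY}$, and then apply the separate Lipschitz property of $\mathcal G$ pointwise.

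\textbf{Step 1 (marginal rewriting).} Since $\gamma_{XY}\in\Gamma(\mathcal D_{XY}^{S},\mathcal D_{XY}^{T})$, its first marginal is $\mathcal D_{XY}^{S}$ and its second is $\mathcal D_{XY}^{T}$. Hence, with $((x_S,y_S),(x_T,y_T))\sim\gamma_{XY}$,
\[
\mathbb E_{\mathcal D_{XY}^{S}}[\mathcal G]-\mathbb E_{\mathcal D_{XY}^{T}}[\mathcal G]
=\int \bigl(\mathcal G(y_S,x_S)-\mathcal G(y_T,x_T)\bigr)\,d\gamma_{XY}.
\]
Splitting the integral of a difference into a difference of integrals requires integrability, and this is the only delicate point. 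I will assume $\mathcal G$ is integrable under both joint distributions, which is implicit for the statement to make sense. If the right-hand side of the lemma is infinite, the inequality holds trivially. Otherwise, the Lipschitz bound in Step 2 shows that the integrand is dominated by $L_{\mathcal X}\rho_{\mathcal X}+L_{\mathcal Y}\rho_{\mathcal Y}$, which is then $\gamma_{XY}$-integrable, so the manipulation is justified.

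\textbf{Step 2 (pointwise bound).} Take the absolute value inside the integral, using $|\int f|\le\int|f|$. Then apply the separate $(L_{\mathcal Y},L_{\mathcal X})$-Lipschitz property of Assumption~\ref{ass:separately-lipschitz}, with the roles of the two arguments played by $y$ and $x$, as in Corollary~\ref{cor:composition-preserves-separate-lipschitzness}. This gives, pointwise,
\[
|\mathcal G(y_S,x_S)-\mathcal G(y_T,x_T)|\le L_{\mathcal Y}\rho_{\mathcal Y}(y_S,y_T)+L_{\mathcal X}\rho_{\mathcal X}(x_S,x_T).
\]

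\textbf{Step 3 (integrate).} Integrating the pointwise bound against $\gamma_{XY}$ and using linearity yields $L_{\mathcal X}\mathbb E_{\gamma_{XY}}[\rho_{\mathcal X}]+L_{\mathcal Y}\mathbb E_{\gamma_{XY}}[\rho_{\mathcal Y}]$, which is the claimed bound.

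The argument is essentially the easy direction of Kantorovich–Rubinstein duality, adapted to a product cost. The only step needing care is the integrability bookkeeping in Step 1.
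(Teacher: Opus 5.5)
Your proposal is correct and follows essentially the same route as the paper. Both arguments rewrite the two expectations as integrals against $\gamma_{XY}$ via its marginals, move the absolute value inside, apply the separate Lipschitz bound pointwise, and split the integral by linearity. Your extra integrability bookkeeping is a harmless refinement that the paper leaves implicit; it only notes that the marginal identities hold for integrable functions.
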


\begin{lemma}[Gluing Construction for Joint Coupling]
\label{lem:gluing-construction-for-joint-coupling}
Let $\gamma^{*}$ be the optimal transport coupling of X shift in Definition~\ref{def:x-shift}, $\gamma^{*}_{Y|(x_S,x_T)}$ be the optimal transport coupling of $S_{pair}(x_S, x_T)$ in Definition~\ref{def:total-pair-yx-shift}, construct the joint distribution of couplings: $\gamma_{_{XY}}(dx_S,dx_T,dy_S,dy_T)=\gamma^{*}(dx_S,dx_T)\gamma^{*}_{Y|(x_S,x_T)}(dy_S,dy_T)$, then $\gamma_{_{XY}}$ is the coupling of joint distributions: $\gamma_{_{XY}}\in\Gamma(\mathcal D_{XY}^{S},\mathcal D_{XY}^{T})$.
\end{lemma}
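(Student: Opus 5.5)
The plan is to check directly that the two marginals of $\gamma_{_{XY}}$, taken in the $(x_S,y_S)$ and $(x_T,y_T)$ coordinates, are $\mathcal D_{XY}^{S}$ and $\mathcal D_{XY}^{T}$. Before computing marginals, we need $\gamma_{_{XY}}$ to be a well-defined probability measure on $\mathcal X\times\mathcal X\times\mathcal Y\times\mathcal Y$. This requires $(x_S,x_T)\mapsto\gamma^{*}_{Y|(x_S,x_T)}$ to be a Markov kernel, i.e.\ measurable in $(x_S,x_T)$. We will assume, as is implicit throughout the paper, that $\mathcal X,\mathcal Y$ are Polish. Then the conditionals $\mathcal D^{S}_{Y|X=x}$ and $\mathcal D^{T}_{Y|X=x}$ exist as regular conditional distributions, i.e.\ kernels. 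We choose the $W_1$-optimal couplings $\gamma^{*}_{Y|(x_S,x_T)}\in\Gamma(\mathcal D^{S}_{Y|X=x_S},\mathcal D^{T}_{Y|X=x_T})$ measurably in $(x_S,x_T)$. This is the standard measurable selection of optimal plans: the optimal set is a closed-valued correspondence, and one invokes, e.g., Villani, Optimal Transport, Cor.~5.22. We then define $\gamma_{_{XY}}(A)=\int \gamma^{*}_{Y|(x_S,x_T)}(A_{x_S,x_T})\,d\gamma^{*}(x_S,x_T)$ for measurable $A$, where $A_{x_S,x_T}$ denotes the $(y_S,y_T)$-section of $A$. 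This is a probability measure by the usual kernel construction (Ionescu--Tulcea, or Fubini for kernels).

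The core computation takes measurable rectangles $B\subseteq\mathcal X$ and $C\subseteq\mathcal Y$ and computes the source marginal:
\begin{align*}
\gamma_{_{XY}}\bigl(\{x_S\in B,\,y_S\in C\}\bigr)
&=\int \mathbf 1_{B}(x_S)\,\gamma^{*}_{Y|(x_S,x_T)}(C\times\mathcal Y)\,d\gamma^{*}(x_S,x_T)\\
&=\int \mathbf 1_{B}(x_S)\,\mathcal D^{S}_{Y|X=x_S}(C)\,d\gamma^{*}(x_S,x_T)\\
&=\int_{B}\mathcal D^{S}_{Y|X=x_S}(C)\,d\mathcal D^{S}_{X}(x_S)=\mathcal D^{S}_{XY}(B\times C).
\end{align*}
The second equality uses the marginal constraint on $\gamma^{*}_{Y|(x_S,x_T)}$. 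The third uses that the integrand depends only on $x_S$ and that the first marginal of $\gamma^{*}$ is $\mathcal D^{S}_{X}$, since $\gamma^{*}\in\Gamma(\mathcal D^{S}_X,\mathcal D^{T}_X)$ by Definition~\ref{def:x-shift}. The last equality is the disintegration $\mathcal D^{S}_{XY}(dx,dy)=\mathcal D^{S}_X(dx)\,\mathcal D^{S}_{Y|X=x}(dy)$. The target marginal is handled symmetrically, using the second marginal of $\gamma^{*}$. Rectangles form a $\pi$-system generating the product $\sigma$-algebra, so the $\pi$--$\lambda$ theorem extends the equality to all measurable sets, and hence $\gamma_{_{XY}}\in\Gamma(\mathcal D^{S}_{XY},\mathcal D^{T}_{XY})$.

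The main obstacle is the measurability and well-posedness issue, not the marginal computation. Two points need care. First, there is the measurable selection of the optimal $Y$-couplings. If we prefer to avoid it, we can note that the lemma only needs some coupling kernel with the right marginals, and the product kernel $\mathcal D^{S}_{Y|X=x_S}\otimes\mathcal D^{T}_{Y|X=x_T}$ is trivially measurable. Optimality matters only later, when bounding $\mathbb E_{\gamma_{_{XY}}}[\rho_{\mathcal Y}]$ by $S_{Cpt}^{\gamma^{*}}$, and an $\varepsilon$-optimal measurable selection would suffice there. Second, the conditionals are unique only a.e. The marginal computation uses each conditional only under $\gamma^{*}$-integrals whose integrand depends on $x_S$ alone, so changing $\mathcal D^{S}_{Y|X}$ on a $\mathcal D^{S}_X$-null set $N$ changes nothing. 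The reason is that $\gamma^{*}(N\times\mathcal X)=\mathcal D^{S}_X(N)=0$, which is consistent with Lemma~\ref{lem:support-of-gamma}.
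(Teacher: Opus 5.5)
Your proof is correct and takes the same route as the paper: you check both marginals on rectangles $B\times C$ by applying the marginal constraints of $\gamma^{*}_{Y|(x_S,x_T)}$, then integrate out the other covariate using the corresponding marginal of $\gamma^{*}$. Your extra points---a measurable selection of the optimal $Y$-couplings so that $\gamma_{_{XY}}$ is a well-defined measure, and the $\pi$--$\lambda$ extension from rectangles to all measurable sets---are left implicit in the paper's proof, and they make your argument more complete rather than different.
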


Combining Corollary~\ref{cor:composition-preserves-separate-lipschitzness}, Lemmas~\ref{lem:separately-weak-duality} and ~\ref{lem:gluing-construction-for-joint-coupling}, we obtain the general cross-domain error bound as follows:

\begin{theorem}[General Learning Bound]
\label{thm:learning-bound}
Given the covariate space $(\mathcal X, \rho_{\mathcal X})$, the label space $(\mathcal Y, \rho_{\mathcal Y})$, and the output space $(\mathcal Y^{\prime}, \rho_{\mathcal Y}^{\prime})$, the source and target distributions are $(\mathcal D_{X}^{S}, \mathcal D_{Y|X=x}^{S})$ and $(\mathcal D_{X}^{T}, \mathcal D_{Y|X=x}^{T})$, respectively. If the loss $\ell:\mathcal Y\times\mathcal Y^{\prime}\to\mathbb R$ satisfies separately $(L_{\ell},L_{\ell}^{\prime})$-Lipschitz, then for any hypothesis $h:\mathcal X\to\mathcal Y^{\prime}$ that satisfies $L_{h}$-Lipschitz, the following bound holds:
\begin{equation}
\epsilon_{T}(h)
\;\le\;
\epsilon_{S}(h)
\;+\;
L_{h}\,L_{\ell}^{\prime}\,S_{Cov}
\;+\;
L_{\ell}\,S_{Cpt}^{\gamma^{*}}
\end{equation}
where $S_{Cov}$ is the X shift in Definition~\ref{def:x-shift} and $S_{Cpt}^{\gamma^{*}}$ is the $\gamma^{*}$-Y$\mid$X shift in Definition~\ref{def:total-pair-yx-shift}.
\end{theorem}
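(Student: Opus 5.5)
The plan is to combine Corollary~\ref{cor:composition-preserves-separate-lipschitzness}, Lemma~\ref{lem:gluing-construction-for-joint-coupling} and Lemma~\ref{lem:separately-weak-duality} in that order.

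First, set $\mathcal G(y,x)=\ell(y,h(x))$. By Corollary~\ref{cor:composition-preserves-separate-lipschitzness}, $\mathcal G$ is separately $(L_\ell, L_hL_\ell')$-Lipschitz on $\mathcal Y\times\mathcal X$. By definition, $\epsilon_S(h)=\mathbb E_{\mathcal D^S_{XY}}[\mathcal G]$ and $\epsilon_T(h)=\mathbb E_{\mathcal D^T_{XY}}[\mathcal G]$. The bound therefore reduces to controlling $|\mathbb E_{\mathcal D^S_{XY}}[\mathcal G]-\mathbb E_{\mathcal D^T_{XY}}[\mathcal G]|$ and then dropping the absolute value on one side.

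Next, take the glued coupling $\gamma_{XY}=\gamma^*(dx_S,dx_T)\,\gamma^*_{Y|(x_S,x_T)}(dy_S,dy_T)$ from Lemma~\ref{lem:gluing-construction-for-joint-coupling}, which lies in $\Gamma(\mathcal D^S_{XY},\mathcal D^T_{XY})$. Applying Lemma~\ref{lem:separately-weak-duality} with this coupling gives
\[
\epsilon_T(h)-\epsilon_S(h)\le L_hL_\ell'\,\mathbb E_{\gamma_{XY}}[\rho_{\mathcal X}]+L_\ell\,\mathbb E_{\gamma_{XY}}[\rho_{\mathcal Y}].
\]
The two expectations are evaluated separately.

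\emph{The $\mathcal Y$-term.} Since $\rho_{\mathcal Y}$ depends only on $(y_S,y_T)$, Fubini/disintegration gives
\[
\mathbb E_{\gamma_{XY}}[\rho_{\mathcal Y}]=\int\Bigl(\int\rho_{\mathcal Y}\,d\gamma^*_{Y|(x_S,x_T)}\Bigr)d\gamma^*.
\]
Because $\gamma^*_{Y|(x_S,x_T)}$ is the $W_1$-optimal coupling of the two conditionals, the inner integral equals $S_{pair}(x_S,x_T)$. Hence this term is exactly $S^{\gamma^*}_{Cpt}$.

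\emph{The $\mathcal X$-term.} The $(x_S,x_T)$-marginal of $\gamma_{XY}$ is $\gamma^*$, so $\mathbb E_{\gamma_{XY}}[\rho_{\mathcal X}]=\int\rho_{\mathcal X}\,d\gamma^*$. The entropic term satisfies $H(\gamma^*\mid\mathcal D^S_X\otimes\mathcal D^T_X)\ge0$ because relative entropy is nonnegative, and $\beta\ge0$. Therefore $\int\rho_{\mathcal X}\,d\gamma^*\le\int\rho_{\mathcal X}\,d\gamma^*+\beta H(\gamma^*\mid\cdot)=W_\beta=S_{Cov}$. Substituting both terms yields the stated bound.

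The main obstacle is measure-theoretic rather than conceptual. First, one needs existence of optimal couplings $\gamma^*$ and a measurable selection $(x_S,x_T)\mapsto\gamma^*_{Y|(x_S,x_T)}$ of $W_1$-optimal couplings. For this I would rely on Lemma~\ref{lem:gluing-construction-for-joint-coupling}, which already asserts the gluing is valid, and on standard measurable-selection results for optimal plans on Polish spaces. If exact optimizers were unavailable, I would instead take $\varepsilon$-optimal couplings and send $\varepsilon\to0$. Second, integrability of $\mathcal G$ must be ensured; finite first moments suffice, and the bound is trivial when the right-hand side is infinite.
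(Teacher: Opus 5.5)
Your proposal is correct and matches the paper's proof step for step: you set $\mathcal G(y,x)=\ell(y,h(x))$ and invoke Corollary~\ref{cor:composition-preserves-separate-lipschitzness}, glue $\gamma^*$ with the conditional $W_1$-optimal couplings via Lemma~\ref{lem:gluing-construction-for-joint-coupling}, apply Lemma~\ref{lem:separately-weak-duality}, bound the $\mathcal X$-term by $S_{Cov}$ using nonnegativity of relative entropy, and identify the $\mathcal Y$-term with $S_{Cpt}^{\gamma^*}$. Your remarks on measurable selection of the optimal conditional couplings and on integrability address technical points the paper passes over silently, but they do not change the argument.
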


\paragraph{Remark} This elegant bound unifies the covariate shift $S_{Cov}$ and the concept shift ($\gamma^{*}$-Y$\mid$X shift) $S_{Cpt}^{\gamma^{*}}$'s effect on target error via the Lipschitz factors $L_{h}\,L_{\ell}^{\prime}$ and $L_{\ell}$. Notably, it depends only on the Lipschitz continuity of the hypothesis $h$ and the loss $\ell$, without any other specific restriction on the label space $\mathcal Y$ or loss $\ell$. It naturally covers binary classification or regression tasks when $\mathcal Y$ is one-dimensional, and multiclass classification or multi-label tasks when $\mathcal Y$ is multi-dimensional. Besides, since the bound holds under stochastic labeling, it applies to a wide range of supervised learning scenarios in practice. On the other hand, by using the well-defined $\gamma^{*}$-Y$\mid$X shift, our bound can be tighter than the existing bound, and more crucially, both the covariate and concept shifts in our theory can be robustly estimated from the finite samples. Additionally, since the entropic optimal transport $S_{Cov}$ increases with the hyperparameter $\beta$, we recommend choosing $\beta$ as a small non-zero value to balance the tightness and the rigor of the theory.

\section{Statistical Results}
\label{sec:estimation}

In practice, true domain distributions are all unknown. We wish to estimate the shifts from domain samples and analyze how these shifts will influence model performance. In this section, we focus on sample-level theoretical results. Suppose we have i.i.d.\ samples $\{(X_{i}^{(S)}, Y_{i}^{(S)})\}$ and $\{(X_{j}^{(T)}, Y_{j}^{(T)})\}$ drawn from $\mathcal D_{XY}^{S}$ and $\mathcal D_{XY}^{T}$ with sizes $N_S$ and $N_T$, respectively. This section involves three parts: the estimation of X shift, the estimation of Y$\mid$X shift, and the DataShifts algorithm to estimate the overall bound.

\subsection{Estimation of X Shift}

By Definition~\ref{def:x-shift}, the X shift is the entropic optimal transport between $\mathcal D_{X}^{S}$ and $\mathcal D_{X}^{T}$. The traditional estimation method uses the entropic optimal transport of empirical distributions—known as the plug-in estimator.

\paragraph{Traditional Plug-in Estimator}
\label{def:plugin-estimator}
Given i.i.d.\ samples $\{X_{i}^{(S)}\}\sim\mathcal D_{X}^{S}$, $\{X_{j}^{(T)}\}\sim\mathcal D_{X}^{T}$ with sample sizes $N_S, N_T$ respectively, set the empirical measures: $\widehat{\mathcal D_{X}^{S}} = \frac{1}{N_S}\sum_{i=1}^{N_S} \delta_{X_{i}^{(S)}}$, $\widehat{\mathcal D_{X}^{T}} = \frac{1}{N_T}\sum_{j=1}^{N_T} \delta_{X_{j}^{(T)}}$, their entropic optimal transport is:
\begin{align}
&W_{\beta}\bigl(\widehat{\mathcal D_{X}^{S}}, \widehat{\mathcal D_{X}^{T}}\bigr)
=
\min_{\hat\gamma}\,
  \langle C, \hat\gamma\rangle
  + \beta \sum_{i=1}^{N_S}\sum_{j=1}^{N_T}
    \hat\gamma_{ij}\,\log \hat\gamma_{ij} \notag \\[3pt]
\quad
&+ \beta\,\log\bigl(N_S N_T\bigr) \quad
\text{s.t.}\quad
\hat\gamma \mathbf 1\!=\!\tfrac{1}{N_S}\mathbf 1,\;
\hat\gamma^{\top}\mathbf 1\!=\!\tfrac{1}{N_T}\mathbf 1
\label{eq:empirical-entropic-ot}
\end{align}

where $C\in\mathbb R_{+}^{N_S\times N_T}$ is the cost matrix with $c_{ij} = \rho_{\mathcal X}\bigl(X_{i}^{(S)}, X_{j}^{(T)}\bigr)$, and $\hat\gamma\in \mathbb R_{+}^{N_S\times N_T}$ represents any discretized transport coupling satisfying the given linear constraints. Such an optimization problem can be solved efficiently at a large scale by the \emph{Sinkhorn} algorithm \citep{cuturi2013sinkhorn,genevay2016stochastic}.

\begin{figure*}[!htbp]
  \centering
  \subfigure[Empirical distance vs. dimension ($d$)]{%
    \includegraphics[height=0.26\textwidth]{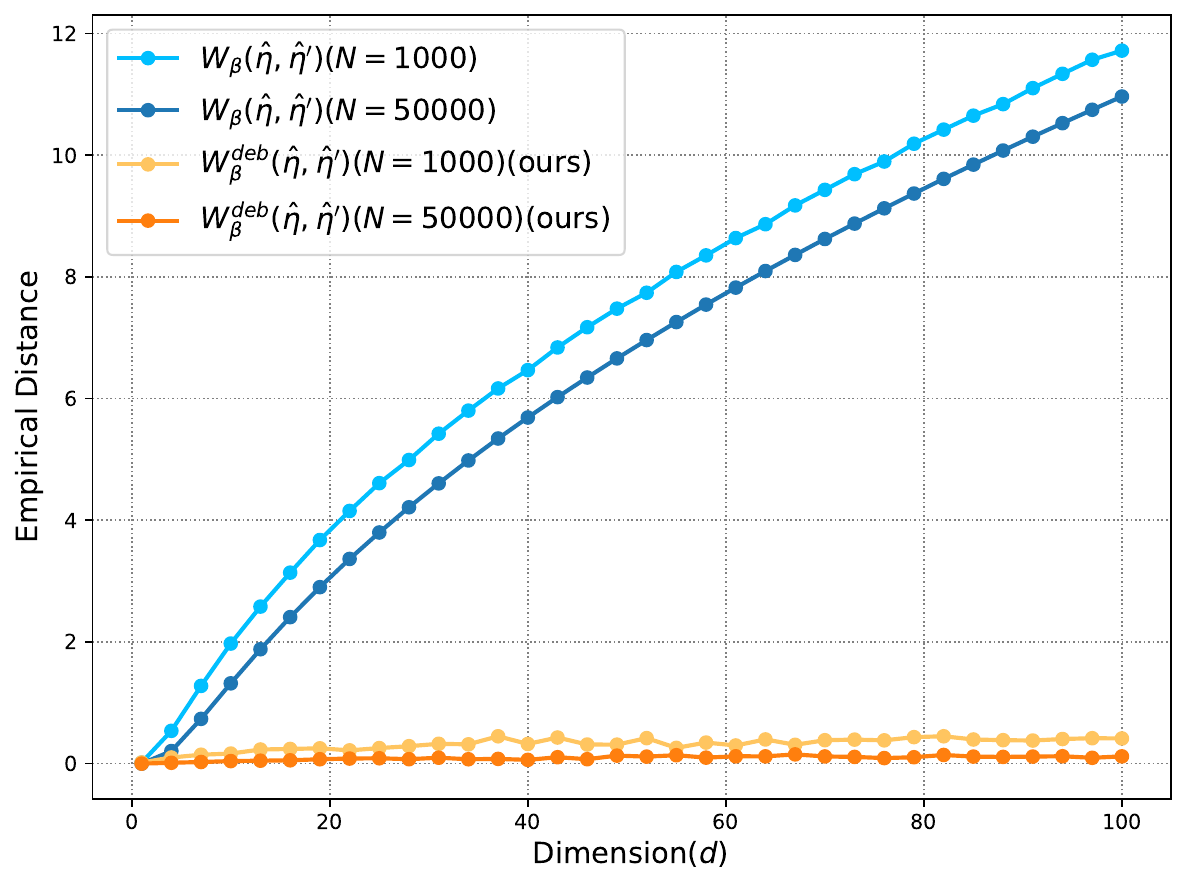}
    \label{fig:1a}%
  }\hfill
  \subfigure[Empirical distance vs. sample size ($N$)]{%
    \includegraphics[height=0.26\textwidth]{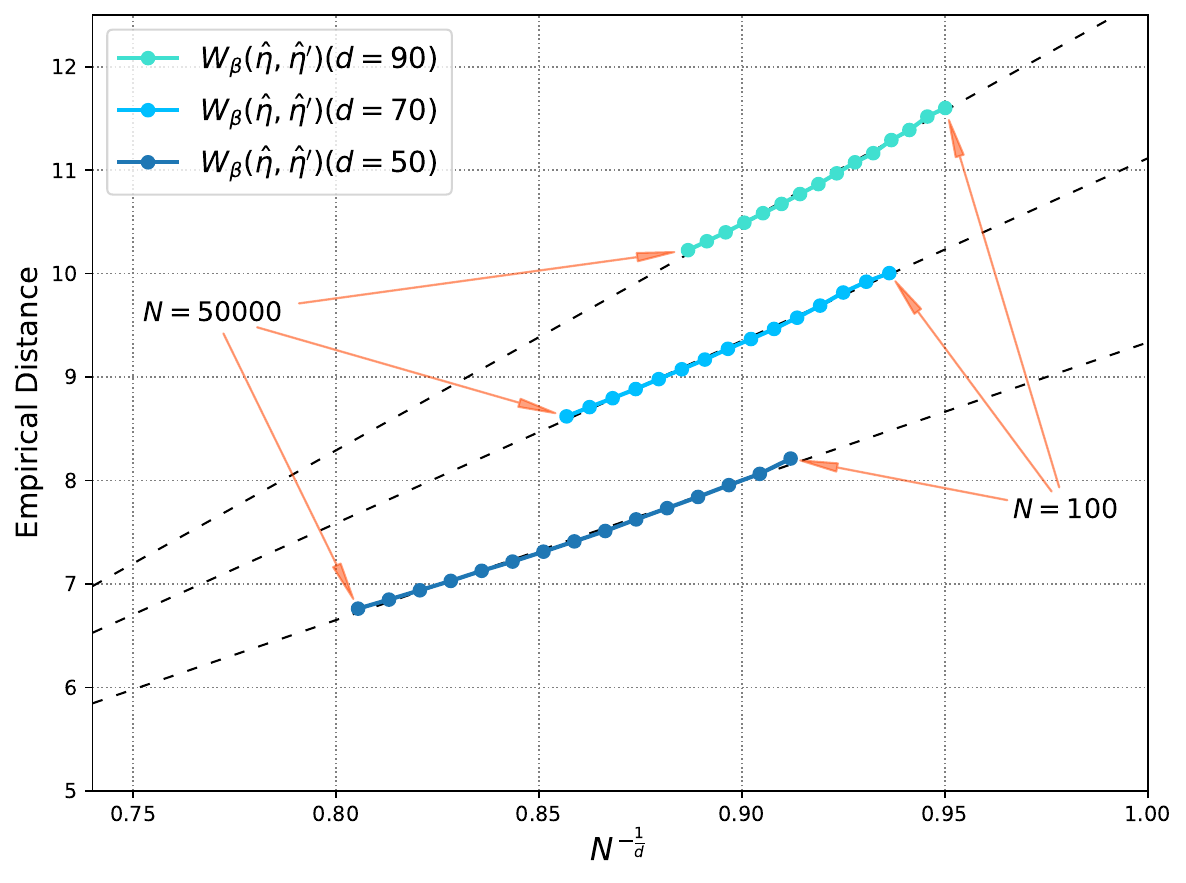}
    \label{fig:1b}%
  }\hfill
  \subfigure[Empirical distance vs. true $W_1$]{%
    \includegraphics[height=0.26\textwidth]{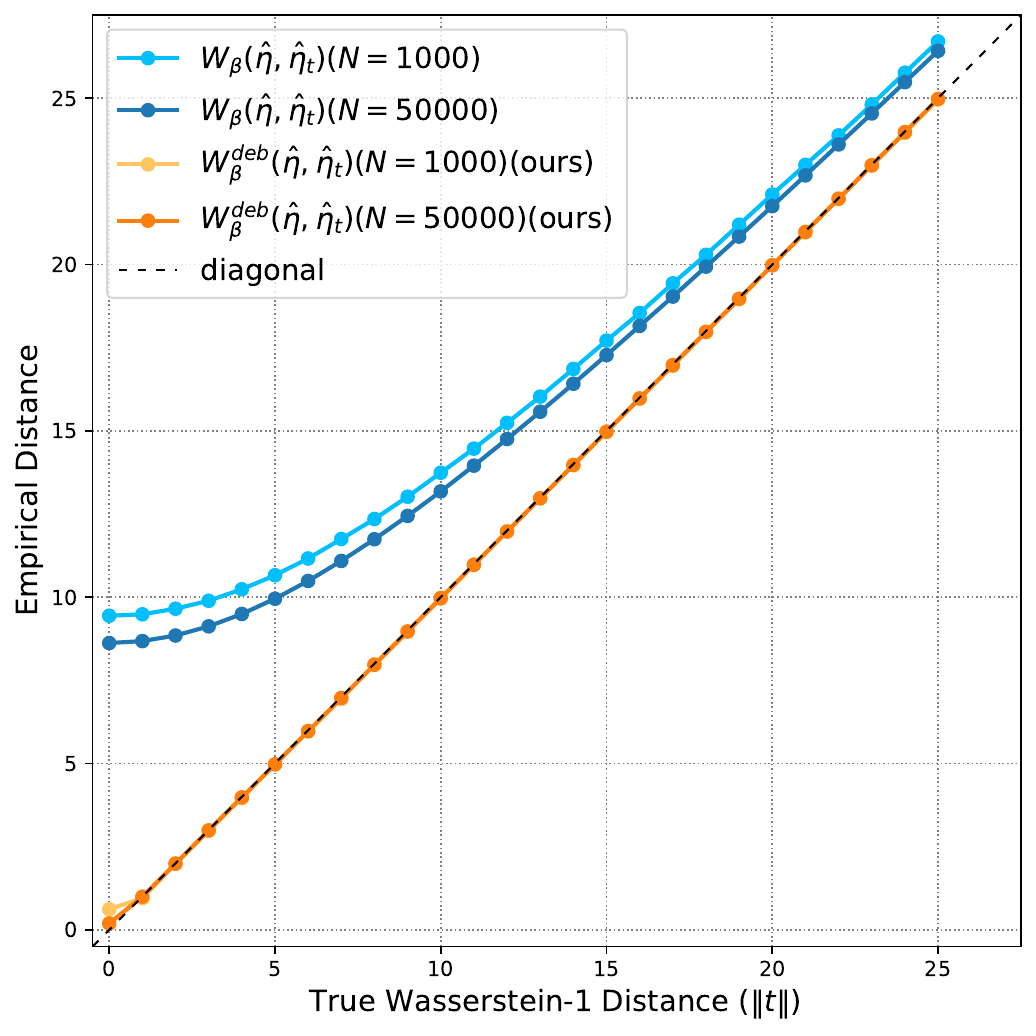}
    \label{fig:1c}%
  }
  \caption{(a)–(c) show the empirical distance from entropic optimal transport $(\beta = 0.001)$ versus dimension $d$, sample size $N$, and true Wasserstein-1 distance. In (a)–(b), $\hat\eta$ and $\hat\eta'$ are independent empirical measures from high-dimensional standard normals, thus the true distance is zero. The traditional estimator $W_\beta(\hat\eta,\hat\eta')$ greatly overestimates as $d$ increases, as shown in (a), and even a much larger $N$ only brings a small improvement, as shown in (b). In (c), with $d = 70$, $\hat\eta_t$ is the empirical measure of shifted standard normal $\mathcal{N}(t,I)$, whose true Wasserstein-1 distance to the standard normal is $\|t\|$. Our debiased estimator remains accurate in every case.}
  \label{fig:combined}
\end{figure*}

\paragraph{Curse of Dimensionality.}
However, in modern applications, the covariate space $\mathcal{X}$ is often high-dimensional. Even when two distributions are very close, their samples' distance can be large. Such curse of dimensionality makes the plug-in estimator greatly overestimated \citep{verleysen2005curse,panaretos2019statistical} (Fig.\ref{fig:1a}). When $\beta$ is small, entropic optimal transport behaves similarly to Wasserstein distance \citep{carlier2017convergence,carlier2023convergence}, and the upward bias decays only in $O(N^{-1/d})$ order \citep{fournier2015rate}, which implies that increasing $N$ has a very limited debiasing effect when $d$ is high (Fig.\ref{fig:1b}).

To address the overestimation problem of the traditional plug-in estimator, we propose the following debiased estimator.

\begin{definition}[Debiased Estimator]
\label{def:debiased-estimator}
Given i.i.d.\ samples $\{X_{i}^{(S)}\}\sim\mathcal D_{X}^{S}$, $\{X_{j}^{(T)}\}\sim\mathcal D_{X}^{T}$ with sample sizes $N_S, N_T$ respectively, split the samples in half to obtain four independent empirical measures:
\(
\widehat{\mathcal D_{X}^{S}}' = \frac{2}{N_S} \sum_{i=1}^{N_S/2} \delta_{X_{i}^{(S)}}\),
\(
\widehat{\mathcal D_{X}^{S}}'' = \frac{2}{N_S} \sum_{i=N_S/2+1}^{N_S} \delta_{X_{i}^{(S)}}\),
\(
\widehat{\mathcal D_{X}^{T}}' = \frac{2}{N_T} \sum_{j=1}^{N_T/2} \delta_{X_{j}^{(T)}}\),
\(
\widehat{\mathcal D_{X}^{T}}'' = \frac{2}{N_T} \sum_{j=N_T/2+1}^{N_T} \delta_{X_{j}^{(T)}}
\). The debiased estimator is:
\begin{align}
W_{\beta}^{deb}\!&\bigl(
\widehat{\mathcal D_{X}^{S}},
\widehat{\mathcal D_{X}^{T}}
\bigr)
\!=\!
\Biggl\lvert
\tfrac12 W_{\beta}\!\bigl(
\widehat{\mathcal D_{X}^{S}}'\!,
\widehat{\mathcal D_{X}^{T}}'
\bigr)^2
\!+\!
\tfrac12 W_{\beta}\!\bigl(
\widehat{\mathcal D_{X}^{S}}''\!,
\widehat{\mathcal D_{X}^{T}}''
\bigr)^2
\notag\\[-1pt]
-&
\tfrac12 W_{\beta}\!\bigl(
\widehat{\mathcal D_{X}^{S}}'\!,
\widehat{\mathcal D_{X}^{S}}''
\bigr)^2
-
\tfrac12 W_{\beta}\!\bigl(
\widehat{\mathcal D_{X}^{T}}'\!,
\widehat{\mathcal D_{X}^{T}}''
\bigr)^2
\Biggr\rvert^{1/2}
\label{eq:debias}
\end{align}
\end{definition}

\paragraph{Remark}
This debiased estimator uses four plug-in estimators. The first two terms, \(W_{\beta}\bigl(\widehat{\mathcal D_{X}^{S}}', \widehat{\mathcal D_{X}^{T}}'\bigr)\) and \(W_{\beta}\bigl(\widehat{\mathcal D_{X}^{S}}'', \widehat{\mathcal D_{X}^{T}}''\bigr)\), estimate the distance between $\mathcal D_{X}^{S}$ and $\mathcal D_{X}^{T}$, including the sample bias. And the last two terms,
\(W_{\beta}\bigl(\widehat{\mathcal D_{X}^{S}}', \widehat{\mathcal D_{X}^{S}}''\bigr)\) and \(W_{\beta}\bigl(\widehat{\mathcal D_{X}^{T}}', \widehat{\mathcal D_{X}^{T}}''\bigr)\)
estimate the distance arising from sample bias only. Subtracting the two parts reduces the sample bias, thus giving a better estimate of $W_{\beta}(\mathcal D_{X}^{S}, \mathcal D_{X}^{T})$. Compared with the traditional plug-in estimator, our debiased estimator $W_{\beta}^{deb}$ reduces the overestimation and remains accurate regardless of the true distribution distance (see Fig.\ref{fig:1a} and \ref{fig:1c}).

Moreover, when the covariate space is \emph{Euclidean}, we derived a concentration inequality guaranteeing that the debiased estimator converges to the true distance:

\begin{theorem}[Concentration Inequality of Debiased Estimator]
\label{thm:concentration-debiased}
Let $\mathcal D_{X}^{S}, \mathcal D_{X}^{T}$ be two distributions on $(\mathbb R^{d}, \|\cdot\|)$ with finite squared-exponential moments. For i.i.d.\ samples $\{X_{i}^{(S)}\}\sim\mathcal D_{X}^{S}$, $\{X_{j}^{(T)}\}\sim\mathcal D_{X}^{T}$ with sample sizes $N_S, N_T$ respectively, when $\beta=0$, and for any $\varepsilon>0$, there exists $N$ such that if $N_S, N_T > N$, then:
\begin{align}
\mathbb P\Bigl(
\bigl|\,W_{\beta}^{deb}(\widehat{\mathcal D_{X}^{S}}, \widehat{\mathcal D_{X}^{T}}) 
- W_{\beta}(\mathcal D_{X}^{S}, \mathcal D_{X}^{T})\bigr| 
> \varepsilon
\Bigr)
&\le \notag \\[3pt]
\quad 2\exp\Bigl(-\tfrac{\lambda_S\,N_S\,V_{\varepsilon}\,\varepsilon^{2}}{32}\Bigr)
+ 2\exp\Bigl(-\tfrac{\lambda_T\,N_T\,V_{\varepsilon}\,\varepsilon^{2}}{32}\Bigr) &
\label{eq:deb-bd}
\end{align}

where $\lambda_S, \lambda_T > 0$ depend only on squared-exponential moments of $\mathcal D_{X}^{S}, \mathcal D_{X}^{T}$, respectively, and $V_{\varepsilon} \in [2-\sqrt3, 2)$ depends only on $W_{\beta}(\mathcal D_{X}^{S}, \mathcal D_{X}^{T})/\varepsilon$.
\end{theorem}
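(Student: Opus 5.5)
The plan is to reduce the statement to the known one-sample concentration of the empirical Wasserstein-1 distance. I would then transfer it to the four plug-in terms of Definition~\ref{def:debiased-estimator} through the triangle inequality, and finish with an elementary algebraic sandwich. Since $\beta=0$, every $W_\beta$ is $W_1$, which is a genuine metric. Write $D=W_1(\mathcal D_X^S,\mathcal D_X^T)$. Also write $a'=W_1(\widehat{\mathcal D_X^S}',\mathcal D_X^S)$ and $a''=W_1(\widehat{\mathcal D_X^S}'',\mathcal D_X^S)$, and define $b',b''$ the same way for the target halves.

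The key external input is the concentration result of Bolley, Guillin and Villani, which uses the finite squared-exponential moment (equivalently a $T_1$ transport inequality). For a distribution $\mu$ on $\mathbb R^d$ with such a moment and any $\delta>0$, there is $N_0(\delta)$ such that for $n\ge N_0(\delta)$,
\[
\mathbb P\bigl(W_1(\hat\mu_n,\mu)>\delta\bigr)\le \exp(-\lambda n\delta^2),
\]
with $\lambda>0$ depending only on that moment. I would apply it to each half-sample, of size $N_S/2$ or $N_T/2$, with a threshold $\delta$ to be fixed below. A union bound over the two source halves then gives failure probability $2\exp(-\lambda_S N_S\delta^2/2)$, and the two target halves give $2\exp(-\lambda_T N_T\delta^2/2)$. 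Setting $\delta^2=V_\varepsilon\varepsilon^2/16$ produces exactly the exponent $\lambda N V_\varepsilon\varepsilon^2/32$ in \eqref{eq:deb-bd}. The threshold $N$ is taken as $2N_0(\delta)$ for both distributions.

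On the good event where $a',a'',b',b''\le\delta$, the triangle inequality gives two facts. First, $|W_1(\widehat{\mathcal D_X^S}',\widehat{\mathcal D_X^T}')-D|\le 2\delta$, and the same holds for the $''$ pair. Second, the self terms satisfy $W_1(\widehat{\mathcal D_X^S}',\widehat{\mathcal D_X^S}'')\le 2\delta$ and $W_1(\widehat{\mathcal D_X^T}',\widehat{\mathcal D_X^T}'')\le 2\delta$. Let $Q$ denote the expression inside the absolute value in \eqref{eq:debias}. Then
\[
(D-2\delta)_+^2-4\delta^2\;\le\; Q\;\le\;(D+2\delta)^2 .
\]
The upper side gives $\sqrt{|Q|}\le D+2\delta\le D+\varepsilon$ whenever $Q\ge 0$. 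If $Q<0$, then $\sqrt{|Q|}\le 2\delta$, which is within $\varepsilon$ of $D$ provided $D\le\varepsilon$ or the lower bound keeps $Q$ nonnegative. For the lower side, when $D\le\varepsilon$ the inequality $\sqrt{|Q|}\ge D-\varepsilon$ is trivial. When $D>\varepsilon$, I need $(D-2\delta)^2-4\delta^2\ge(D-\varepsilon)^2$. With $r=D/\varepsilon$ and $\delta=\varepsilon\sqrt{V}/4$, this becomes a quadratic inequality in $\sqrt V$ whose coefficients depend only on $r$.

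I would define $V_\varepsilon$ as a suitable admissible root of that quadratic, or a fixed fraction of it, capped below $2$. This makes it depend only on $D/\varepsilon$, as the statement requires. The main obstacle is this last case analysis: choosing $V_\varepsilon$ explicitly and checking that it satisfies all constraints simultaneously for every $r$. The constraints are $2\delta\le\varepsilon$, the lower-sandwich inequality, and handling a possibly negative $Q$. I would also need to verify that the resulting $V_\varepsilon$ ranges in $[2-\sqrt3,2)$; I expect the endpoint $2-\sqrt3$ to arise as the root at the critical ratio $r$ near $1$. I would first try the closed form from solving $4r\sqrt V/4\cdot\ldots$ directly, and fall back to a monotonicity argument in $r$ if the algebra is messy. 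The dependence of $N$ on $\varepsilon$ through $N_0(\delta)$ is harmless because the statement only asserts that some such $N$ exists.
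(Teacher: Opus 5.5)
Your proposal is correct. It shares the paper's skeleton: the half-sample split, triangle inequalities reducing all four plug-in terms to the marginal errors $a',a'',b',b''$, a union bound, and Bolley--Guillin--Villani concentration on halves of size $N_S/2$ and $N_T/2$. Your form $\exp(-\lambda n\delta^2)$ is the paper's lemma with $\lambda_\mu/2$ renamed, which is harmless because $\lambda_S,\lambda_T$ are unspecified.

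The two routes differ in how the deviation event is converted.
\begin{itemize}
\item In your notation, the paper sets $T=|Q-D^2|$ and uses $|W_1^{deb}-D|\le\sqrt T$. Its threshold is $\varepsilon^2$ if $D<\varepsilon$ and $2D\varepsilon-\varepsilon^2$ otherwise. It bounds $T\le 4D\delta+8\delta^2$ by putting absolute values on every term, then solves the resulting quadratic in $\delta$. That quadratic is where $V(a)$ and its minimum $2-\sqrt3$ at $a=1$ come from.
\item Your signed sandwich keeps the fact that the within-domain terms enter with a minus sign. This dissolves the step you flag as the main obstacle. For $D>\varepsilon$ (so $2\delta<D$), the lower side is $(D-2\delta)^2-4\delta^2=D^2-4D\delta$, and the quadratic terms cancel. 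The requirement $D^2-4D\delta\ge(D-\varepsilon)^2$ is then the \emph{linear} condition $\sqrt V\le 2-\varepsilon/D$, not a quadratic one. For $D\le\varepsilon$ you only need $\sqrt V\le 2$.
\end{itemize}

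Hence the constant choice $V_\varepsilon\equiv 1$ satisfies every constraint. It is trivially a function of $D/\varepsilon$ and lies in $[2-\sqrt3,2)$, so no root selection or monotonicity argument is needed.

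Your expectation that $2-\sqrt3$ appears as the binding root near $r=1$ is wrong. In your analysis the binding value at $r=1$ is $V=1$; $2-\sqrt3$ is an artifact of the paper's symmetric bound on $T$. This is immaterial, since the statement only asks for some $V_\varepsilon$ in that range.

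At a common normalization, your conditions admit a strictly larger half-sample threshold $\delta$ than the paper's $4D\delta+8\delta^2\le t$, for every ratio $D/\varepsilon$. So your route is simpler and slightly sharper. The paper's route is more mechanical, reducing everything to a single scalar event on $T$, but it pays for discarding the signs.
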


\paragraph{Remark} This theorem implies that the deviation probability of the debiased estimator decays exponentially with sample sizes, so with high probability, our debiased estimator can well approximate the true entropic optimal transport distance $W_{\beta}(\mathcal D_{X}^{S}, \mathcal D_{X}^{T})$. Notably, it also shows the enlightening fact that our estimator's concentration depends not only on each distribution's scale characterized by $\lambda_S, \lambda_T$ but also on the true distance between two distributions.

\subsection{Estimation of Y$\mid$X Shift}

As above, we defined the $\gamma^{*}$-Y$\mid$X shift $S_{Cpt}^{\gamma^{*}}$ in Definition~\ref{def:total-pair-yx-shift}; we now estimate it from samples.

\begin{definition}[Estimator for $\gamma^{*}$-Y$\mid$X Shift]
\label{def:estimator-total-pair-yx}
Given i.i.d.\ samples $\{(X_{i}^{(S)}, Y_{i}^{(S)})\} \sim \mathcal D_{XY}^{S}$, $\{(X_{j}^{(T)}, Y_{j}^{(T)})\} \sim \mathcal D_{XY}^{T}$ with sample sizes $N_S, N_T$ respectively, the estimator of $\gamma^{*}$-Y$\mid$X shift is defined as:
\begin{equation}
\hat S_{Cpt}
= \sum_{i=1}^{N_S}\sum_{j=1}^{N_T} 
  \rho_{\mathcal Y}\bigl(Y_{i}^{(S)}, Y_{j}^{(T)}\bigr)\,\hat\gamma_{ij}^{*},
\end{equation}
where $\hat\gamma^{*}\in\mathbb R_{+}^{N_S\times N_T}$ represents the discrete optimal transport coupling for $W_{\beta}(\widehat{\mathcal D_{X}^{S}}, \widehat{\mathcal D_{X}^{T}})$.
\end{definition}

\paragraph{Remark} In Section 4.1, we use the debiased estimator $W_{\beta}^{deb}\!\bigl(\widehat{\mathcal D_{X}^{S}},\widehat{\mathcal D_{X}^{T}}\bigr)$ for X shift, whereas Definition~\ref{def:estimator-total-pair-yx} still estimates $\gamma^{*}$-Y$\mid$X shift via the optimal transport coupling from the plug-in estimator $W_{\beta}(\widehat{\mathcal D_{X}^{S}}, \widehat{\mathcal D_{X}^{T}})$. This is because the curse of dimensionality mainly affects the transport cost (i.e., inter-sample distances), rather than the transport coupling. Leveraging the stability of entropic optimal transport \cite{eckstein2022quantitative}, the following lemma establishes the convergence of the coupling for plug-in estimator:

\begin{lemma}[Stability of Entropic Optimal Transport Coupling]
\label{lem:stability-of-entropic-optimal-transport-coupling}
Let $\mathcal D_{X}^{S}, \mathcal D_{X}^{T}$ be two distributions on $(\mathcal X,\rho_{\mathcal X})$ with finite squared-exponential moments. $\gamma^{*}$ and $\hat\gamma^{*}$ are the optimal transport couplings of $W_{\beta}(\mathcal D_{X}^{S}, \mathcal D_{X}^{T})$ and $W_{\beta}(\widehat{\mathcal D_{X}^{S}}, \widehat{\mathcal D_{X}^{T}})$ respectively, then:
\begin{align}
&W_1\left( \gamma^{*},\hat\gamma^{*} \right) \le \varLambda +2\sqrt{\frac{2}{\beta \lambda _{\gamma^{*}}}}\sqrt{\varLambda} \notag \\[3pt]
&\varLambda =W_1\left( \mathcal D_{X}^{S} ,\widehat{\mathcal D_{X}^{S}} \right) +W_1\left( \mathcal D_{X}^{T} ,\widehat{\mathcal D_{X}^{T}} \right) 
\end{align}
where $W_1\left( \gamma^{*},\hat\gamma^{*} \right)$ is computed on $\mathcal X\times\mathcal X$ with metric
$\rho\big((x_1,x_2),(x_1',x_2')\big):=\rho_{\mathcal X}(x_1,x_1')+\rho_{\mathcal X}(x_2,x_2')$, $\lambda _{\gamma^{*}} > 0$ depend only on squared-exponential moments of $\mathcal D_{X}^{S}, \mathcal D_{X}^{T}$.
\end{lemma}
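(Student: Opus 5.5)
We plan to derive this from the quantitative stability result of \citet{eckstein2022quantitative} for entropic optimal transport couplings, specialized to the two-marginal case with cost $\rho_{\mathcal X}$. Their theorem states that if $\gamma^{*}$ is the entropic optimal coupling of $(\mu_1,\mu_2)$ and $\hat\gamma^{*}$ that of $(\nu_1,\nu_2)$, then $W_1(\gamma^{*},\hat\gamma^{*})$ is controlled by $\sum_k W_1(\mu_k,\nu_k)$ plus a square-root term. The square-root term comes from a strong-convexity (transport--entropy) argument for the entropic functional. We apply it with $\mu_1=\mathcal D_X^S$, $\mu_2=\mathcal D_X^T$, $\nu_1=\widehat{\mathcal D_X^S}$ and $\nu_2=\widehat{\mathcal D_X^T}$. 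The product metric $\rho((x_1,x_2),(x_1',x_2'))=\rho_{\mathcal X}(x_1,x_1')+\rho_{\mathcal X}(x_2,x_2')$ is exactly the metric used there, so $\varLambda$ is the marginal perturbation.

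The key steps, in order, are as follows.

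\textbf{Step 1: shadow coupling.} Glue $\gamma^{*}$ with $W_1$-optimal couplings between $\mathcal D_X^S$ and $\widehat{\mathcal D_X^S}$, and between $\mathcal D_X^T$ and $\widehat{\mathcal D_X^T}$. This produces a coupling $\tilde\gamma\in\Gamma(\widehat{\mathcal D_X^S},\widehat{\mathcal D_X^T})$, obtained as a Markov-kernel image of $\gamma^{*}$, with $W_1(\gamma^{*},\tilde\gamma)\le\varLambda$. Since the kernel does not increase relative entropy, $H(\tilde\gamma\mid \widehat{\mathcal D_X^S}\otimes\widehat{\mathcal D_X^T})\le H(\gamma^{*}\mid\mathcal D_X^S\otimes\mathcal D_X^T)$. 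Because the cost is $1$-Lipschitz in the product metric, the transport cost changes by at most $\varLambda$. Hence the entropic objective of $\tilde\gamma$ on the empirical problem exceeds $W_\beta(\mathcal D_X^S,\mathcal D_X^T)$ by at most $\varLambda$. Symmetrically, shadowing $\hat\gamma^{*}$ back gives the reverse inequality. Together these show that $\tilde\gamma$ is $2\varLambda$-suboptimal for the empirical problem.

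\textbf{Step 2: suboptimality to entropy gap.} For entropic OT, the objective minus its optimum equals $\beta H(\tilde\gamma\mid\hat\gamma^{*})$; this is the Gibbs variational identity coming from the Schrödinger potential form of $\hat\gamma^{*}$. Therefore $H(\tilde\gamma\mid\hat\gamma^{*})\le 2\varLambda/\beta$.

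\textbf{Step 3: transport inequality.} Apply a $T_1$ inequality, $W_1(\tilde\gamma,\hat\gamma^{*})\le\sqrt{2H(\tilde\gamma\mid\hat\gamma^{*})/\lambda_{\gamma^{*}}}$. Such an inequality holds for any measure with a finite squared-exponential moment, by the Bolley--Villani/Djellout--Guillin--Wu criterion, and $\lambda_{\gamma^{*}}$ depends on that moment. Combining gives $W_1\le 2\sqrt{\varLambda/(\beta\lambda_{\gamma^{*}})}$. The constant $2\sqrt{2/(\beta\lambda_{\gamma^{*}})}$ in the statement leaves room for the factor lost in the two-sided comparison of Step 1. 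The triangle inequality $W_1(\gamma^{*},\hat\gamma^{*})\le W_1(\gamma^{*},\tilde\gamma)+W_1(\tilde\gamma,\hat\gamma^{*})$ then yields the claim.

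The main obstacle is Step 3. The $T_1$ inequality must be applied to the measure on the right-hand side of the relative entropy, and that measure is the empirical coupling $\hat\gamma^{*}$, whose moment constant is random. To avoid this, I would first try to arrange the entropy comparison with $\gamma^{*}$ as the reference measure, shadowing $\hat\gamma^{*}$ onto the population marginals instead. This requires the suboptimality identity on the population problem, and then $\lambda_{\gamma^{*}}$ depends only on the moments of $\gamma^{*}$. Those moments are bounded by the squared-exponential moments of $\mathcal D_X^S$ and $\mathcal D_X^T$ via $\rho((x_1,x_2),(x_0,x_0))^2\le 2\rho_{\mathcal X}(x_1,x_0)^2+2\rho_{\mathcal X}(x_2,x_0)^2$ and Cauchy--Schwarz, which is what the statement asserts. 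The technical point to verify is that the shadow kernel preserves the entropy bound, which holds by the data-processing inequality.
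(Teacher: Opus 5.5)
Your argument is correct and is essentially the paper's route once you adopt the fix in your last paragraph: shadow $\hat\gamma^{*}$ onto the population marginals, use the Pythagorean identity for the population problem, and apply $T_1$ with reference $\gamma^{*}$, whose constant is controlled by the marginal squared-exponential moments. That is the shadow proof of Theorem~3.11 in \citet{eckstein2022quantitative} with $N=2$, $p=1$ and cost $\rho_{\mathcal X}/\beta$ of Lipschitz constant $1/\beta$, which the paper invokes as a black box together with their Lemma~3.10(ii) and the product-coupling bound $W_1\bigl((\mu,\nu);(\hat\mu,\hat\nu)\bigr)\le\varLambda$; the extra $\sqrt2$ in the stated constant is not slack lost in your Step~1 but only the paper's normalization $\lambda_{\gamma^{*}}:=4/C_1^2$, compared with your $T_1$ normalization $C_1=\sqrt{2/\lambda}$.
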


\paragraph{Remark} This lemma shows that the Wasserstein-$1$ distance between the plug-in estimator coupling and the population optimal transport coupling, is bounded by the sum of the marginal Wasserstein-$1$ distance between each population and its empirical distribution, with the bound scaled by the parameter $\beta$. Such a uniform bound is guaranteed only when $\beta>0$, which further highlights the necessity of using entropic optimal transport.

By Lemma~\ref{lem:stability-of-entropic-optimal-transport-coupling}, when covariate space $\mathcal X$ is \emph{Euclidean} and label space $\mathcal Y$ is a bounded set in an Euclidean space, we derive a concentration inequality guaranteeing estimator in Definition~\ref{def:estimator-total-pair-yx} approximates true value:

\begin{theorem}[Concentration Inequality for Definition~\ref{def:estimator-total-pair-yx}]
\label{thm:concentration-total-pair}
Let $\mathcal D_{X}^{S}, \mathcal D_{X}^{T}$ be two distributions on $(\mathbb R^{d}, \|\cdot\|)$ with finite squared-exponential moments. Let the label space $\mathcal Y \subset \mathbb R^{d’}$ be bounded by $M = \sup_{y,y'\in\mathcal Y} \|\,y - y'\|$, on which conditional distributions $\mathcal D_{Y|X=x_S}^{S}, \mathcal D_{Y|X=x_T}^{T}$ satisfy $L_{Y|X}$-Lipschitz respectively: $
d_{\mathrm{TV}}\bigl(\mathcal D_{Y|X=x}, \mathcal D_{Y|X=x'}\bigr)
\;\le\; L_{Y|X}\,\|\,x - x'\|$. For i.i.d.\ samples $\{(X_{i}^{(S)}, Y_{i}^{(S)})\} \sim \mathcal D_{XY}^{S}$, $\{(X_{j}^{(T)}, Y_{j}^{(T)})\} \sim \mathcal D_{XY}^{T}$ with sample sizes $N_S, N_T$, when $\beta>0$, and for any $\varepsilon>0$, there exists $N$ such that if $N_S, N_T > N$, then:
\begin{align}
\mathbb P\bigl(&|\hat S_{Cpt} - S_{Cpt}^{\gamma^{*}} - \varDelta|\!>\! \varepsilon\bigr)
\le 2\exp\Bigl(-\frac{N_S\,N_T\,\varPhi\,\varepsilon^{2}}{(N_S + N_T)M^{2}}\Bigr) \notag \\[4pt]
&+\!\exp\Bigl(-\frac{\lambda_S^{1/2}N_S\varPhi\,\varepsilon^{2}}{4\,\lambda_T^{1/2}M^{2}}\Bigr)
\!+\exp\Bigl(-\frac{\lambda_T^{1/2}N_T\,\varPhi\,\varepsilon^{2}}{4\,\lambda_S^{1/2}M^{2}}\Bigr)
\label{eq:cpt-bound}
\end{align}

where $d_{\mathrm{TV}}$ is total-variation distance, $\lambda_S, \lambda_T > 0$ depend only on the squared-exponential moments of $\mathcal D_{X}^{S}, \mathcal D_{X}^{T}$, $\varPhi > 0$ depends on $L_{Y|X}, \lambda_S, \lambda_T, \beta$, the bias $\varDelta$ is a constant.
\end{theorem}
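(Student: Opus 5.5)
We prove the theorem by splitting the error into a sampling part and a coupling-mismatch part. Write $\hat S_{Cpt}=\sum_{i,j}\rho_{\mathcal Y}(Y_i^{(S)},Y_j^{(T)})\hat\gamma^{*}_{ij}$. Introduce the intermediate quantity
\[
\tilde S=\int g(x_S,x_T)\,d\hat\gamma^{*}(x_S,x_T),\qquad g(x_S,x_T)=\mathbb E\bigl[\rho_{\mathcal Y}(Y_S,Y_T)\bigm|x_S,x_T\bigr],
\]
where $Y_S\sim\mathcal D^S_{Y|X=x_S}$ and $Y_T\sim\mathcal D^T_{Y|X=x_T}$ are drawn independently. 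This is the conditional expectation of $\hat S_{Cpt}$ given the covariates, since $\hat\gamma^{*}$ depends only on the $X$'s. The population counterpart is $\bar S=\int g\,d\gamma^{*}$. In general $g(x_S,x_T)\ge S_{pair}(x_S,x_T)$, because the independent coupling is not the optimal $W_1$ coupling. We therefore identify the constant bias as $\varDelta=\bar S-S_{Cpt}^{\gamma^{*}}=\mathbb E_{\gamma^{*}}[g-S_{pair}]$, which does not depend on the samples. This gives
\[
\hat S_{Cpt}-S^{\gamma^{*}}_{Cpt}-\varDelta=(\hat S_{Cpt}-\tilde S)+(\tilde S-\bar S),
\]
and it suffices to bound each bracket by $\varepsilon/2$ with high probability, then combine by a union bound.

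\textbf{Term $\hat S_{Cpt}-\tilde S$.} Condition on all covariates. The labels are then independent, with $Y_i^{(S)}$ entering through a weighted sum in which row $i$ of $\hat\gamma^{*}$ carries total mass $1/N_S$, and similarly each target label carries mass $1/N_T$. Changing one $Y_i^{(S)}$ therefore moves $\hat S_{Cpt}$ by at most $M/N_S$, and changing one $Y_j^{(T)}$ moves it by at most $M/N_T$. McDiarmid's inequality then gives a two-sided bound
\[
2\exp\bigl(-2t^2/(M^2(1/N_S+1/N_T))\bigr)=2\exp\bigl(-2N_SN_Tt^2/((N_S+N_T)M^2)\bigr).
\]
This yields the first term of \eqref{eq:cpt-bound} after absorbing constants into $\varPhi$.

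\textbf{Term $\tilde S-\bar S$.} First show that $g$ is Lipschitz on $\mathcal X\times\mathcal X$ with respect to the sum metric $\rho$ of Lemma~\ref{lem:stability-of-entropic-optimal-transport-coupling}. For bounded $f$ with oscillation at most $M$, $|\mathbb E_P f-\mathbb E_Q f|\le M\,d_{TV}(P,Q)$. Applying this in $x_S$ (with $x_T$ fixed) and then in $x_T$, and using the $L_{Y|X}$ total-variation Lipschitz assumption, gives
\[
|g(x_S,x_T)-g(x_S',x_T')|\le ML_{Y|X}\,\rho\bigl((x_S,x_T),(x_S',x_T')\bigr).
\]
Kantorovich--Rubinstein duality then yields $|\tilde S-\bar S|\le ML_{Y|X}W_1(\gamma^{*},\hat\gamma^{*})$. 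By Lemma~\ref{lem:stability-of-entropic-optimal-transport-coupling},
\[
W_1(\gamma^{*},\hat\gamma^{*})\le\varLambda+2\sqrt{2/(\beta\lambda_{\gamma^{*}})}\sqrt{\varLambda}.
\]
For small $\varLambda$ the square-root term dominates, so the event $|\tilde S-\bar S|>\varepsilon/2$ forces $\varLambda\gtrsim c\,\beta\lambda_{\gamma^{*}}\varepsilon^2/(M^2L_{Y|X}^2)$. Split $\varLambda$ into its two pieces, allotting thresholds to $W_1(\mathcal D_X^S,\widehat{\mathcal D_X^S})$ and $W_1(\mathcal D_X^T,\widehat{\mathcal D_X^T})$ in the ratio $\lambda_T^{1/2}:\lambda_S^{1/2}$; this choice accounts for the asymmetric factors in the last two exponentials. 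Then apply the transport-entropy concentration used for Theorem~\ref{thm:concentration-debiased}. Under finite squared-exponential moments, $W_1(\mu,\hat\mu_N)$ concentrates as $\mathbb P(W_1(\mu,\hat\mu_N)>\mathbb EW_1+t)\le\exp(-\lambda Nt^2)$ (Bolley--Guillin--Villani). Taking $N$ large enough that $\mathbb E W_1(\mu,\hat\mu_N)$ is below half of each threshold gives the remaining two exponentials, with $\varPhi$ collecting $L_{Y|X},\lambda_S,\lambda_T,\beta$.

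\textbf{Main obstacle.} The hardest step is the second term. The stability bound is only of order $\sqrt{\varLambda}$, so $\varepsilon$ enters quadratically through $\varLambda\sim\varepsilon^2$. The tail of $\varLambda$ at threshold $\varepsilon^2$ must then still give $\exp(-N\varepsilon^2)$ rates. Using the Gaussian tail at deviation level $\varepsilon^2$ would give $\varepsilon^4$, so the step that must be handled carefully is the bookkeeping that yields the stated $\varepsilon^2$ exponent. If it resists, I would instead use the $\sqrt{\varLambda}$ form directly in the tail inequality, i.e.\ bound $\mathbb P(\sqrt{W_1}>s)$. The requirement ``$N_S,N_T>N$'' exists to absorb the expectation $\mathbb EW_1$, which decays only like $N^{-1/d}$.
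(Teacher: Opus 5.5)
Your proof follows the paper's route. Your $\tilde S$ is the paper's $\mathbb E_{\bar\gamma^*}[f]$, and $\varDelta=\mathbb E_{\gamma^*}[f-S_{pair}]$ is defined the same way. The paper uses the same McDiarmid step with differences $M/N_S$ and $M/N_T$, the same $ML_{Y|X}$-Lipschitz bound via total variation, Kantorovich--Rubinstein, and Lemma~\ref{lem:stability-of-entropic-optimal-transport-coupling}. Your mean-concentration plus $\mathbb E W_1\to 0$ argument is an equivalent substitute for the paper's direct tail bound, Lemma~\ref{lem:bolley}.

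What you leave open is the last step. Your diagnosis there is correct, but your fallback does not work: $\{\sqrt{W_1}>s\}=\{W_1>s^2\}$, so bounding $\mathbb P(\sqrt{W_1}>s)$ is the same computation and cannot turn $\varepsilon^4$ into $\varepsilon^2$. Through a stability bound of order $\sqrt{\varLambda}$, no bookkeeping produces a constant independent of $\varepsilon$ in front of $N\varepsilon^2$.

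The paper closes the argument by absorbing this into $\varPhi$. It sets $\varepsilon_1=M\varepsilon_0$ and $\varepsilon_2=(\lambda_S\lambda_T)^{-1/4}\varepsilon_0$, and solves $M(c_2\varepsilon_0+\sqrt{c_1\varepsilon_0})=\varepsilon$ exactly, where $c_1,c_2$ are built from $L_{Y|X},\lambda_S,\lambda_T,\beta,\lambda_{\gamma^*}$. It then \emph{defines} $\varPhi$ through $\varepsilon_0^2=\varPhi\varepsilon^2/(2M^2)$. This gives $\varPhi=c_2^{-2}\bigl(a^2+4a+2-(a+2)\sqrt{a^2+4a}\bigr)=8c_2^{-2}\bigl(a+2+\sqrt{a^2+4a}\bigr)^{-2}$ with $a=c_1M/(c_2\varepsilon)$. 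So $\varPhi$ depends on $\varepsilon/M$ (and on $\lambda_{\gamma^*}$) and behaves like $2\varepsilon^2/(c_1^2M^2)$ as $\varepsilon\to0$. The exponent is therefore of order $N\varepsilon^4$ for small $\varepsilon$, exactly as you suspected, hidden in $\varPhi$ the same way $V_\varepsilon$ carries the $\varepsilon$-dependence in Theorem~\ref{thm:concentration-debiased}.

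If $\varPhi$ may depend on $\varepsilon$, your argument finishes at once with this device. Your $\varepsilon/2+\varepsilon/2$ split works after taking $\varPhi$ to be the minimum of the resulting coefficients. If you insist that $\varPhi$ depend only on $L_{Y|X},\lambda_S,\lambda_T,\beta$, as the statement literally says, neither your plan nor the paper's proof delivers it. You would need a stability estimate for the entropic coupling that is linear, not square-root, in $\varLambda$.

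A smaller point concerns the split of thresholds. The asymmetric factors $\lambda_S^{1/2}/\lambda_T^{1/2}$ and $\lambda_T^{1/2}/\lambda_S^{1/2}$ come from giving both marginals the same threshold $\varepsilon_2\propto(\lambda_S\lambda_T)^{-1/4}$. Your $\lambda_T^{1/2}:\lambda_S^{1/2}$ split would instead put the symmetric factor $\lambda_S\lambda_T$ in both exponents. That is harmless only after shrinking $\varPhi$, so it is not what accounts for the stated form.
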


\paragraph{Remark} This theorem implies that the deviation probability of the $\gamma^{*}$-Y$\mid$X Shift estimator $\hat S_{Cpt}$ decays exponentially with sample sizes, so with high probability, the estimator can well approximate the $S_{Cpt}^{\gamma^{*}}\!+\!\varDelta$. The bias term $\varDelta$ arises because Definition~\ref{def:estimator-total-pair-yx} uses $\rho_{\mathcal Y}\!\bigl(Y_{i}^{(S)}, Y_{j}^{(T)}\bigr)$ as a single-point estimate of $S_{pair}\bigl(X_{i}^{(S)}, X_{j}^{(T)}\bigr)$ in Definition~\ref{def:total-pair-yx-shift}. The following two propositions show that the bias $\varDelta$ is controlled.

\begin{proposition}[$\varDelta$ in Deterministic Labeling]
\label{pro:varDelta-in-deterministic-labeling}
    Assume deterministic labeling $\mathcal D_{Y|X=x}^{S} = \delta_{f_S(x)}$, $\mathcal D_{Y|X=x}^{T} = \delta_{f_T(x)}$, then the $\varDelta$ in Theorem~\ref{thm:concentration-total-pair} satisfies:
\begin{equation}
    \varDelta=0
\end{equation}
\end{proposition}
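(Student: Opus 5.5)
Since $\varDelta$ is not given explicitly in the statement of Theorem~\ref{thm:concentration-total-pair}, I will first pin it down from the structure of that theorem. The estimator $\hat S_{Cpt}$ averages the random quantity $\rho_{\mathcal Y}(Y_i^{(S)},Y_j^{(T)})$ against the weights $\hat\gamma^{*}_{ij}$. The natural centering, which the concentration bounds control, is the population analogue
\[
\mathbb E_{(x_S,x_T)\sim\gamma^{*}}\Bigl[\mathbb E_{y_S\sim\mathcal D^{S}_{Y|X=x_S},\,y_T\sim\mathcal D^{T}_{Y|X=x_T}}\bigl[\rho_{\mathcal Y}(y_S,y_T)\bigr]\Bigr],
\]
where $y_S$ and $y_T$ are drawn independently from the two conditionals, reflecting the independence of the source and target samples. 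So I would identify
\[
\varDelta=\int\Bigl(\mathbb E_{\mathcal D^{S}_{Y|X=x_S}\otimes\mathcal D^{T}_{Y|X=x_T}}\bigl[\rho_{\mathcal Y}\bigr]-W_1\bigl(\mathcal D^{S}_{Y|X=x_S},\mathcal D^{T}_{Y|X=x_T}\bigr)\Bigr)\,d\gamma^{*}(x_S,x_T).
\]
This is the gap between the cost of the independent (product) coupling of the two conditionals and their optimal $W_1$ cost, averaged under $\gamma^{*}$. It is nonnegative because the product coupling is feasible in the infimum defining $W_1$.

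With this expression in hand, the proposition becomes a pointwise computation. Under deterministic labeling both conditionals are Dirac masses, $\delta_{f_S(x_S)}$ and $\delta_{f_T(x_T)}$. The only coupling of two Dirac masses is $\delta_{(f_S(x_S),f_T(x_T))}$, so $\Gamma(\delta_{a},\delta_{b})$ is a singleton. The product coupling therefore equals the optimal coupling, and both costs equal $\rho_{\mathcal Y}(f_S(x_S),f_T(x_T))$. This matches Corollary~\ref{cor:consistency-under-deterministic-labeling}. The integrand thus vanishes for every pair $(x_S,x_T)$, and integrating against $\gamma^{*}$ gives $\varDelta=0$.

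I would add two remarks for rigor. First, measurability of $x\mapsto f(x)$ makes the integrand measurable, and boundedness of $\mathcal Y$ by $M$ makes both integrals finite, so the subtraction is legitimate. Second, the pointwise vanishing holds everywhere, so no support or null-set issues arise; Lemma~\ref{lem:support-of-gamma} guarantees that $\gamma^{*}$ only charges points where $f_S$ and $f_T$ are evaluated on their own supports.

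The main obstacle is the first step: justifying that the $\varDelta$ arising in the proof of Theorem~\ref{thm:concentration-total-pair} is exactly this product-minus-optimal gap. Concretely, one must show that, conditional on the covariates, $\mathbb E[\hat S_{Cpt}\mid X]=\sum_{ij}\hat\gamma^{*}_{ij}\,\mathbb E[\rho_{\mathcal Y}(Y_i^{(S)},Y_j^{(T)})\mid X_i^{(S)},X_j^{(T)}]$. This holds because $\hat\gamma^{*}$ depends only on the covariates. The stability result, Lemma~\ref{lem:stability-of-entropic-optimal-transport-coupling}, together with the TV-Lipschitz assumption on the conditionals, then transfers this to the $\gamma^{*}$-integral above. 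Once that identification is accepted, the deterministic case is immediate.
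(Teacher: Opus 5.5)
Your proposal is correct and matches the paper's proof: the paper takes $\varDelta=\mathbb E_{\gamma^*}[f-S_{pair}]$, where $f$ is the expected $\rho_{\mathcal Y}$-cost under the product of the two conditionals, and then observes that for Dirac conditionals both $f$ and $S_{pair}$ equal $\rho_{\mathcal Y}(f_S(x_S),f_T(x_T))$ pointwise, so $\varDelta=0$. The step you call the main obstacle is not one: in Step 1 of the proof of Theorem~\ref{thm:concentration-total-pair}, $\varDelta$ is introduced by exactly this definition, so you need neither the stability lemma nor the TV-Lipschitz condition here.
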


\paragraph{Remark} This proposition shows that under deterministic labeling, the estimator $\hat S_{Cpt}$ accurately estimates the Y$\mid$X shift $S_{Cpt}^{\gamma^{*}}$ itself. For stochastic labeling, we show that $\varDelta$ can be bounded by the irreducible error \citep{james2013introduction} (also known as the Bayes risk \citep{berger2013statistical}) in traditional statistical learning.

\begin{proposition}[$\varDelta$ in Stochastic Labeling]
\label{pro:varDelta-in-stochastic-labeling}
When $(\mathcal Y,\rho_{\mathcal Y}) = (\mathbb R^{d'}, \|\cdot\|)$, the irreducible error of joint distribution $\mathcal D_{XY}$ under squared loss $\|\cdot\|^2$ is defined as: $\mathrm I(\mathcal D_{XY})
= \inf_{g:\mathcal X\to\mathcal Y}
  \mathbb E_{(x,y)\sim\mathcal D_{XY}}\bigl[\|y-g(x)\|^{2}\bigr]$, then the $\varDelta$ in Theorem~\ref{thm:concentration-total-pair} satisfies:
\begin{equation}
0 \;\le\; \varDelta 
\;\le\; \sqrt{\mathrm I(\mathcal D_{XY}^{S})} \;+\;\sqrt{\mathrm I(\mathcal D_{XY}^{T})}
\end{equation}
\end{proposition}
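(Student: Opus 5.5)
The plan is to first pin down what $\varDelta$ is, then get the lower bound from the definition of $W_1$ and the upper bound by comparing each conditional distribution with its conditional mean. The estimator $\hat S_{Cpt}$ in Definition~\ref{def:estimator-total-pair-yx} averages $\rho_{\mathcal Y}(Y_i^{(S)},Y_j^{(T)})$ under the empirical coupling. Here $Y_i^{(S)}$ and $Y_j^{(T)}$ are drawn independently from $\mathcal D^S_{Y|X=X_i^{(S)}}$ and $\mathcal D^T_{Y|X=X_j^{(T)}}$. Its population target, around which Theorem~\ref{thm:concentration-total-pair} concentrates, should therefore be
\[
\int \mathbb E_{y_S\sim\mathcal D^S_{Y|X=x_S},\,y_T\sim\mathcal D^T_{Y|X=x_T}}\bigl[\|y_S-y_T\|\bigr]\,d\gamma^{*}(x_S,x_T).
\]
This means $\varDelta=\int \bigl(E_{\mathrm{ind}}(x_S,x_T)-S_{pair}(x_S,x_T)\bigr)\,d\gamma^{*}$, where $E_{\mathrm{ind}}$ is the expected distance under the product (independent) coupling of the two conditionals. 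I will take this identification from the construction in the proof of Theorem~\ref{thm:concentration-total-pair}.

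\emph{Lower bound.} The product measure $\mathcal D^S_{Y|X=x_S}\otimes\mathcal D^T_{Y|X=x_T}$ is one admissible coupling in the infimum defining $W_1$. Hence $E_{\mathrm{ind}}(x_S,x_T)\ge S_{pair}(x_S,x_T)$ pointwise, and integrating against $\gamma^{*}$ gives $\varDelta\ge0$.

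\emph{Upper bound.} Let $m_S(x)=\mathbb E[Y\mid X=x]$ under the source and $m_T(x)$ the analogous target conditional mean. If either irreducible error is infinite the bound is trivial, so assume finite second moments, which makes $m_S,m_T$ well defined. Two pointwise estimates are needed.
\begin{enumerate}[leftmargin=1.6em, labelsep=0.5em]
\item By the triangle inequality inside the independent expectation,
\[
E_{\mathrm{ind}}(x_S,x_T)\le \mathbb E\|y_S-m_S(x_S)\|+\|m_S(x_S)-m_T(x_T)\|+\mathbb E\|y_T-m_T(x_T)\|.
\]
\item By Kantorovich--Rubinstein duality, take the $1$-Lipschitz test function $y\mapsto\langle u,y\rangle$ with the unit vector $u$ in the direction of $m_S(x_S)-m_T(x_T)$. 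This gives $W_1(\mathcal D^S_{Y|X=x_S},\mathcal D^T_{Y|X=x_T})\ge\|m_S(x_S)-m_T(x_T)\|$.
\end{enumerate}
Subtracting the second estimate from the first cancels the mean-difference term. What remains is $\mathbb E[\|y_S-m_S(x_S)\|\mid x_S]+\mathbb E[\|y_T-m_T(x_T)\|\mid x_T]$.

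Each of these two terms depends on only one coordinate. Integrating against $\gamma^{*}$ therefore reduces to integrating against its marginals $\mathcal D_X^S$ and $\mathcal D_X^T$, giving
\[
\varDelta\le\mathbb E_{\mathcal D^S_{XY}}\|Y-m_S(X)\|+\mathbb E_{\mathcal D^T_{XY}}\|Y-m_T(X)\|.
\]
Jensen's inequality turns each term into at most $\sqrt{\mathbb E\|Y-m(X)\|^2}$. The conditional mean is the $L^2$ minimizer over measurable $g$, so this equals $\sqrt{\mathrm I(\mathcal D_{XY})}$, which yields the claimed bound.

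The main obstacle is the first step, justifying rigorously that $\varDelta$ is exactly this integrated gap between the independent-coupling cost and $W_1$. This depends on how the proof of Theorem~\ref{thm:concentration-total-pair} centers the estimator. The remaining steps are elementary, apart from measurability of the conditional-mean maps, which holds for a regular conditional distribution.
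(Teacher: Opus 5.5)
Your proposal is correct and matches the paper's proof essentially step for step. Both use the same $\varDelta=\mathbb E_{\gamma^*}[f-S_{pair}]$; this is exactly how $\varDelta$ is defined in Step~1 of the proof of Theorem~\ref{thm:concentration-total-pair}, so your ``main obstacle'' is already settled. Both then use the product-coupling lower bound, the triangle inequality through the conditional means, the bound $W_1\ge\|m_S-m_T\|$, marginalization over $\gamma^*$, and Jensen plus $L^2$-optimality of the conditional mean. The differences are only cosmetic: you get $W_1\ge\|m_S-m_T\|$ from a linear $1$-Lipschitz test function rather than from Jensen applied to an arbitrary coupling, and you apply $\mathbb E[Z]\le\sqrt{\mathbb E[Z^2]}$ once on the joint law, whereas the paper applies it conditionally and then uses concavity of the square root.
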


\paragraph{Remark} The irreducible error is the fundamental error inherent to stochastic labeling that no model can overcome. When the problem is learnable, covariate and label are often well correlated, and the irreducible error is small relative to the overall label variability. In this case, Proposition~\ref{pro:varDelta-in-stochastic-labeling} guarantees that the estimator $\hat S_{Cpt}$ does not substantially overestimate the Y$\mid$X shift $S_{Cpt}^{\gamma^{*}}$.

\subsection{DataShifts Algorithm}
\paragraph{On the Lipschitz Constant of Learners.} The Lipschitz constant of the learner have been well studied, such as logistic regression \citep{roux2012stochastic}, multi-layer perceptron (MLP) \citep{fazlyab2019efficient}, convolutional neural networks (CNN) \citep{virmaux2018lipschitz,zou2019lipschitz} and attention mechanism \citep{kim2021lipschitz,castin2023smooth}. Although the Lipschitz constants of modern large-scale neural networks such as ResNet-50 or Transformers remain difficult to analyze, researchers are more concerned with whether these models learn distribution-robust representations \citep{hendrycks2020pretrained}, rather than distribution shift in the raw input space. In this setting, the covariate space $\mathcal X$ is the representation space, and the downstream model (the hypothesis $h$ in our theory) is often a simple learner—such as a linear classifier—whose Lipschitz constant is still easy to handle. We summarize the Lipschitz constants of various learners in Appendix~\ref{app:lipschitz}.

\begin{algorithm}
   \caption{DataShifts}
   \label{alg:datashifts}
\begin{algorithmic}
   \STATE {\bfseries Input:} hyperparameter $\beta$ (default $0.01$),
   \STATE \quad \quad \quad samples $\{{(X_{i}^{(S)},Y_{i}^{(S)})}\}$, $\{{(X_{j}^{(T)},Y_{j}^{(T)})}\}$,
   \STATE \quad \quad \quad Lipschitz constants $L_{\ell},L_{\ell}^{\prime},L_{h}$ (optional),
   \STATE \quad \quad \quad source domain empirical error $\hat \epsilon_{S}$ (optional)
   \STATE {\bfseries Do:}
   \STATE  Estimate X shift by \ref{def:debiased-estimator} as $\hat S_{Cov}$
   \STATE Estimate $\gamma^{*}$-Y$\mid$X shift by \ref{def:estimator-total-pair-yx} as $\hat S_{Cpt}$
   \IF{$L_{\ell},L_{\ell}^{\prime},L_{h}$, and $\hat \epsilon_{S}$ are provided}
   \STATE  Estimate bound: $B=\hat \epsilon_{S}+L_{h}L_{\ell}^{\prime}\hat S_{Cov}+L_{\ell}\hat S_{Cpt}$
   \ENDIF
   \STATE {\bfseries Return:} $\hat S_{Cov}$, $\hat S_{Cpt}$ and $B$ (optional)
\end{algorithmic}
\end{algorithm}

\begin{figure*}[!t]
  \centering
  \subfigure[Novozymes]{%
    \includegraphics[height=0.323\textwidth]{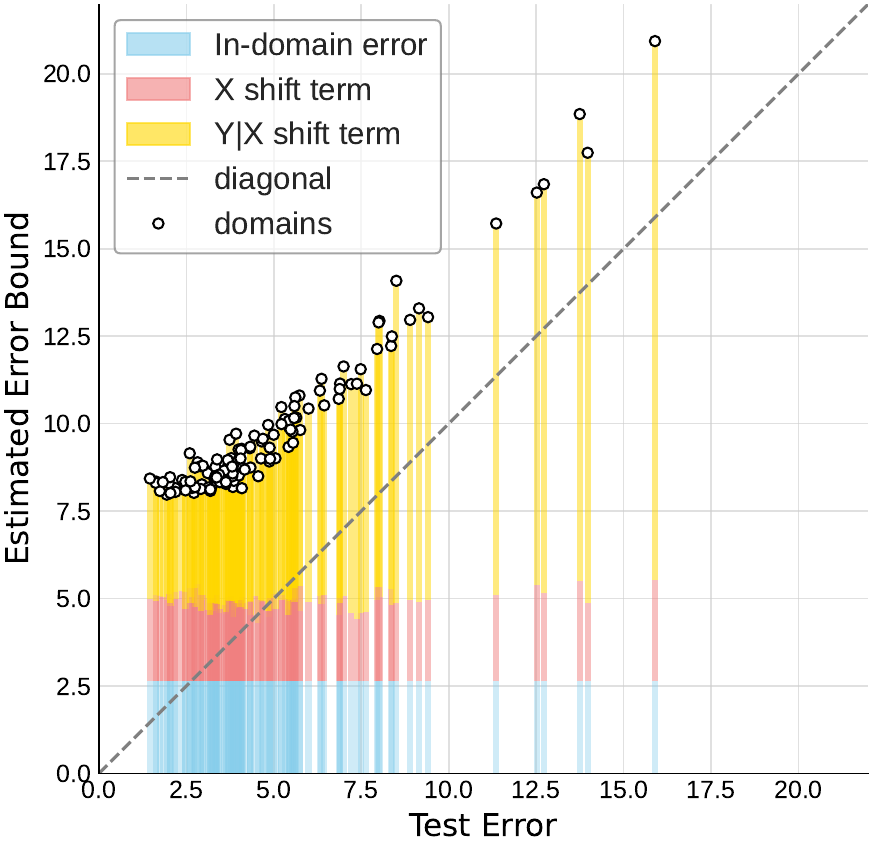}
    \label{fig:novo}%
  }\hfill
  \subfigure[ColoredMNIST]{%
    \includegraphics[height=0.323\textwidth]{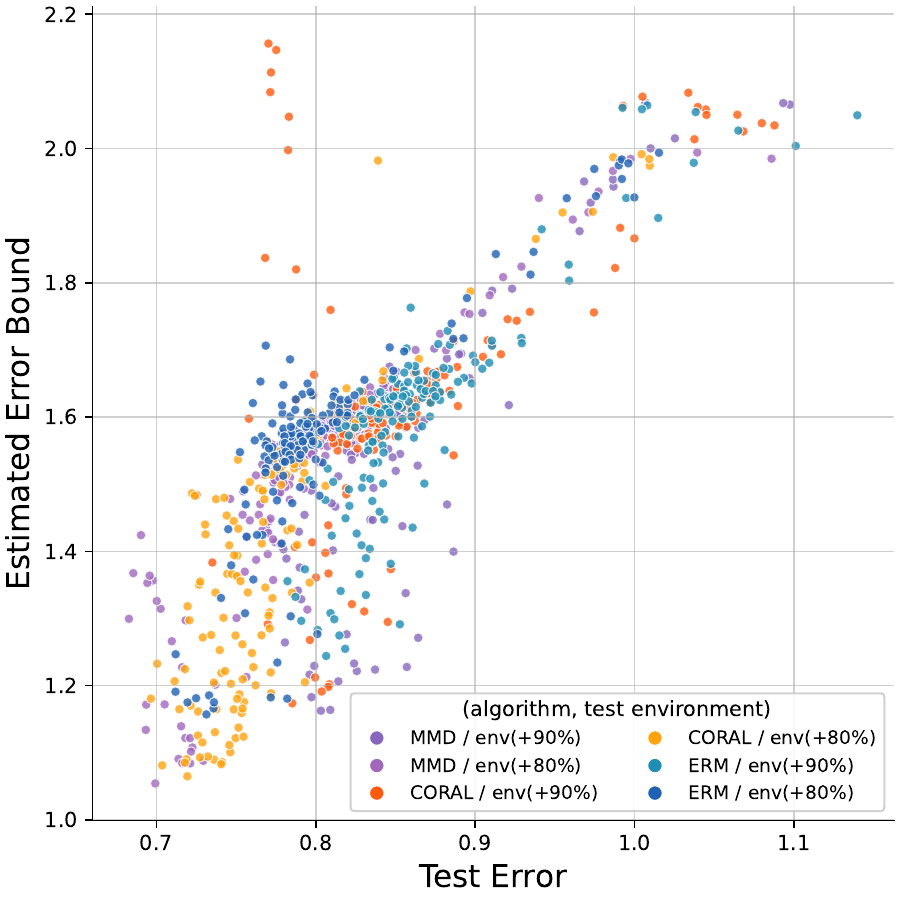}
    \label{fig:coloredmnist}%
  }\hfill
  \subfigure[PACS]{%
    \includegraphics[height=0.323\textwidth]{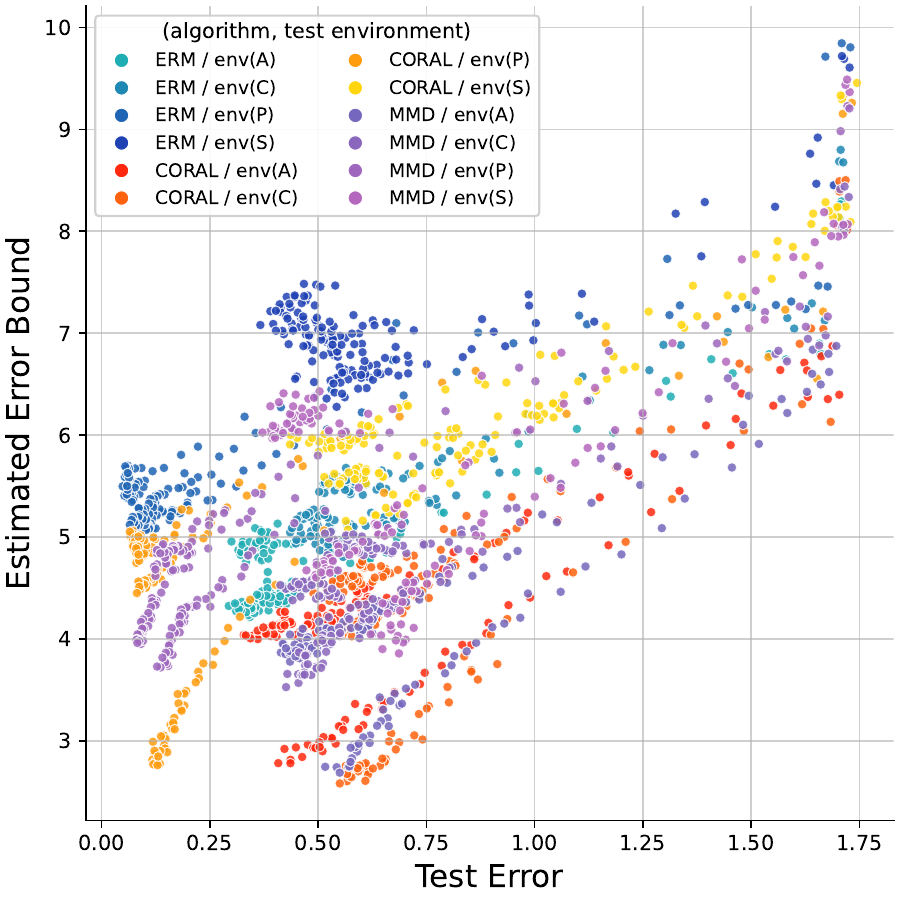}
    \label{fig:pacs}%
  }
  \caption{ (a)–(c) show that the estimated error bounds track the test error well across three distinct tasks, corroborating the general effectiveness of our learning bound and estimators.}
  \label{fig:combined2}
\end{figure*}

By leveraging the theoretical results above, we give the plug-and-play Algorithm~\ref{alg:datashifts} (DataShifts) for quantifying X and Y$\mid$X shifts from finite samples and estimating error bounds.

\section{Experiments}
\label{sec:experiments}
In this section, we apply our estimable general theory to three distinct practical tasks: tabular regression, image binary classification, and image multi-class classification -- to validate the general effectiveness of our bound and estimators. We also conduct experiments on synthetic tasks where existing theory apply, demonstrating our bound is tighter.

\subsection{Novozymes Enzyme Stability Prediction}
The Novozymes Enzyme Prediction Competition \citep{novozymes-enzyme-stability-prediction} is a large‐scale Kaggle contest. It is a tabular regression task, with 9,000 point-mutation samples spanning 180 enzyme families, aiming to predict transition temperatures for unseen families. Each enzyme family is treated as a separate domain; due to distribution shifts between them, thousands of participants found it difficult to develop any effective solution. 

We select the 60 enzyme families with the smallest pretraining error as the source domain, and treat each of the remaining 120 enzyme families as a target domain. In this task, we focus on the distribution shift between the raw data of different enzyme families, taking the 20-dimensional input feature space as covariate space $\mathcal{X}$. We train a 3-layer MLP on the source domain; an analysis of its Lipschitz constant is provided in the Appendix~\ref{app:lipschitz-mlp}. We use the absolute loss (separately $(1,1)$-Lipschitz) to evaluate the source domain error and the test error on each target domain, and apply our DataShifts algorithm($\beta=0.2$) to estimate an error bound for each target domain.

We plot the test error and the estimated error bound on each target domain in Fig.~\ref{fig:novo}. In this figure, the overall trend of the test error and the error bound lies just above the diagonal, indicating that our bound is tight and effectively captures the test error under distribution shift. Meanwhile, it directly shows the contributions of X and Y$\mid$X shifts on the error bound. The large Y$\mid$X shift across enzyme families is what drives the generalization failure in this contest.

\begin{figure*}[!t]
  \centering
  \subfigure[Error vs. X shift]{%
    \includegraphics[height=0.245\textwidth]{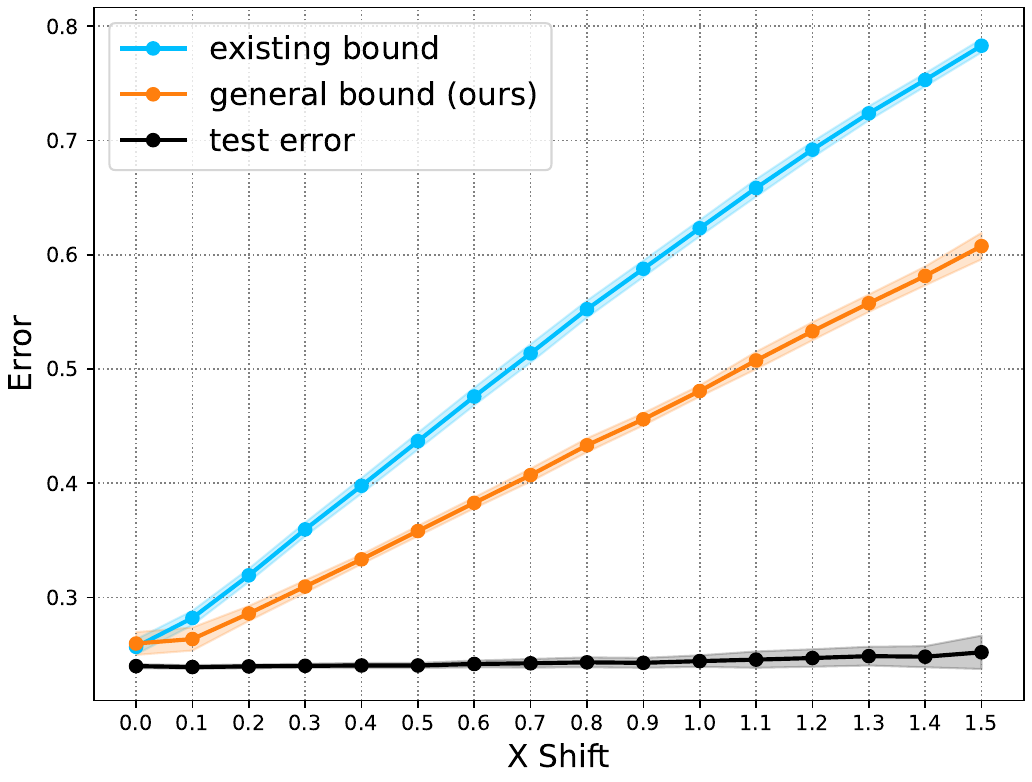}
    \label{fig:3a}%
  }\hfill
  \subfigure[Error vs. Y$\mid$X shift ($\theta$)]{%
    \includegraphics[height=0.245\textwidth]{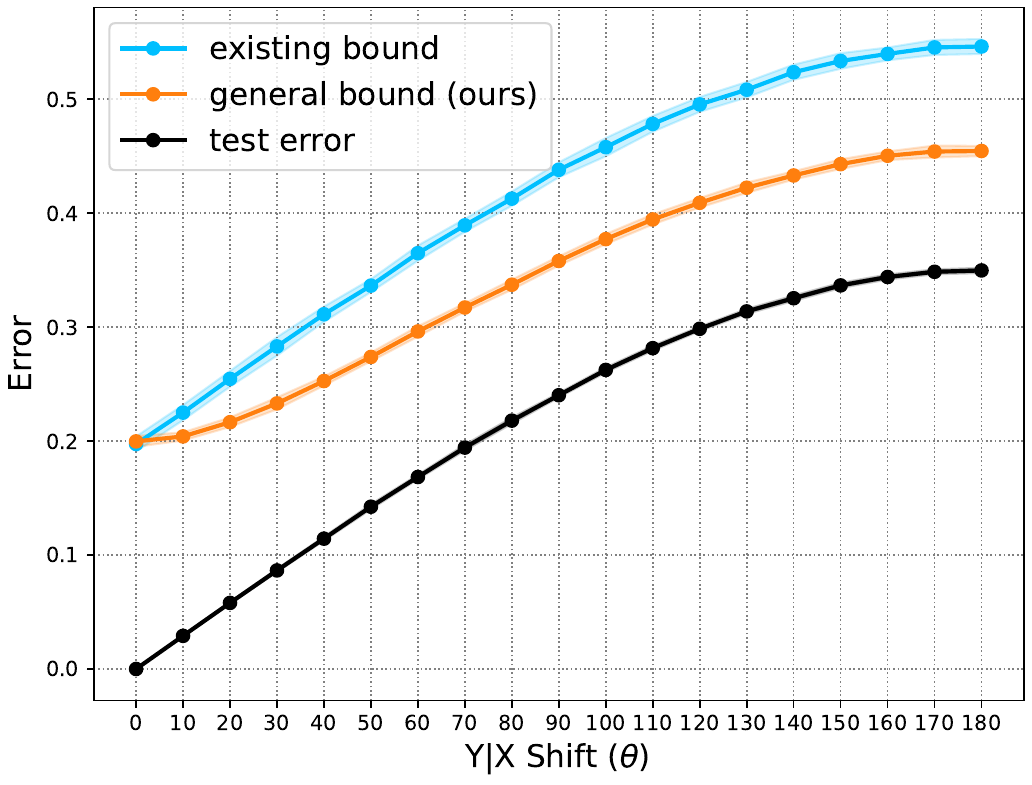}
    \label{fig:3b}%
  }\hfill
  \subfigure[Error vs. Y$\mid$X shift ($b$)]{%
    \includegraphics[height=0.245\textwidth]{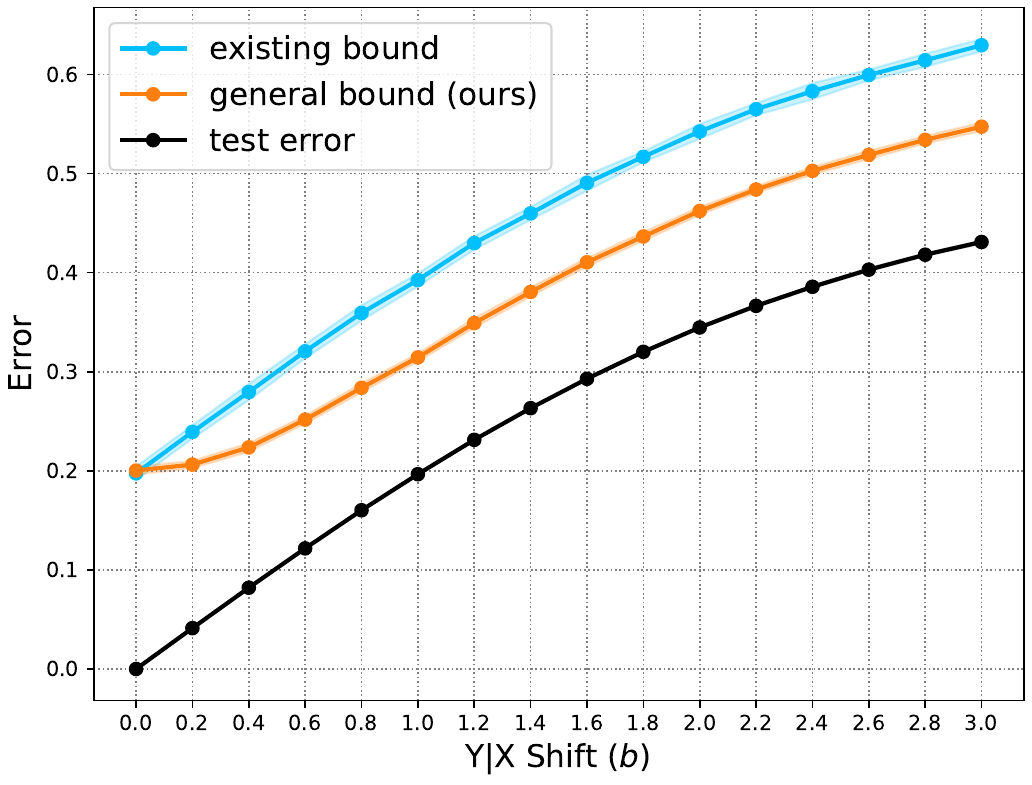}
    \label{fig:3c}%
  }
  \caption{ (a)–(c) respectively show the relationship between the estimated learning bounds and the X or Y$\mid$X shift in synthetic binary classification tasks. As these shifts increase, our bound becomes significantly tighter than the existing bound.}
  \label{fig:comparison}
\end{figure*}

\subsection{ColoredMNIST and PACS}
ColoredMNIST and PACS are two standard tasks in the DomainBed benchmark\citep{gulrajani2020search}. ColoredMNIST is a binary classification task with 70,000 handwritten digit images exhibiting color shift, while PACS is a multi-class object recognition task with 9,991 images exhibiting style shift. For both tasks, we treat the model’s representation space as the covariate space $\mathcal{X}$. Following the DomainBed setup, we use the simple CNN with a 128-dimensional representation for ColoredMNIST, and ResNet-50 with a 2048-dimensional representation for PACS. In this setting, the hypothesis $h$ in our theory corresponds to the model’s final layer (a linear classifier), whose Lipschitz constant is analyzed in the Appendix~\ref{app:lipschitz-linear-classifier}. 

We evaluate three methods: ERM, CORAL, and MMD. Adhering to DomainBed, we select the best hyperparameters via training-domain validation over 20 random hyperparameter trials for each method, domain, and trial. For ColoredMNIST, we train each best-hyperparameter run for 5,000 steps and save checkpoints every 100 steps. For PACS, since the model converges earlier, we train for 1,000 steps and save checkpoints every 20 steps. At each checkpoint, we regard the mixture distribution over training domains as the source domain and the test domain as the target domain. We still use the absolute loss to measure source and target errors, and run our DataShifts algorithm ($\beta=0.2$) using each domain’s representation-label pairs to estimate the test domain error bound at every checkpoint.

We plot the test error and the estimated error bound for both tasks in Fig.~\ref{fig:coloredmnist} and~\ref{fig:pacs}. In Fig.~\ref{fig:coloredmnist}, the bound tracks the test error well across checkpoints. The MMD (purple) and CORAL (yellow) points lie closer to the lower-left region, indicating that these X shift-reducing methods indeed yield smaller bounds, and consequently lower error on ColoredMNIST. In Fig.~\ref{fig:pacs}, although the bound becomes looser in magnitude when applied to PACS, a more complex image classification task, it still exhibits a consistent trend with the test error in each run.

\subsection{Synthetic Binary Classification}
We compare our theory with the existing bound in \citet{zhao2019learning}, which is shown to be tighter than previous learning bounds. As discussed in Remark\hyperref[remark:weakness]{*}, the existing learning bound only applies to soft-label binary classification with deterministic labeling and absolute loss. Since its estimation requires sampling the true concepts $f_S$ and $f_T$ outside the supports of the covariate distributions $\mathcal D_{X}^{S}$, $\mathcal D_{X}^{T}$ (oracle), it can only be estimated on synthetic tasks where the true concepts are known. 

We construct a synthetic binary classification task using logistic regression. Let the covariate space be 10-dimensional, and the source inputs are sampled from standard normal distribution: $\mathcal D^S_X=\mathcal{N}(\mathbf{0},I)$, with labels generated by $f_S(x)=\sigma(w_S^\top x+b_S)$, where $w_S=\mathbf 1/\sqrt{10}$, $b_S=0$ and $\sigma$ is the sigmoid function. The target inputs are generated by shifting $\mathcal D^S_X$ along a random direction, thereby controlling the X shift. And the target labels are generated by another logistic regression: $f_T(x)=\sigma(w_T^\top x+b_T)$, where $w_T$ is obtained by rotating $w_S$ by angle $\theta$ toward a random direction. $\theta$ and $b_T$ further control the Y$\mid$X shift. By varying the X shift, $\theta$, and $b_T$, we obtain a series of target domains. We also use logistic regression as the learner and train it on the source domain. Its Lipschitz constant is analyzed in Appendix~\ref{app:lipschitz-logistic-regression}. For each target domain, we estimate the test error, the existing and our bounds under the absolute loss.

We plot the learning bounds with respect to the X shift and the Y$\mid$X shifts ($\theta$ and $b_T$) in Fig.~\ref{fig:comparison}. In Fig.~\ref{fig:3a}, since both the source concept and the learner are logistic regression models, the learner can fit the source concept well, and the X shift in the target domain has only a minor effect on the test error. As the X shift increases, our bound becomes tighter than the existing bound. In Figs.~\ref{fig:3b} and~\ref{fig:3c}, as the Y$\mid$X shift increases, the existing bound becomes looser than ours. This verifies our point in Remark\hyperref[remark:weakness]{*}: the ill-defined Y$\mid$X shift in the existing bound is loose, whereas our $\gamma^{*}$-Y$\mid$X shift and the accompanying learning bound can be tighter.

In addition to the above results, sensitivity experiments of the proposed estimators with respect to the parameter $\beta$ are provided in Appendix~\ref{app:sensitivity}. Experiments on the bias of the $\gamma^{*}$-Y$\mid$X shift estimator under stochastic labeling are presented in Appendix~\ref{app:overest}.

\section{Conclusion}
\label{sec:conclusion}

In this paper, we focus on a general and estimable theoretical framework for learning under distribution shift. We first introduce a key notion, $\gamma^{*}$-Y$\mid$X shift, via entropic optimal transport, which addresses the ill-definedness in existing theory. Then we derive a general learning bound unifying X shift and $\gamma^{*}$-Y$\mid$X shift. We further develop concentration-guaranteed estimators for both shifts, and integrate our theory into the plug-and-play DataShifts algorithm, enabling researchers to quantify and analyze distribution shift in broad settings. Experiments on practical and synthetic tasks validate the general effectiveness and tightness of our theory. We believe our theoretical framework takes an important step toward learning under distribution shift and will spur further algorithmic advances.

\section*{Acknowledgements}
We also thank Dr. Jie Ren for suggestions and feedback on this work. This study was funded by the National Natural Science Foundation of China (12571529) and Guangdong Basic and Applied Basic Research Foundation (2024A1515-010699) to LCX.

\section*{Impact Statement}

This paper presents work whose goal is to advance the field of Machine
Learning. There are many potential societal consequences of our work, none
which we feel must be specifically highlighted here.


\bibliography{example_paper}
\bibliographystyle{icml2026}

\newpage
\appendix
\onecolumn


\section{Related Work}
\label{sec:related}
Generalization theory under distribution shift is an important and long-standing problem. Researchers usually seek to measure distribution shift using some distribution divergence and derive corresponding learning bounds. Early theoretical works mainly focused on the X (covariate) shift. \citet{ben2006analysis,ben2010theory} used the $\mathcal H$-divergence to measure X shift between the source and target domains for binary classification, thereby deriving learning bounds for domain adaptation. Subsequent studies characterized the X shift via maximum mean discrepancy (MMD), leading to new learning bounds and improved domain adaptation algorithms \citep{long2015learning}. \citet{redko2017theoretical,courty2017joint,shen2018wasserstein} adopted optimal transport (OT) to characterize X shift and obtained similar bounds. Further studies proposed more complex metrics for the X shift to extend the theory to multiclass classification \citep{zhang2019bridging,zhang2020unsupervised,el2025theoretical}.

Most of these theories focus only on the X shift and follow a similar proof strategy, resulting in a joint error term between the source and target domains, $\lambda=\min_{h}\epsilon_S(h)+\epsilon_T(h)$, which is widely recognized as loose and non-estimable. Our baseline, \citet{zhao2019learning}, showed that the Y$\mid$X (concept) shift is implicitly contained in this joint error term $\lambda$, and derived a tighter learning bound by explicitly introducing the Y$\mid$X shift instead. \citet{zhang2023nico} extended this theory to multiclass classification. We follow this path of explicitly characterizing the Y$\mid$X shift, and point out that the existing definition of the Y$\mid$X shift becomes ill-defined when the supports of the covariate distributions are mismatched, which still makes the Y$\mid$X shift loose and non-estimable. Through entropic optimal transport, we introduce a well-defined $\gamma^{*}$-Y$\mid$X shift and the corresponding learning bound, which can be rigorously estimated from samples. Our theory can be tighter than existing bounds and further generalizes to a broader tasks, losses, and stochastic labeling.

\newcolumntype{C}[1]{>{\centering\arraybackslash}m{#1}}

\begin{table*}[h]
\centering
\caption{Comparison of learning bounds for distribution shift.}
\label{tab:bound-comparison}
\small
\setlength{\tabcolsep}{4pt}
\renewcommand{\arraystretch}{1.18}
\begin{threeparttable}
\begin{tabular}{@{}C{3.5cm}C{2.0cm}C{1.10cm}C{2.40cm}C{3.00cm}C{0.95cm}C{1.15cm}C{0.95cm}@{}}
\toprule
\multirow{2}{*}{\makecell{Bound}}
& \multirow{2}{*}{\makecell{Divergence}}
& \multicolumn{3}{c}{Terms}
& \multicolumn{3}{c}{Properties} \\
\cmidrule(lr){3-5} \cmidrule(l){6-8}
& & X shift & Y$\mid$X shift & Remaining Term
& Tight & Estimable & General \\
\midrule
\makecell{\citet{ben2006analysis}}
& $\mathcal H$-divergence
& \checkmark
& 
& joint-error $\lambda$
& 
& 
& \\

\makecell{\citet{ben2010theory}}
& $\mathcal H$-divergence
& \checkmark
& 
& joint-error $\lambda$
& 
& 
& \\

\makecell{\citet{long2015learning}}
& MMD
& \checkmark
& 
& joint-error $\lambda$
& 
& 
& \\

\makecell{\citet{redko2017theoretical}}
& OT
& \checkmark
& 
& joint-error $\lambda$
& 
& 
& \\

\makecell{\citet{courty2017joint}}
& OT
& \multicolumn{2}{>{\centering\arraybackslash}m{4.0cm}}{estimated XY shift}
& \makecell{joint-error $\lambda$, $kM\Phi$}
& 
& 
& \checkmark \\

\makecell{\citet{shen2018wasserstein}}
& OT
& \checkmark
& 
& joint-error $\lambda$
& 
& 
& \\

\makecell{\citet{zhao2019learning}}
& $\mathcal H$-divergence
& \checkmark
& \makecell{\checkmark (ill-defined)}
& 
& \checkmark
& 
& \\

\makecell{\citet{zhang2023nico}}
& $\mathcal H$-divergence
& \checkmark
& \makecell{\checkmark (ill-defined)}
& 
& \checkmark
& 
& \checkmark \\

\makecell{\citet{el2025theoretical}}
& hierarchical OT
& \checkmark
& 
& joint-error $\lambda$
& 
& 
& \\

\textbf{Ours}
& entropic OT
& \checkmark
& \checkmark
& 
& \checkmark\checkmark
& \checkmark
& \checkmark\checkmark \\
\bottomrule
\end{tabular}
\begin{tablenotes}
\footnotesize
\item \checkmark\checkmark indicates a stronger degree than \checkmark.
\end{tablenotes}
\end{threeparttable}
\end{table*}

Distribution shift theory often serves as a theoretical framework for domain adaptation and domain generalization, where many algorithms have been developed \citep{sugiyama2007covariate,glorot2011domain,sun2016deep,ganin2016domain,pei2018multi,arjovsky2019invariant,krueger2021out,rame2022fishr}. However, the DomainBed benchmark \citep{gulrajani2020search} shows that many algorithms do not outperform standard empirical risk minimization. This calls for tighter and more general theoretical frameworks to explain the generalization failures of these algorithms. In addition, the proposed $\gamma^{*}$-Y$\mid$X shift takes a nested OT form, whose other mathematical properties have also been studied recently, such as the gradient flow of the Wasserstein-over-Wasserstein distance \citep{bonet2025flowing}.

\section{Technical Tools}
In this section, we introduce several existing mathematical tools to prove the propositions in Appendix~\ref{app:full-proofs}.

\subsection{Conditional Probability}
\begin{definition}[Conditional Probability]
\label{def:conditional-probability}
Let $(\Omega,\mathcal F,P)$ be a probability space and $X:\Omega\to\mathcal X$ a random variable with distribution $P_X$. For event $A\in\mathcal F$, a $P_X$-measurable function $P(A\mid X=x):\mathcal X\to[0,1]$ is the conditional probability of $A$ given $X$ if:
\[
\forall B\in\mathcal B_{\mathcal X},\,
P(A\cap\{X\in B\}) = \int_B P(A\mid X=x)\,dP_X(x)
\]
where $\mathcal B_{\mathcal X}$ denotes the Borel $\sigma$-algebra on $\mathcal X$.
\end{definition}
The conditional probability is unique almost everywhere with respect to $P_X$ (unique $P_X$-a.e.) \citep{kallenberg1997foundations}. That is, for any $P_X$-null set $Z$ with $P_X(Z)=0$, $\int_{Z} P(A\mid X=x)\,dP_X(x) = 0$, 
implying that $P(A\mid X=x)$ can be assigned arbitrarily on $Z$ without affecting its overall properties. Intuitively, it means discussing conditional probabilities on a null set is meaningless \citep{hajek2003conditional}.

\subsection{McDiarmid's Inequality}
\begin{lemma}[McDiarmid's Inequality]
\label{lem:mcdiarmid}
Let $Z_1,\ldots,Z_n$ be independent random variables taking values in sets
$\mathcal Z_1,\ldots,\mathcal Z_n$, and let
$F:\mathcal Z_1\times\cdots\times\mathcal Z_n\to\mathbb R$.
Assume $F$ satisfies the bounded-differences property: there exist constants
$c_1,\ldots,c_n\ge 0$ such that for every $i\in[n]$ and any two inputs
$(z_1,\ldots,z_n)$ and $(z_1,\ldots,z_i',\ldots,z_n)$ differing only at the $i$-th coordinate,
\[
\bigl|F(z_1,\ldots,z_i,\ldots,z_n)-F(z_1,\ldots,z_i',\ldots,z_n)\bigr|
\;\le\; c_i .
\]
Then for any $\varepsilon>0$,
\[
\mathbb P\Bigl(
F(Z_1,\ldots,Z_n)-\mathbb E\bigl[F(Z_1,\ldots,Z_n)\bigr]\ge \varepsilon
\Bigr)
\;\le\;
\exp\!\left(
-\frac{2\varepsilon^2}{\sum_{i=1}^n c_i^2}
\right),
\]
and
\[
\mathbb P\Bigl(
\bigl|F(Z_1,\ldots,Z_n)-\mathbb E[F(Z_1,\ldots,Z_n)]\bigr|\ge \varepsilon
\Bigr)
\;\le\;
2\exp\!\left(
-\frac{2\varepsilon^2}{\sum_{i=1}^n c_i^2}
\right).
\]
\end{lemma}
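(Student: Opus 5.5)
The plan is the standard martingale-difference (Azuma–Hoeffding) argument. Set $\mathcal F_0$ to be trivial and $\mathcal F_k=\sigma(Z_1,\ldots,Z_k)$ for $k=1,\ldots,n$. Define the Doob martingale
\[
M_k=\mathbb E[F(Z_1,\ldots,Z_n)\mid\mathcal F_k].
\]
Then $M_0=\mathbb E[F]$, $M_n=F(Z_1,\ldots,Z_n)$, and $F-\mathbb E[F]=\sum_{k=1}^n D_k$ with $D_k=M_k-M_{k-1}$. Throughout we assume $F$ is measurable and integrable, which bounded differences guarantee once $F$ is finite somewhere.

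\textbf{Step 1: each increment lies in a short interval.} By independence, $M_k=g_k(Z_1,\ldots,Z_k)$, where
\[
g_k(z_1,\ldots,z_k)=\mathbb E\bigl[F(z_1,\ldots,z_k,Z_{k+1},\ldots,Z_n)\bigr].
\]
Conditional on $\mathcal F_{k-1}$, set
\[
A_k=\inf_{z}g_k(Z_1,\ldots,Z_{k-1},z)-M_{k-1},\qquad B_k=\sup_{z}g_k(Z_1,\ldots,Z_{k-1},z)-M_{k-1}.
\]
Both $A_k$ and $B_k$ are $\mathcal F_{k-1}$-measurable, and $A_k\le D_k\le B_k$. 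The bounded-differences hypothesis, integrated over the later coordinates, gives $B_k-A_k\le c_k$. This step is the main point requiring care. Independence is what lets us write the conditional expectation as an integral over the remaining coordinates with $z_k$ varied alone. Measurability of the inf and sup can be sidestepped by working with $D_k$ directly, using
\[
D_k\in[a,a+c_k],\qquad a=\inf_z g_k(Z_{<k},z)-M_{k-1},
\]
and only the interval length is used afterwards.

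\textbf{Step 2: the conditional Hoeffding lemma.} Because $\mathbb E[D_k\mid\mathcal F_{k-1}]=0$ and $D_k$ lies in an interval of length at most $c_k$ with $\mathcal F_{k-1}$-measurable endpoints, Hoeffding's lemma applied conditionally gives
\[
\mathbb E\bigl[e^{\lambda D_k}\mid\mathcal F_{k-1}\bigr]\le \exp\bigl(\lambda^2c_k^2/8\bigr)\qquad\text{for all }\lambda\in\mathbb R.
\]
Iterating with the tower property, from $k=n$ down to $k=1$, yields
\[
\mathbb E\bigl[e^{\lambda(F-\mathbb E F)}\bigr]\le\exp\Bigl(\lambda^2\sum_k c_k^2/8\Bigr).
\]

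\textbf{Step 3: Chernoff bound.} By Markov's inequality,
\[
\mathbb P(F-\mathbb E F\ge\varepsilon)\le\exp\Bigl(-\lambda\varepsilon+\lambda^2\sum_k c_k^2/8\Bigr).
\]
Choosing $\lambda=4\varepsilon/\sum_k c_k^2$ gives the one-sided bound $\exp\bigl(-2\varepsilon^2/\sum_k c_k^2\bigr)$.

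Applying the same argument to $-F$, which satisfies the same bounded-differences constants, gives the lower tail. A union bound over the two tails then produces the factor $2$ in the two-sided inequality.
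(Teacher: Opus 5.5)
Your proof is correct: it is the standard Doob-martingale argument, and the arithmetic checks out. The choice $\lambda=4\varepsilon/\sum_k c_k^2$ does give the exponent $-2\varepsilon^2/\sum_k c_k^2$, and applying the same argument to $-F$ with a union bound yields the factor $2$. The paper gives no proof of this lemma to compare against. It lists McDiarmid's inequality in its ``Technical Tools'' appendix as an existing result, and uses it only in the proof of Theorem~\ref{thm:concentration-total-pair}, conditionally on the covariates, with $c_i=M/N_S$ or $M/N_T$. Your argument supplies the textbook proof the paper takes for granted.

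One minor technical remark. Your ``sidestep'' for measurability still involves $\inf_z g_k(Z_{<k},z)$, so it does not remove the issue you flag. The clean fix uses independence once more:
\begin{itemize}
\item Freeze a value $z_{<k}$ of $Z_{<k}$. The variable $g_k(z_{<k},Z_k)-g_{k-1}(z_{<k})$ has mean zero and lies in an interval of length at most $c_k$, namely the oscillation of $z\mapsto g_k(z_{<k},z)$.
\item The ordinary Hoeffding lemma then bounds $\int e^{\lambda(g_k(z_{<k},z)-g_{k-1}(z_{<k}))}\,dP_{Z_k}(z)$ by $e^{\lambda^2c_k^2/8}$, pointwise in $z_{<k}$.
\item By independence, this integral is exactly $\mathbb E[e^{\lambda D_k}\mid\mathcal F_{k-1}]$ evaluated at $Z_{<k}=z_{<k}$.
\end{itemize}
So measurability of the interval endpoints is never needed. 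Likewise, integrability is automatic: bounded differences force $|F(z)-F(z')|\le\sum_i c_i$ for all inputs, so the real-valued $F$ is bounded. Measurability of $F$ itself must still be assumed.
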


\subsection{Concentration for Empirical $W_1$}

\begin{lemma}[Square-Exponential Moment Implies $T_1$ \citep{bolley2007quantitative}]
\label{lem:sqexp-implies-T1}
Let $(\mathcal X,\rho)$ be a Polish metric space and $\mu$ a Borel probability measure on $\mathcal X$.
Assume $\mu$ admits a square-exponential moment: there exist $a>0$ and $x_0\in\mathcal X$ such that
\[
\int_{\mathcal X}\exp\!\bigl(a\,\rho(x,x_0)^2\bigr)\,d\mu(x) < \infty.
\]
Then $\mu$ satisfies a $T_1(\lambda)$ transport-entropy inequality for some $\lambda>0$:
for all probability measures $\nu$ on $\mathcal X$,
\[
W_1(\mu,\nu)\le \sqrt{\frac{2}{\lambda}\,H(\nu\mid\mu)}.
\]
\end{lemma}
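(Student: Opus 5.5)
The plan is to go through the Bobkov--Götze dual characterization of $T_1$ and reduce the claim to a sub-Gaussian bound on the Laplace transform of Lipschitz functions under $\mu$.

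\textbf{Step 1 (duality).} Kantorovich--Rubinstein gives $W_1(\mu,\nu)=\sup_{f\ 1\text{-Lip}}\bigl(\int f\,d\nu-\int f\,d\mu\bigr)$. The Donsker--Varadhan formula gives
\[
\int g\,d\nu\le H(\nu\mid\mu)+\log\int e^{g}\,d\mu .
\]
Suppose we know that for every 1-Lipschitz $f$ and every $t>0$,
\[
\int e^{t(f-\int f d\mu)}\,d\mu\le e^{t^2/(2\lambda)} .
\]
Apply Donsker--Varadhan with $g=t(f-\int f\,d\mu)$. This gives $t\bigl(\int f d\nu-\int f d\mu\bigr)\le H(\nu\mid\mu)+t^2/(2\lambda)$. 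Optimize at $t=\sqrt{2\lambda H}$ and take the supremum over $f$. The result is $W_1\le\sqrt{2H/\lambda}$, which is the claim. Finiteness of $\int f\,d\mu$ is not an issue, because $|f|\le |f(x_0)|+\rho(\cdot,x_0)$ and the square-exponential moment gives all moments.

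\textbf{Step 2 (sub-Gaussian Laplace bound via symmetrization).} Let $X,X'$ be i.i.d.\ with law $\mu$ and set $Z=f(X)-f(X')$. By Jensen in $X'$,
\[
\mathbb E\, e^{t(f(X)-\mathbb E f)}\le \mathbb E\, e^{tZ}.
\]
The variable $Z$ is symmetric, so only even terms of the exponential series survive:
\[
\mathbb E\, e^{tZ}=\sum_{k\ge 0}\frac{t^{2k}\,\mathbb E Z^{2k}}{(2k)!}.
\]
Lipschitzness and the triangle inequality give $|Z|\le\rho(X,x_0)+\rho(X',x_0)$. Hence $\mathbb E Z^{2k}\le 2^{2k}\,\mathbb E\rho(X,x_0)^{2k}$, using $(a+b)^{2k}\le 2^{2k-1}(a^{2k}+b^{2k})$.

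Next bound the moments. From $e^{a r^2}\ge (a r^2)^k/k!$ we get $\mathbb E\rho^{2k}\le C\,k!\,a^{-k}$, where $C=\int e^{a\rho(x,x_0)^2}d\mu<\infty$. Combine this with $(2k)!\ge (k!)^2$:
\[
\mathbb E\, e^{tZ}\le 1+C\sum_{k\ge1}\frac{(4t^2/a)^k}{k!}= 1+C\bigl(e^{4t^2/a}-1\bigr).
\]

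\textbf{Step 3 (absorbing the constant $C$).} This is the main obstacle. The bound $1+C(e^{4t^2/a}-1)$ must be turned into a clean $e^{t^2/(2\lambda)}$ valid for all $t$.

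For large $t$ I would use $1+C(e^{s}-1)\le C e^{s}$ (when $C\ge1$), together with $\log C\le s$ whenever $s=4t^2/a\ge\log C$. This gives $\le e^{2s}$.

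For small $s$ I would use $e^{s}-1\le s e^{s}$, which gives $1+Cs e^{s}\le e^{Cs+s}$.

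Taking $1/(2\lambda)=4(C+2)/a$ then covers both regimes. This makes $\lambda$ depend only on $a$ and the square-exponential moment $C$, matching the dependence claimed in the lemma.
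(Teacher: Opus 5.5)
Your proof is correct, and it takes a different route from the one the paper relies on. The paper gives no proof of this lemma: it imports it from Bolley--Guillin--Villani, who attribute the equivalence between $T_1$ and a square-exponential moment to Djellout--Guillin--Wu, with explicit constants from the weighted Csisz\'ar--Kullback--Pinsker inequality of Bolley--Villani.

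Your argument combines Kantorovich--Rubinstein with Donsker--Varadhan (the Bobkov--G\"otze criterion), then uses symmetrization and an even-moment expansion. This is essentially the classical Djellout--Guillin--Wu proof, and it is self-contained and elementary.

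The Bolley--Villani route works differently. It first uses $W_1(\mu,\nu)\le\int\rho(x_0,x)\,d|\mu-\nu|(x)$, then bounds this weighted total variation directly by $\sqrt{H(\nu\mid\mu)}$. That gives $\lambda$ of order $a/(1+\log C)$, where $C=\int e^{a\rho(x,x_0)^2}d\mu$. Your $\lambda=a/(8(C+2))$ instead degrades linearly in $C$.

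That linear degradation is only an artifact of your moment bound, and your own argument can remove it. By Jensen, $\int e^{(a/m)\rho^2}d\mu\le C^{1/m}$. So when $C\ge 2$, run your proof with $a/\log_2 C$ in place of $a$; this makes the moment at most $2$ and yields $T_1$ with $\lambda\ge a/(32\log_2 C)$. That matches Bolley--Villani up to constants. For the qualitative statement used in the paper, which only needs some $\lambda>0$ depending on the moment, either constant suffices.

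Three small remarks:
\begin{itemize}
\item $C\ge 1$ holds automatically, since $e^{a\rho^2}\ge 1$.
\item The case split in Step 3 is unnecessary, because $1+C(e^s-1)\le 1+Cse^s\le e^{(C+1)s}$ already holds for every $s\ge 0$.
\item Step 1 needs a sentence on integrability. If $H(\nu\mid\mu)=\infty$ there is nothing to prove. If $H(\nu\mid\mu)<\infty$, Donsker--Varadhan applied to $\epsilon\rho(\cdot,x_0)\wedge n$ shows that $\nu$ has a first moment, so Kantorovich--Rubinstein applies. Donsker--Varadhan then extends to your unbounded $g=t(f-\int f\,d\mu)$ by truncation and monotone convergence.
\end{itemize}
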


\begin{lemma}[Bolley--Guillin Concentration for Empirical $W_1$ \citep{bolley2007quantitative}]
\label{lem:bolley}
Let $\mu$ be a probability measure on $(\mathbb R^{d},\|\cdot\|)$ that satisfies the transport
inequality $T_1(\lambda)$ for some $\lambda>0$, namely for all probability measures $\nu$,
\[
W_1(\mu,\nu)\;\le\;\sqrt{\frac{2}{\lambda}\,H(\nu\mid\mu)},
\]
where $H(\nu\mid\mu)$ denotes the relative entropy.
Let $\hat\mu=\frac1N\sum_{i=1}^N\delta_{X_i}$ be the empirical measure of i.i.d.\ samples
$X_1,\ldots,X_N\sim\mu$.
Then for any $d'>d$ and any $\lambda_\mu\in(0,\lambda)$, there exists a constant
$N_0$ depending only on $\lambda_\mu$, $d'$, and the squared-exponential moments of $\mu$
such that for any $\varepsilon>0$ and any
\[
N\;\ge\; N_0\max\bigl(\varepsilon^{-(d'+2)},\,1\bigr),
\]
we have
\[
\mathbb P\bigl(W_1(\mu,\hat\mu)>\varepsilon\bigr)
\;\le\;
\exp\!\left(-\frac{\lambda_\mu}{2}\,N\,\varepsilon^2\right).
\]
\end{lemma}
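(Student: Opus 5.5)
The statement immediately above is Lemma~\ref{lem:bolley}, the Bolley--Guillin concentration bound for the empirical $W_1$. It is cited from \citet{bolley2007quantitative}, but I would reconstruct its proof as follows. The plan has three ingredients. First, a covering/discretization argument reduces $W_1(\mu,\hat\mu)$ to a finite-dimensional quantity. Second, Sanov-type large deviations bound that quantity, with $T_1(\lambda)$ converting relative entropy into transport cost. Third, a truncation step uses the square-exponential moment (the source of $T_1$ via Lemma~\ref{lem:sqexp-implies-T1}) to control mass far from the origin.

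\emph{Step 1: truncation.} Fix a radius $R$ and the ball $B_R$. The moment bound gives $\mu(B_R^c)\le C e^{-aR^2}$. By Chernoff's bound, the empirical mass outside $B_R$, together with its first moment, is small except on an event of probability $e^{-cN\varepsilon^2}$. Consequently the contribution of $B_R^c$ to $W_1(\mu,\hat\mu)$ is at most $\varepsilon/4$ with the required probability. I would choose $R\sim\sqrt{\log(1/\varepsilon)}$.

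\emph{Step 2: discretization.} Cover $B_R$ by a partition into $m\lesssim (R/\delta)^d$ cells of diameter $\delta=\varepsilon/4$. Let $\pi$ be the map sending $\mu$ (restricted to $B_R$) to the probability vector of cell masses. Moving mass within cells costs at most $\delta$, and matching cell masses costs at most $2R\,\|\pi\mu-\pi\hat\mu\|_{TV}$. Hence $W_1(\mu,\hat\mu)\le 2\delta+\text{(tail)}+\sup_{f\in\mathcal F}\bigl(\int f\,d\hat\mu-\int f\,d\mu\bigr)$, where $\mathcal F$ is a finite $\delta$-net of the 1-Lipschitz functions on $B_R$ vanishing at $0$. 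Such a net has cardinality $\exp\bigl(O((R/\delta)^{d})\bigr)$.

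\emph{Step 3: Chernoff bound via $T_1$.} By the Bobkov--Götze duality, $T_1(\lambda)$ is equivalent to $\int e^{t(f-\int f d\mu)}d\mu\le e^{t^2/(2\lambda)}$ for every 1-Lipschitz $f$. Independence then gives $\mathbb P\bigl(\int f\,d\hat\mu-\int f\,d\mu>\varepsilon/4\bigr)\le \exp\bigl(-\lambda N\varepsilon^2/32\bigr)$. A union bound over $\mathcal F$ multiplies this by $\exp\bigl(C(R/\varepsilon)^{d}\bigr)$. To sharpen the constant from $\lambda/32$ to any $\lambda_\mu<\lambda$, I would not use a crude $\varepsilon/4$ split. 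Instead I would allot $(1-\eta)\varepsilon$ to the Lipschitz deviation and $\eta\varepsilon$ to the discretization and tail errors. I would also bound $\sup_f$ directly by a Sanov bound on the cell-mass vector: $\inf\{H(\nu|\mu):W_1(\nu,\mu)\ge(1-\eta)\varepsilon\}\ge\lambda(1-\eta)^2\varepsilon^2/2$ by $T_1$. The entropy cost of the net then appears only as a polynomial-in-$N$ or $\exp\bigl(C\varepsilon^{-d}\log(1/\varepsilon)\bigr)$ prefactor.

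\emph{Step 4: absorbing the prefactor.} The condition $N\ge N_0\varepsilon^{-(d'+2)}$ with $d'>d$ makes $C\varepsilon^{-d}\log(1/\varepsilon)\le \tfrac{\lambda-\lambda_\mu}{2}N\varepsilon^2$. The prefactor is therefore swallowed, leaving $\exp(-\lambda_\mu N\varepsilon^2/2)$. The main obstacle is Step 3's sharp constant. Getting the exponent $\lambda_\mu/2$ arbitrarily close to $\lambda/2$, rather than losing a numerical factor, requires the Sanov-type lower semicontinuity argument on the finite partition instead of McDiarmid. It also requires tuning $\eta$, $\delta$ and $R$ jointly as functions of $\varepsilon$, so that every loss sits in the slack between $\lambda$ and $\lambda_\mu$ and between $d$ and $d'$.
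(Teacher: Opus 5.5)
The paper does not prove this lemma; it is quoted from \citet{bolley2007quantitative}, so the only comparison available is with that source. Your proposal is a sound reconstruction, and it shares their skeleton: truncation through the Gaussian moment, discretization at scale $\delta\propto\eta\varepsilon$, a rate $\tfrac{\lambda}{2}(1-\eta)^2N\varepsilon^2$ supplied by $T_1$, and absorption of a covering entropy of order $(R/\delta)^d$ (up to logarithms) using $d'>d$. Your central step, however, is the dual of theirs. Bolley, Guillin and Villani discretize the space of \emph{measures}. They cover $\mathcal P(B_R)$ by $W_1$-balls of radius $\delta$, with log-cardinality of order $(R/\delta)^d\log(R/\delta)$. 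They bound the probability that $\hat\mu$ lands in each convex ball by a non-asymptotic Sanov-type estimate, and they lower-bound the relative entropy on each ball via $T_1$. You discretize the \emph{test functions}: Kantorovich--Rubinstein duality, a sup-norm $\delta$-net of $1$-Lipschitz functions on $B_R$ vanishing at $0$, the Bobkov--G\"otze exponential form of $T_1$ for each net element, and a union bound. Your route is more elementary: it needs no large-deviation estimate for sets of measures and no covering of $\mathcal P(B_R)$. Choose the net inside the Lipschitz class and extend each element by McShane--Whitney to a $1$-Lipschitz function on $\mathbb R^d$. Then the Chernoff bound is applied to $\mu$ itself, so you never need a transport inequality for a truncated measure. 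The measure-side route, on the other hand, is the one that carries over to $W_p$ with $p>1$ under $T_p$, where there is no Lipschitz dual class to union-bound over.

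Several points need tightening.
\begin{itemize}
\item Lemma~\ref{lem:sqexp-implies-T1} points the wrong way for your Step~1, since the statement assumes $T_1(\lambda)$. The Gaussian moment you use follows from $T_1$ itself: apply Bobkov--G\"otze to $f(x)=\|x\|$, which gives $\int e^{a\|x\|^2}\,d\mu<\infty$ for every $a<\lambda/2$.
\item The lemma is claimed for all $\varepsilon>0$. With $\theta\propto R$, the Chernoff exponent for the tail is of order $NR\,\eta\varepsilon$, which beats $\lambda N\varepsilon^2$ only if $R\gtrsim\varepsilon/\eta$. So take $R$ of order $\sqrt{\log(1/\varepsilon)}$ for small $\varepsilon$ and of order $\varepsilon$ for $\varepsilon\ge1$. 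This keeps $(R/\delta)^d$ bounded when $\varepsilon\ge1$, consistent with only $N\ge N_0$ being required there.
\item Your ``main obstacle'' is not one. The $(1-\eta)\varepsilon$ allotment in your own union-bound variant already gives $\exp\bigl(C\bigl(R/(\eta\varepsilon)\bigr)^d-\lambda(1-\eta)^2N\varepsilon^2/2\bigr)$, so no Sanov or semicontinuity argument is needed. The slack to use in Step~4 is $\lambda(1-\eta)^2-\lambda_\mu$, not $\lambda-\lambda_\mu$.
\item Drop the $2R\,\|\pi\mu-\pi\hat\mu\|_{TV}$ estimate: it is lossy and does not yield your displayed inequality. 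That inequality comes from Kantorovich--Rubinstein plus the net, with outside-ball error at most $2\int_{B_R^c}\|x\|\,d(\hat\mu+\mu)$.
\item If you keep the cell-mass Sanov variant instead, you must supply the link that data processing does not give you. Set $\nu_q:=\sum_k q_k\,\mu(\cdot\mid C_k)$; then $H(\nu_q\mid\mu)=H(q\mid\pi\mu)$. Moreover $\hat\mu$ and $\nu_{\pi\hat\mu}$ have equal cell masses, so their parts on $B_R$ can be matched within cells at cost at most $\delta$.
\end{itemize}
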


\section{Full proofs}
\label{app:full-proofs}

\subsection{Proof of Lemma~\ref{lem:ill-defined-expectation}}

\begin{proof}
Since $\operatorname{supp}(P_X^{\prime}) \setminus \operatorname{supp}(P_X)\neq\varnothing$,
pick $x_0\in \operatorname{supp}(P_X^{\prime}) \setminus \operatorname{supp}(P_X)$.
By Definition~\ref{def:support}, $x_0\notin\operatorname{supp}(P_X)$ implies that there
exists an open neighborhood $U\subseteq\mathcal X$ with $x_0\in U$ such that
\[
P_X(U)=0.
\]
On the other hand, $x_0\in\operatorname{supp}(P_X^{\prime})$ implies $P_X^{\prime}(V)>0$
for every open neighborhood $V\ni x_0$, hence in particular $P_X^{\prime}(U)>0$.
Let $Z:=U$. Then $Z\in\mathcal B_{\mathcal X}$ and
\[
P_X(Z)=0,
\qquad
P_X^{\prime}(Z)>0.
\]

Let $P(A\mid X=x):\mathcal X\to[0,1]$ be a conditional probability in Definition~\ref{def:conditional-probability}, i.e., for all $B\in\mathcal B_{\mathcal X}$,
\[
P\bigl(A\cap\{X\in B\}\bigr)=\int_B P(A\mid X=x)\,dP_X(x).
\]
For any constant $c\in[0,1]$, define a modified function
\[
\widetilde P(A\mid X=x):=
\begin{cases}
P(A\mid X=x), & x\notin Z,\\
c, & x\in Z.
\end{cases}
\]
For any $B\in\mathcal B_{\mathcal X}$,
\begin{align*}
\int_B \widetilde P(A\mid X=x)\,dP_X(x)
&=\int_{B\setminus Z} P(A\mid X=x)\,dP_X(x)
  +\int_{B\cap Z} c\,dP_X(x) \\
&=\int_{B\setminus Z} P(A\mid X=x)\,dP_X(x) + c\,P_X(B\cap Z) \\
&=\int_{B\setminus Z} P(A\mid X=x)\,dP_X(x)
\qquad (\text{since } P_X(Z)=0) \\
&=\int_B P(A\mid X=x)\,dP_X(x)
= P\bigl(A\cap\{X\in B\}\bigr),
\end{align*}
so $\widetilde P(A\mid X=x)$ is also a valid conditional probability of $A$
given $X$ by Definition~\ref{def:conditional-probability}, as another version of $P(A\mid X=x)$.

Now take expectation under $P_X^{\prime}$:
\begin{align*}
\mathbb E_{P_X^{\prime}}[\,\widetilde P(A\mid X=x)\,]
&=\int_{\mathcal X} \widetilde P(A\mid X=x)\,dP_X^{\prime}(x) \\
&=\int_{\mathcal X\setminus Z} P(A\mid X=x)\,dP_X^{\prime}(x)
  +\int_Z c\,dP_X^{\prime}(x) \\
&=\int_{\mathcal X\setminus Z} P(A\mid X=x)\,dP_X^{\prime}(x)
  + c\,P_X^{\prime}(Z).
\end{align*}
Since $P_X^{\prime}(Z)>0$ and $c\in[0,1]$ is arbitrary, the value of
$\mathbb E_{P_X^{\prime}}[P(A\mid X=x)]$ can be changed by choosing different $c$ while
preserving the defining property of conditional probability under $P_X$. Hence
$\mathbb E_{P_X^{\prime}}[P(A\mid X=x)]$ is arbitrary.
\end{proof}

\paragraph{Remark} The above proof shows that support mismatch leads to arbitrariness. Broadly speaking, support mismatch commonly breaks the rigor of theoretical analyses. Some studies explicitly exclude it from analysis by extra assumptions, such as done in the absolute continuity in measure theory \citep{duncan1970absolute}, the positivity assumption in causal inference \citep{cole2009consistency}, or support overlap in importance sampling \citep{gelman1998simulating}.

\subsection{Proof of Corollary~\ref{cor:consistency-under-deterministic-labeling}}
\begin{proof}
By Definition~\ref{def:total-pair-yx-shift},
\[
S_{Cpt}^{\gamma^*}
=\mathbb E_{(x_S,x_T)\sim\gamma^*}\!\left[
W_1\!\bigl(\mathcal D_{Y|X=x_S}^{S},\mathcal D_{Y|X=x_T}^{T}\bigr)\right].
\]
Under deterministic labeling, $\mathcal D_{Y|X=x_S}^{S}=\delta_{f_S(x_S)}$ and
$\mathcal D_{Y|X=x_T}^{T}=\delta_{f_T(x_T)}$, hence
\[
S_{pair}(x_S,x_T)=W_1\!\bigl(\delta_{f_S(x_S)},\delta_{f_T(x_T)}\bigr).
\]
For any $a,b\in\mathcal Y$, any coupling $\pi\in\Gamma(\delta_a,\delta_b)$ must satisfy
$\pi(\{a\}\times\mathcal Y)=1$ and $\pi(\mathcal Y\times\{b\})=1$, hence
$\pi=\delta_{(a,b)}$ is the unique coupling. Therefore,
\[
W_1(\delta_a,\delta_b)
=\inf_{\pi\in\Gamma(\delta_a,\delta_b)}\int\rho_{\mathcal Y}(y_1,y_2)\,d\pi
=\int\rho_{\mathcal Y}(y_1,y_2)\,d\delta_{(a,b)}(y_1,y_2)
=\rho_{\mathcal Y}(a,b).
\]
Applying this with $a=f_S(x_S)$ and $b=f_T(x_T)$ yields
$S_{pair}(x_S,x_T)=\rho_{\mathcal Y}(f_S(x_S),f_T(x_T))$, and thus
\[
S_{Cpt}^{\gamma^*}
=\mathbb E_{(x_S,x_T)\sim\gamma^*}\bigl[\rho_{\mathcal Y}(f_S(x_S),f_T(x_T))\bigr],
\]
as claimed.
\end{proof}

\subsection{Proof of Lemma~\ref{lem:support-of-gamma}}
\begin{proof}
Any $\gamma\in\Gamma(\mathcal D_X^{S},\mathcal D_X^{T})$ is a coupling of
$\mathcal D_X^{S}$ and $\mathcal D_X^{T}$, hence its first marginal is
$\mathcal D_X^{S}$ and its second marginal is $\mathcal D_X^{T}$.

Take any $(x_S,x_T)\notin \operatorname{supp}(\mathcal D_X^{S})\times
\operatorname{supp}(\mathcal D_X^{T})$. Then either
$x_S\notin\operatorname{supp}(\mathcal D_X^{S})$ or
$x_T\notin\operatorname{supp}(\mathcal D_X^{T})$.

If $x_S\notin\operatorname{supp}(\mathcal D_X^{S})$, by Definition~\ref{def:support}
there exists an open neighborhood $U\subseteq\mathcal X$ of $x_S$ such that
$\mathcal D_X^{S}(U)=0$. Since the first marginal of $\gamma$ is $\mathcal D_X^{S}$,
\[
\gamma(U\times\mathcal X)=\mathcal D_X^{S}(U)=0.
\]
In particular, for any open neighborhood $V$ of $x_T$ we have
$\gamma(U\times V)\le \gamma(U\times\mathcal X)=0$, so $\gamma(U\times V)=0$.
Thus, taking the product open set $U\times V$ containing $(x_S,x_T)$, we conclude that
$(x_S,x_T)\notin\operatorname{supp}(\gamma)$ by Definition~\ref{def:support} applied
on the product topology of $\mathcal X\times\mathcal X$.

The case $x_T\notin\operatorname{supp}(\mathcal D_X^{T})$ is symmetric: there exists an
open $V\ni x_T$ with $\mathcal D_X^{T}(V)=0$, and using the second marginal of $\gamma$
gives $\gamma(\mathcal X\times V)=0$, hence $\gamma(U\times V)=0$ for any open
$U\ni x_S$, implying $(x_S,x_T)\notin\operatorname{supp}(\gamma)$.

Therefore every point outside
$\operatorname{supp}(\mathcal D_X^{S})\times\operatorname{supp}(\mathcal D_X^{T})$
is outside $\operatorname{supp}(\gamma)$, i.e.,
\[
\operatorname{supp}(\gamma) \subseteq 
\operatorname{supp}(\mathcal D_{X}^{S})\times\operatorname{supp}(\mathcal D_{X}^{T}).
\]
\end{proof}

\subsection{Proof of Lemma~\ref{lem:uniqueness-of-gamma}}
\begin{proof}
Let $\nu:=\mathcal D_X^{S}\otimes \mathcal D_X^{T}$ and consider the entropic optimal transport
objective
\[
J(\gamma)
:=\int \rho_{\mathcal X}(x_S,x_T)\,d\gamma(x_S,x_T)
+\beta\,H(\gamma\mid \nu),
\qquad \gamma\in\Gamma(\mathcal D_X^{S},\mathcal D_X^{T}).
\]

We claim that $J$ is \emph{strictly convex} over the convex set
$\Gamma(\mathcal D_X^{S},\mathcal D_X^{T})$. Indeed, the transport cost term
$\gamma\mapsto \int \rho_{\mathcal X}\,d\gamma$ is linear in $\gamma$. For the
entropy term, write $r=\frac{d\gamma}{d\nu}$; then
\[
H(\gamma\mid \nu)=\int r\log r\,d\nu.
\]
For any
$\gamma_1,\gamma_2 \in\Gamma(\mathcal D_X^{S},\mathcal D_X^{T})$ with $\gamma_1\neq \gamma_2$, by Lemma~\ref{lem:support-of-gamma},
\[
\operatorname{supp}(\gamma_1) \subseteq 
\operatorname{supp}(\mathcal D_{X}^{S})\times\operatorname{supp}(\mathcal D_{X}^{T})=\operatorname{supp}(\nu).
\]
\[
\operatorname{supp}(\gamma_2) \subseteq 
\operatorname{supp}(\mathcal D_{X}^{S})\times\operatorname{supp}(\mathcal D_{X}^{T})=\operatorname{supp}(\nu).
\]
Because $\varphi(t)=t\log t$ is strictly convex on $[0,\infty)$, for and any $\lambda\in(0,1)$, 
\[
H\bigl(\lambda\gamma_1+(1-\lambda)\gamma_2 \mid \nu\bigr)
<\lambda H(\gamma_1\mid \nu)+(1-\lambda)H(\gamma_2\mid \nu),
\]
hence (multiplying by $\beta>0$) the whole objective satisfies
\[
J\bigl(\lambda\gamma_1+(1-\lambda)\gamma_2\bigr)
<\lambda J(\gamma_1)+(1-\lambda)J(\gamma_2).
\]

Now suppose, toward a contradiction, that there are two distinct optimal
couplings $\gamma_1\neq\gamma_2$ minimizing $J$ over
$\Gamma(\mathcal D_X^{S},\mathcal D_X^{T})$. By convexity of
$\Gamma(\mathcal D_X^{S},\mathcal D_X^{T})$, their mixture
$\gamma_\lambda:=\lambda\gamma_1+(1-\lambda)\gamma_2$ is feasible. Strict
convexity then yields
\[
J(\gamma_\lambda)<\lambda J(\gamma_1)+(1-\lambda)J(\gamma_2)=\inf_{\gamma\in\Gamma}J(\gamma),
\]
a contradiction. Therefore the optimizer $\gamma^*$ is unique.
\end{proof}

\subsection{Proof of Lemma~\ref{lem:collapse-of-gamma}}
\begin{proof}
When $\beta=0$, Definition~\ref{def:x-shift} reduces to the Wasserstein-1 problem
\[
\inf_{\gamma\in\Gamma(\mathcal D_X^{S},\mathcal D_X^{T})}\int \rho_{\mathcal X}(x_S,x_T)\,d\gamma(x_S,x_T),
\]
where $\rho_{\mathcal X}$ is a metric, hence $\rho_{\mathcal X}\ge 0$ and
$\rho_{\mathcal X}(x_S,x_T)=0$ if $x_S=x_T$.

Consider the diagonal (identity) coupling
$\bar\gamma:=(\mathrm{Id},\mathrm{Id})_{\#}\mathcal D_{X}^{S}$, i.e.,
\[
\bar\gamma(A)=\int \mathbf{1}_{\{(x,x)\in A\}}\,d\mathcal D_{X}^{S}(x).
\]
It is immediate that $\bar\gamma\in\Gamma(\mathcal D_{X}^{S},\mathcal D_{X}^{S})$, and its transport cost is
\[
\int \rho_{\mathcal X}(x_S,x_T)\,d\bar\gamma(x_S,x_T)
=\int \rho_{\mathcal X}(x,x)\,d\mathcal D_{X}^{S}(x)=0.
\]
Therefore the optimal value is at most $0$, hence equals $0$.

Let $\gamma^{*}\in\Gamma(\mathcal D_{X}^{S},\mathcal D_{X}^{S})$ be any optimal coupling. Then
\[
0=\int \rho_{\mathcal X}(x_S,x_T)\,d\gamma^{*}(x_S,x_T).
\]
Since the integrand is nonnegative, this implies
$\rho_{\mathcal X}(x_S,x_T)=0$ holds $\gamma^{*}$-almost surely, i.e.,
$(x_S,x_T)\in\zeta:=\{(x,x):x\in\mathcal X\}$ $\gamma^{*}$-a.s. Hence
$\operatorname{supp}(\gamma^{*})\subseteq \zeta$.

Finally, because $\gamma^*$ is supported on $\zeta$ and has first marginal $\mathcal D_{X}^{S}$,
for every measurable $A\subseteq\mathcal X\times\mathcal X$ we have
\[
\gamma^{*}(A)
=\gamma^{*}(A\cap \zeta)
=\int \mathbf{1}_{\{(x,x)\in A\}}\,d\mathcal D_{X}^{S}(x)
=(\mathrm{Id},\mathrm{Id})_{\#}\mathcal D_{X}^{S}(A),
\]
which proves $\gamma^{*}=(\mathrm{Id},\mathrm{Id})_{\#}\mathcal D_{X}^{S}$ and the stated equivalent form.
\end{proof}

\subsection{Proof of Theorem~\ref{thm:uniqueness-total-pair}}
\begin{proof}
For $\beta>0$, Lemma~\ref{lem:uniqueness-of-gamma} ensures that the entropic optimal
transport coupling $\gamma^*$ in Definition~\ref{def:x-shift} is unique. Hence any
potential ambiguity of $S_{Cpt}^{\gamma^*}$ can only come from the choice of versions
of the conditional distributions.

Given any two versions of the source conditionals
$\{\mathcal D_{Y|X=x}^{S}\}_{x\in\mathcal X}$ and
$\{\widetilde{\mathcal D}_{Y|X=x}^{S}\}_{x\in\mathcal X}$, and any two versions of the
target conditionals
$\{\mathcal D_{Y|X=x}^{T}\}_{x\in\mathcal X}$ and
$\{\widetilde{\mathcal D}_{Y|X=x}^{T}\}_{x\in\mathcal X}$,
such that
$\mathcal D_{Y|X=x}^{S}=\widetilde{\mathcal D}_{Y|X=x}^{S}$ holds $\mathcal D_X^{S}$-a.e.,
and
$\mathcal D_{Y|X=x}^{T}=\widetilde{\mathcal D}_{Y|X=x}^{T}$ holds $\mathcal D_X^{T}$-a.e.
Therefore, there exist measurable sets $E_S,E_T\subseteq\mathcal X$ with
$\mathcal D_X^{S}(E_S)=0$ and $\mathcal D_X^{T}(E_T)=0$ such that the equalities hold
for all $x\notin E_S$ and all $x\notin E_T$, respectively.

Define the $\gamma^*$-Y$|$X shift under the two choices as
\[
S_{Cpt}^{\gamma^*}
:=\int W_1\!\bigl(\mathcal D_{Y|X=x_S}^{S},\mathcal D_{Y|X=x_T}^{T}\bigr)\,d\gamma^*(x_S,x_T),
\]
\[
\widetilde S_{Cpt}^{\gamma^*}
:=\int W_1\!\bigl(\widetilde{\mathcal D}_{Y|X=x_S}^{S},\widetilde{\mathcal D}_{Y|X=x_T}^{T}\bigr)\,d\gamma^*(x_S,x_T).
\]
Let $E:=(E_S\times\mathcal X)\,\cup\,(\mathcal X\times E_T)$.
For any $(x_S,x_T)\in E^{c}$ we have $x_S\notin E_S$ and $x_T\notin E_T$, hence
\[
\mathcal D_{Y|X=x_S}^{S}=\widetilde{\mathcal D}_{Y|X=x_S}^{S},
\qquad
\mathcal D_{Y|X=x_T}^{T}=\widetilde{\mathcal D}_{Y|X=x_T}^{T},
\]
which implies
\[
W_1\!\bigl(\mathcal D_{Y|X=x_S}^{S},\mathcal D_{Y|X=x_T}^{T}\bigr)
=
W_1\!\bigl(\widetilde{\mathcal D}_{Y|X=x_S}^{S},\widetilde{\mathcal D}_{Y|X=x_T}^{T}\bigr)
\quad\text{for all }(x_S,x_T)\in E^c.
\]
Consequently, the difference between the two versions satisfies
\[
S_{Cpt}^{\gamma^*}-\widetilde S_{Cpt}^{\gamma^*}
=\int_E
\Bigl[
W_1\!\bigl(\mathcal D_{Y|X=x_S}^{S},\mathcal D_{Y|X=x_T}^{T}\bigr)
-
W_1\!\bigl(\widetilde{\mathcal D}_{Y|X=x_S}^{S},\widetilde{\mathcal D}_{Y|X=x_T}^{T}\bigr)
\Bigr]\,d\gamma^*(x_S,x_T),
\]
By Lemma~\ref{lem:support-of-gamma}, $\operatorname{supp}(\gamma^*)\subseteq
\operatorname{supp}(\mathcal D_X^S)\times\operatorname{supp}(\mathcal D_X^T)$, so $\gamma^*$
never places mass outside the support product region. Furthermore, since
$\gamma^*\in\Gamma(\mathcal D_X^S,\mathcal D_X^T)$, hence
\[
\gamma^*(E_S\times\mathcal X)=\mathcal D_X^S(E_S)=0,
\qquad
\gamma^*(\mathcal X\times E_T)=\mathcal D_X^T(E_T)=0,
\]
and therefore $\gamma^*(E)=0$, and thus
\[
S_{Cpt}^{\gamma^*}-\widetilde S_{Cpt}^{\gamma^*}=0.
\]
Therefore $S_{Cpt}^{\gamma^*}$ does not depend on the choice of versions of the
conditional distributions. This proves that $S_{Cpt}^{\gamma^*}$ is unique, regardless of whether $\operatorname{supp}(\mathcal D_X^S)\neq\operatorname{supp}(\mathcal D_X^T)$.
\end{proof}

\subsection{Proof of Proposition~\ref{pro:relationship-to-existing-yx-shift}}
\begin{proof}
Since $\beta=0$ and $\mathcal D_X^{S}=\mathcal D_X^{T}$,
Lemma~\ref{lem:collapse-of-gamma} gives that the optimal coupling collapses to the diagonal:
\[
\gamma^{*}=(\mathrm{Id},\mathrm{Id})_{\#}D_{X}^{S} .
\]
Under deterministic labeling and $\rho_{\mathcal Y}=|\cdot|$, Corollary~\ref{cor:consistency-under-deterministic-labeling}
yields
\[
S_{Cpt}^{\gamma^{*}}
=\mathbb E_{(x_S,x_T)\sim\gamma^{*}}\bigl[|f_S(x_S)-f_T(x_T)|\bigr].
\]
Substituting $\gamma^{*}=(\mathrm{Id},\mathrm{Id})_{\#}D_{X}^{S}$ implies $(x_S,x_T)=(x,x)$ with $x\sim D_{X}^{S}$, hence
\[
S_{Cpt}^{\gamma^{*}}
=\mathbb E_{x\sim D_{X}^{S}}\bigl[|f_S(x)-f_T(x)|\bigr].
\]
Finally, since $\mathcal D_X^{S}=\mathcal D_X^{T}$, we obtain
\[
S_{Cpt}^{\gamma^{*}}
=\mathbb E_{x\sim\mathcal D_X^{S}}\bigl[|f_S(x)-f_T(x)|\bigr]
=\mathbb E_{x\sim\mathcal D_X^{T}}\bigl[|f_S(x)-f_T(x)|\bigr],
\]
as claimed.
\end{proof}

\subsection{Proof of Corollary~\ref{cor:composition-preserves-separate-lipschitzness}}
\begin{proof}
Take any $y_1,y_2\in\mathcal Y$ and $x_1,x_2\in\mathcal X$. By the separate
$(L_\ell,L'_\ell)$-Lipschitzness of $\ell$,
\[
\bigl|\ell(y_1,h(x_1))-\ell(y_2,h(x_2))\bigr|
\le L_\ell\,\rho_{\mathcal Y}(y_1,y_2)
+L'_\ell\,\rho_{\mathcal Y'}\!\bigl(h(x_1),h(x_2)\bigr).
\]
By the $L_h$-Lipschitzness of $h$, we further have
$\rho_{\mathcal Y'}\!\bigl(h(x_1),h(x_2)\bigr)\le L_h\,\rho_{\mathcal X}(x_1,x_2)$.
Substituting this bound yields
\[
\bigl|\ell(y_1,h(x_1))-\ell(y_2,h(x_2))\bigr|
\le L_\ell\,\rho_{\mathcal Y}(y_1,y_2)
+(L_hL'_\ell)\,\rho_{\mathcal X}(x_1,x_2),
\]
which is exactly the separate $(L_\ell,\,L_hL'_\ell)$-Lipschitz condition for the
composite function $\ell\bigl(y,h(x)\bigr)$.
\end{proof}

\subsection{Proof of Lemma~\ref{lem:separately-weak-duality}}
\begin{proof}
Write the two expectations explicitly:
\[
\mathbb E_{\mathcal D_{XY}^{S}}[\mathcal G]
=\int_{\mathcal X\times\mathcal Y}\mathcal G(y_S,x_S)\,d\mathcal D_{XY}^{S}(x_S,y_S),
\qquad
\mathbb E_{\mathcal D_{XY}^{T}}[\mathcal G]
=\int_{\mathcal X\times\mathcal Y}\mathcal G(y_T,x_T)\,d\mathcal D_{XY}^{T}(x_T,y_T).
\]
Let $\gamma_{XY}\in\Gamma(\mathcal D_{XY}^{S},\mathcal D_{XY}^{T})$ be any coupling on
$(\mathcal X\times\mathcal Y)\times(\mathcal X\times\mathcal Y)$, i.e., its first marginal
is $\mathcal D_{XY}^{S}$ and second marginal is $\mathcal D_{XY}^{T}$. Hence, for any
integrable functions $\varphi,\psi$,
\[
\int \varphi(x_S,y_S)\,d\gamma_{XY}(x_S,y_S,x_T,y_T)
=\int \varphi(x_S,y_S)\,d\mathcal D_{XY}^{S}(x_S,y_S),
\]
\[
\int \psi(x_T,y_T)\,d\gamma_{XY}(x_S,y_S,x_T,y_T)
=\int \psi(x_T,y_T)\,d\mathcal D_{XY}^{T}(x_T,y_T).
\]
Applying these identities to $\varphi(x_S,y_S)=\mathcal G(y_S,x_S)$ and
$\psi(x_T,y_T)=\mathcal G(y_T,x_T)$ gives
\begin{align*}
\mathbb E_{\mathcal D_{XY}^{S}}[\mathcal G]-\mathbb E_{\mathcal D_{XY}^{T}}[\mathcal G]
&=
\int \mathcal G(y_S,x_S)\,d\mathcal D_{XY}^{S}(x_S,y_S)
-\int \mathcal G(y_T,x_T)\,d\mathcal D_{XY}^{T}(x_T,y_T) \\
&=
\int \mathcal G(y_S,x_S)\,d\gamma_{XY}(x_S,y_S,x_T,y_T)
-\int \mathcal G(y_T,x_T)\,d\gamma_{XY}(x_S,y_S,x_T,y_T) \\
&=
\int \Bigl(\mathcal G(y_S,x_S)-\mathcal G(y_T,x_T)\Bigr)\,
d\gamma_{XY}(x_S,y_S,x_T,y_T).
\end{align*}
Taking absolute values and using $|\int f\,d\mu|\le \int |f|\,d\mu$ yields
\begin{align*}
\bigl|\mathbb E_{\mathcal D_{XY}^{S}}[\mathcal G]-\mathbb E_{\mathcal D_{XY}^{T}}[\mathcal G]\bigr|
&\le
\int \bigl|\mathcal G(y_S,x_S)-\mathcal G(y_T,x_T)\bigr|\,
d\gamma_{XY}(x_S,y_S,x_T,y_T).
\end{align*}
By the separate $(L_{\mathcal Y},L_{\mathcal X})$-Lipschitz property of $\mathcal G$, we have
\[
\bigl|\mathcal G(y_S,x_S)-\mathcal G(y_T,x_T)\bigr|
\le
L_{\mathcal Y}\rho_{\mathcal Y}(y_S,y_T)+L_{\mathcal X}\rho_{\mathcal X}(x_S,x_T).
\]
Substituting and splitting the integral,
\begin{align*}
\bigl|\mathbb E_{\mathcal D_{XY}^{S}}[\mathcal G]-\mathbb E_{\mathcal D_{XY}^{T}}[\mathcal G]\bigr|
&\le
\int \Bigl(
L_{\mathcal Y}\rho_{\mathcal Y}(y_S,y_T)+L_{\mathcal X}\rho_{\mathcal X}(x_S,x_T)
\Bigr)\,
d\gamma_{XY}(x_S,y_S,x_T,y_T) \\
&=
L_{\mathcal Y}\int \rho_{\mathcal Y}(y_S,y_T)\,d\gamma_{XY}(x_S,y_S,x_T,y_T)
+L_{\mathcal X}\int \rho_{\mathcal X}(x_S,x_T)\,d\gamma_{XY}(x_S,y_S,x_T,y_T) \\
&=
L_{\mathcal Y}\,\mathbb E_{\gamma_{XY}}\!\bigl[\rho_{\mathcal Y}(y_S,y_T)\bigr]
+
L_{\mathcal X}\,\mathbb E_{\gamma_{XY}}\!\bigl[\rho_{\mathcal X}(x_S,x_T)\bigr],
\end{align*}
as claimed.
\end{proof}

\subsection{Proof of Lemma~\ref{lem:gluing-construction-for-joint-coupling}}
\begin{proof}
We show that the first marginal of $\gamma_{XY}$ equals $\mathcal D_{XY}^{S}$ and the
second marginal equals $\mathcal D_{XY}^{T}$.

\paragraph{First marginal.}
Take any measurable $A\subseteq\mathcal X$ and $B\subseteq\mathcal Y$. By the definition
of $\gamma_{XY}$,
\begin{align*}
\gamma_{XY}\bigl((A\times B)\times(\mathcal X\times\mathcal Y)\bigr)
&=
\int_{(x_S,x_T)\in\mathcal X\times\mathcal X}
\gamma^{*}_{Y|(x_S,x_T)}\bigl((B\times\mathcal Y)\bigr)\,
\mathbf 1_{\{x_S\in A\}}\,
d\gamma^{*}(x_S,x_T).
\end{align*}
For each fixed $(x_S,x_T)$, the coupling
$\gamma^{*}_{Y|(x_S,x_T)}\in\Gamma(\mathcal D_{Y|X=x_S}^{S},\mathcal D_{Y|X=x_T}^{T})$
has first marginal $\mathcal D_{Y|X=x_S}^{S}$, hence
\[
\gamma^{*}_{Y|(x_S,x_T)}(B\times\mathcal Y)=\mathcal D_{Y|X=x_S}^{S}(B).
\]
Substituting gives
\begin{align*}
\gamma_{XY}\bigl((A\times B)\times(\mathcal X\times\mathcal Y)\bigr)
&=
\int \mathbf 1_{\{x_S\in A\}}\,\mathcal D_{Y|X=x_S}^{S}(B)\,d\gamma^{*}(x_S,x_T).
\end{align*}
Finally, since the first marginal of $\gamma^{*}$ is $\mathcal D_X^{S}$, integrating out
$x_T$ yields
\[
\gamma_{XY}\bigl((A\times B)\times(\mathcal X\times\mathcal Y)\bigr)
=
\int_{x_S\in A}\mathcal D_{Y|X=x_S}^{S}(B)\,d\mathcal D_X^{S}(x_S)
=
\mathcal D_{XY}^{S}(A\times B),
\]

\paragraph{Second marginal.}
Similarly, for measurable $A\subseteq\mathcal X$ and $B\subseteq\mathcal Y$,
\begin{align*}
\gamma_{XY}\bigl((\mathcal X\times\mathcal Y)\times(A\times B)\bigr)
&=
\int \gamma^{*}_{Y|(x_S,x_T)}(\mathcal Y\times B)\,\mathbf 1_{\{x_T\in A\}}\,
d\gamma^{*}(x_S,x_T) \\
&=
\int \mathbf 1_{\{x_T\in A\}}\,\mathcal D_{Y|X=x_T}^{T}(B)\,d\gamma^{*}(x_S,x_T) \\
&=
\int_{x_T\in A}\mathcal D_{Y|X=x_T}^{T}(B)\,d\mathcal D_X^{T}(x_T)
=
\mathcal D_{XY}^{T}(A\times B).
\end{align*}

Therefore $\gamma_{XY}\in\Gamma(\mathcal D_{XY}^{S},\mathcal D_{XY}^{T})$.
\end{proof}

\subsection{Proof of Theorem~\ref{thm:learning-bound}}
\begin{proof}
Define the composite function
\[
\mathcal G(y,x):=\ell\bigl(y,h(x)\bigr),\qquad (y,x)\in\mathcal Y\times\mathcal X.
\]
By Corollary~\ref{cor:composition-preserves-separate-lipschitzness},
$\mathcal G$ is separately $(L_\ell,\;L_hL_\ell')$-Lipschitz, i.e.,
\[
\bigl|\mathcal G(y_1,x_1)-\mathcal G(y_2,x_2)\bigr|
\le L_\ell\,\rho_{\mathcal Y}(y_1,y_2)+ (L_hL_\ell')\,\rho_{\mathcal X}(x_1,x_2).
\]
Hence
\begin{align*}
\epsilon_T(h)
&=\epsilon_S(h)+\bigl(\epsilon_T(h)-\epsilon_S(h)\bigr) \\
&\le \epsilon_S(h)+\bigl|\epsilon_T(h)-\epsilon_S(h)\bigr| \\
&= \epsilon_S(h)+\bigl|\mathbb E_{\mathcal D_{XY}^{T}}[\mathcal G]-\mathbb E_{\mathcal D_{XY}^{S}}[\mathcal G]\bigr|.
\end{align*}

\paragraph{Step 1: construct a specific joint coupling.}
Let $\gamma^*\in\Gamma(\mathcal D_X^{S},\mathcal D_X^{T})$ be the optimal coupling in
Definition~\ref{def:x-shift}. For each $(x_S,x_T)$, let
$\gamma^*_{Y|(x_S,x_T)}\in\Gamma(\mathcal D_{Y|X=x_S}^{S},\mathcal D_{Y|X=x_T}^{T})$
be an optimal coupling of $S_{pair}(x_S, x_T)$ in Definition~\ref{def:total-pair-yx-shift}, define the joint distribution of couplings
\[
\gamma_{XY}(dx_S,dx_T,dy_S,dy_T)
:=\gamma^*(dx_S,dx_T)\,\gamma^*_{Y|(x_S,x_T)}(dy_S,dy_T).
\]
By Lemma~\ref{lem:gluing-construction-for-joint-coupling},
$\gamma_{XY}\in\Gamma(\mathcal D_{XY}^{S},\mathcal D_{XY}^{T})$.

\paragraph{Step 2: apply Lemma~\ref{lem:separately-weak-duality}.}
Apply Lemma~\ref{lem:separately-weak-duality} to $\mathcal G$ with the coupling
$\gamma_{XY}$, we obtain:
\begin{align*}
\bigl|\mathbb E_{\mathcal D_{XY}^{S}}[\mathcal G]-\mathbb E_{\mathcal D_{XY}^{T}}[\mathcal G]\bigr|
&\le (L_hL_\ell')\,\mathbb E_{\gamma_{XY}}\!\bigl[\rho_{\mathcal X}(x_S,x_T)\bigr]
+ L_\ell\,\mathbb E_{\gamma_{XY}}\!\bigl[\rho_{\mathcal Y}(y_S,y_T)\bigr].
\end{align*}
It remains to bound the two expectations on the right-hand side.

\paragraph{Step 3: bound the $\mathcal X$-term by $S_{Cov}$.}
By the construction of $\gamma_{XY}$,
\begin{align*}
\mathbb E_{\gamma_{XY}}\!\bigl[\rho_{\mathcal X}(x_S,x_T)\bigr]
&=\int \rho_{\mathcal X}(x_S,x_T)\,
\gamma^*(dx_S,dx_T)\,\gamma^*_{Y|(x_S,x_T)}(dy_S,dy_T) \\
&=\int \rho_{\mathcal X}(x_S,x_T)\,\gamma^*(dx_S,dx_T)
=\mathbb E_{\gamma^*}\!\bigl[\rho_{\mathcal X}(x_S,x_T)\bigr].
\end{align*}
Since $S_{Cov}=W_\beta(\mathcal D_X^{S},\mathcal D_X^{T})$ and $\gamma^*$ is optimal,
\[
S_{Cov}
=\int \rho_{\mathcal X}(x_S,x_T)\,d\gamma^*(x_S,x_T)
+\beta\,H\!\bigl(\gamma^*\mid \mathcal D_X^{S}\otimes\mathcal D_X^{T}\bigr)
\;\ge\;
\int \rho_{\mathcal X}(x_S,x_T)\,d\gamma^*(x_S,x_T),
\]
because $\beta\geq0$ and relative entropy is nonnegative. Therefore,
\[
\mathbb E_{\gamma_{XY}}\!\bigl[\rho_{\mathcal X}(x_S,x_T)\bigr]
=\mathbb E_{\gamma^*}\!\bigl[\rho_{\mathcal X}(x_S,x_T)\bigr]
\le S_{Cov}.
\]

\paragraph{Step 4: identify the $\mathcal Y$-term with $S_{Cpt}^{\gamma^*}$.}
Again by the construction of $\gamma_{XY}$,
\begin{align*}
\mathbb E_{\gamma_{XY}}\!\bigl[\rho_{\mathcal Y}(y_S,y_T)\bigr]
&=\int \rho_{\mathcal Y}(y_S,y_T)\,
\gamma^*(dx_S,dx_T)\,\gamma^*_{Y|(x_S,x_T)}(dy_S,dy_T) \\
&=\int_{\mathcal X\times\mathcal X}
\left[
\int_{\mathcal Y\times\mathcal Y}\rho_{\mathcal Y}(y_S,y_T)\,
d\gamma^*_{Y|(x_S,x_T)}(y_S,y_T)
\right]\,
d\gamma^*(x_S,x_T) \\
&=\int_{\mathcal X\times\mathcal X}
W_1\!\bigl(\mathcal D_{Y|X=x_S}^{S},\mathcal D_{Y|X=x_T}^{T}\bigr)\,
d\gamma^*(x_S,x_T) \\
&= \mathbb E_{(x_S,x_T)\sim\gamma^*}\bigl[S_{pair}(x_S,x_T)\bigr]
= S_{Cpt}^{\gamma^*},
\end{align*}

\paragraph{Step 5: conclude the bound.}
Combining Steps 2--4 yields
\[
\bigl|\mathbb E_{\mathcal D_{XY}^{S}}[\mathcal G]-\mathbb E_{\mathcal D_{XY}^{T}}[\mathcal G]\bigr|
\le (L_hL_\ell')\,S_{Cov}+L_\ell\,S_{Cpt}^{\gamma^*}.
\]
Substituting this into the earlier inequality for $\epsilon_T(h)$ gives
\[
\epsilon_T(h)
\le \epsilon_S(h)+(L_hL_\ell')\,S_{Cov}+L_\ell\,S_{Cpt}^{\gamma^*},
\]
which proves the theorem.
\end{proof}

\subsection{Proof of Theorem~\ref{thm:concentration-debiased}}
\begin{proof}
We work with $\beta=0$, hence $W_{\beta}=W_1$.  For brevity, denote
\[
\mu := \mathcal D_X^{S},\qquad \nu := \mathcal D_X^{T},\qquad c := W_1(\mu,\nu).
\]
Let $\hat\mu',\hat\mu''$ (resp.\ $\hat\nu',\hat\nu''$) be the two half-sample empirical
measures constructed in Definition~\ref{def:debiased-estimator}.  They are independent
within each domain because they are built from disjoint halves of i.i.d.\ samples.

\paragraph{A useful inequality.}
We will use the elementary fact: for any $a,b\ge 0$,
\[
\bigl|\sqrt{a}-\sqrt{b}\bigr|\le \sqrt{|a-b|}.
\]
Indeed, if $a\ge b$ then $\sqrt{a}-\sqrt{b}=\frac{a-b}{\sqrt{a}+\sqrt{b}}
\le \frac{a-b}{\sqrt{a-b}}=\sqrt{a-b}$, and the case $b\ge a$ is symmetric.

\paragraph{Step 1: rewrite the debiased estimator and introduce an auxiliary statistic.}
Define
\[
S \;:=\;
\tfrac12\,W_1(\hat\mu',\hat\nu')^{2}
+\tfrac12\,W_1(\hat\mu'',\hat\nu'')^{2}
-\tfrac12\,W_1(\hat\mu',\hat\mu'')^{2}
-\tfrac12\,W_1(\hat\nu',\hat\nu'')^{2}.
\]
Then Definition~\ref{def:debiased-estimator} (with $\beta=0$) gives
\[
W_1^{deb}\!\bigl(\widehat{\mathcal D_X^{S}},\widehat{\mathcal D_X^{T}}\bigr)=\sqrt{|S|}.
\]
Introduce the auxiliary random variable
\[
T \;:=\; \bigl|\,S-c^2\,\bigr|.
\]

\paragraph{Step 2: reduce $\{|W_1^{deb}-c|>\varepsilon\}$ to an event on $T$.}
First note the deterministic inequality
\[
\bigl||S|-c^2\bigr|\le |S-c^2|=T,
\]
which holds because $c^2\ge 0$. Hence, using the inequality in the first paragraph,
\[
|W_1^{deb}-c|
= \bigl|\sqrt{|S|}-\sqrt{c^2}\bigr|
\le \sqrt{\bigl||S|-c^2\bigr|}
\le \sqrt{T}.
\]
Therefore, for any $\varepsilon>0$,
\[
\mathbb P\bigl(|W_1^{deb}-c|>\varepsilon\bigr)
\le \mathbb P\bigl(\sqrt{T}>\varepsilon\bigr)
= \mathbb P\bigl(T>\varepsilon^2\bigr).
\]
This bound is always valid and is particularly useful when $c$ is small.

When $c\ge \varepsilon$, we can obtain a complementary reduction by
squaring the deviation event. Indeed, $|W_1^{deb}-c|>\varepsilon$ implies either
$W_1^{deb}>c+\varepsilon$ or $W_1^{deb}<c-\varepsilon$. In both cases,
\[
\bigl| (W_1^{deb})^2 - c^2 \bigr|
> (c+\varepsilon)^2-c^2
= 2c\varepsilon+\varepsilon^2
\quad\text{or}\quad
c^2-(c-\varepsilon)^2
=2c\varepsilon-\varepsilon^2,
\]
hence in particular
\[
\bigl| (W_1^{deb})^2 - c^2 \bigr|>2c\varepsilon-\varepsilon^2.
\]
Since $(W_1^{deb})^2=|S|$, we have
\[
\{|W_1^{deb}-c|>\varepsilon\}
\subseteq \bigl\{\bigl||S|-c^2\bigr|>2c\varepsilon-\varepsilon^2\bigr\}
\subseteq \{T>2c\varepsilon-\varepsilon^2\}.
\]
Combining both regimes, define the threshold
\[
t(c,\varepsilon):=
\begin{cases}
\varepsilon^2, & c<\varepsilon,\\
2c\varepsilon-\varepsilon^2, & c\ge \varepsilon,
\end{cases}
\]
so that for all $c\ge 0$,
\[
\mathbb P\bigl(|W_1^{deb}-c|>\varepsilon\bigr)\le \mathbb P\bigl(T>t(c,\varepsilon)\bigr).
\]

\paragraph{Step 3: bound $T$ by marginal empirical Wasserstein errors.}
Start from the definition of $T$ and use the triangle inequality for $|\cdot|$:
\begin{align*}
T
&=
\Bigl|
\tfrac12\,W_1(\hat\mu',\hat\nu')^{2}
+\tfrac12\,W_1(\hat\mu'',\hat\nu'')^{2}
-\tfrac12\,W_1(\hat\mu',\hat\mu'')^{2}
-\tfrac12\,W_1(\hat\nu',\hat\nu'')^{2}
- c^2
\Bigr|
\\
&\le
\tfrac12\Bigl|W_1(\hat\mu',\hat\nu')^{2}-c^2\Bigr|
+\tfrac12\Bigl|W_1(\hat\mu'',\hat\nu'')^{2}-c^2\Bigr|
+\tfrac12\,W_1(\hat\mu',\hat\mu'')^{2}
+\tfrac12\,W_1(\hat\nu',\hat\nu'')^{2}.
\end{align*}

\smallskip
\noindent\emph{Step 3.1: control the cross-domain square terms.}
Let $u:=W_1(\hat\mu',\hat\nu')$.  Then
\[
|u^2-c^2| = |u-c|\,|u+c| \le |u-c|\,(|u-c|+2c)=2c|u-c|+|u-c|^2.
\]
Multiplying by $\tfrac12$ gives
\[
\tfrac12|u^2-c^2| \le c|u-c|+\tfrac12|u-c|^2.
\]
Applying this with $u=W_1(\hat\mu',\hat\nu')$ and again with
$u=W_1(\hat\mu'',\hat\nu'')$ yields
\begin{align*}
\tfrac12\Bigl|W_1(\hat\mu',\hat\nu')^{2}-c^2\Bigr|
&\le
c\,\bigl|W_1(\hat\mu',\hat\nu')-c\bigr|
+\tfrac12\,\bigl|W_1(\hat\mu',\hat\nu')-c\bigr|^2,
\\
\tfrac12\Bigl|W_1(\hat\mu'',\hat\nu'')^{2}-c^2\Bigr|
&\le
c\,\bigl|W_1(\hat\mu'',\hat\nu'')-c\bigr|
+\tfrac12\,\bigl|W_1(\hat\mu'',\hat\nu'')-c\bigr|^2.
\end{align*}

\smallskip
\noindent\emph{Step 3.2: relate $|W_1(\hat\mu',\hat\nu')-c|$ to $W_1(\mu,\hat\mu')$ and $W_1(\nu,\hat\nu')$.}
Using the triangle inequality for $W_1$,
\[
W_1(\hat\mu',\hat\nu')
\le W_1(\hat\mu',\mu)+W_1(\mu,\nu)+W_1(\nu,\hat\nu')
= W_1(\hat\mu',\mu)+c+W_1(\nu,\hat\nu'),
\]
hence $W_1(\hat\mu',\hat\nu')-c \le W_1(\hat\mu',\mu)+W_1(\nu,\hat\nu')$.
Similarly,
\[
c=W_1(\mu,\nu)\le W_1(\mu,\hat\mu')+W_1(\hat\mu',\hat\nu')+W_1(\hat\nu',\nu),
\]
so $c-W_1(\hat\mu',\hat\nu')\le W_1(\mu,\hat\mu')+W_1(\nu,\hat\nu')$.
Combining the two inequalities,
\[
\bigl|W_1(\hat\mu',\hat\nu')-c\bigr|
\le W_1(\mu,\hat\mu')+W_1(\nu,\hat\nu').
\]
The same argument gives
\[
\bigl|W_1(\hat\mu'',\hat\nu'')-c\bigr|
\le W_1(\mu,\hat\mu'')+W_1(\nu,\hat\nu'').
\]

\smallskip
\noindent\emph{Step 3.3: relate $W_1(\hat\mu',\hat\mu'')$ and $W_1(\hat\nu',\hat\nu'')$ to the same marginal errors.}
Again by the triangle inequality,
\[
W_1(\hat\mu',\hat\mu'')\le W_1(\hat\mu',\mu)+W_1(\mu,\hat\mu''),
\qquad
W_1(\hat\nu',\hat\nu'')\le W_1(\hat\nu',\nu)+W_1(\nu,\hat\nu'').
\]

\smallskip
\noindent\emph{Step 3.4: assemble the bound.}
Introduce the shorthand nonnegative random variables
\[
u':=W_1(\mu,\hat\mu'),\quad u'':=W_1(\mu,\hat\mu''),\quad
v':=W_1(\nu,\hat\nu'),\quad v'':=W_1(\nu,\hat\nu'').
\]
Then Steps 3.1--3.3 imply
\begin{align*}
T
&\le
c(u'+v')+\tfrac12(u'+v')^2
+c(u''+v'')+\tfrac12(u''+v'')^2
+\tfrac12(u'+u'')^2+\tfrac12(v'+v'')^2.
\end{align*}
Now use $(a+b)^2\le 2a^2+2b^2$ repeatedly to simplify:
\begin{align*}
\tfrac12(u'+v')^2 \le u'^2+v'^2,\qquad
\tfrac12(u''+v'')^2 \le u''^2+v''^2,\qquad
\tfrac12(u'+u'')^2 \le u'^2+u''^2,\qquad
\tfrac12(v'+v'')^2 \le v'^2+v''^2.
\end{align*}
Substituting yields the clean bound
\[
T \le c(u'+u''+v'+v'') + 2\bigl(u'^2+u''^2+v'^2+v''^2\bigr).
\]

In particular, on the event $\{u'\le \varepsilon_0,\ u''\le \varepsilon_0,\ v'\le \varepsilon_0,\ v''\le \varepsilon_0\}$,
we have
\[
T \le 4c\varepsilon_0 + 8\varepsilon_0^2.
\]
Hence,
\[
\{T>4c\varepsilon_0+8\varepsilon_0^2\}
\subseteq
\{u'>\varepsilon_0\}\cup\{u''>\varepsilon_0\}\cup\{v'>\varepsilon_0\}\cup\{v''>\varepsilon_0\},
\]
and by the union bound,
\[
\mathbb P\bigl(T>4c\varepsilon_0+8\varepsilon_0^2\bigr)
\le
\mathbb P(u'>\varepsilon_0)+\mathbb P(u''>\varepsilon_0)+\mathbb P(v'>\varepsilon_0)+\mathbb P(v''>\varepsilon_0).
\]

\paragraph{Step 4: apply traditional concentration inequality.}
By assumption, $\mu$ and $\nu$ have finite squared-exponential moments on $\mathbb R^d$.
Lemma~\ref{lem:sqexp-implies-T1} and ~\ref{lem:bolley} implies that there exist constants
$\lambda_S,\lambda_T>0$, depending only on the squared-exponential moments of $\mu$ and
$\nu$ respectively, such that for any $\varepsilon_0>0$ there exists $N$ with the following property:
whenever $N_S/2\ge N$ and $N_T/2\ge N$,
\[
\mathbb P\bigl(W_1(\mu,\hat\mu')>\varepsilon_0\bigr)\le \exp\!\Bigl(-\tfrac{\lambda_S N_S\varepsilon_0^2}{4}\Bigr),
\qquad
\mathbb P\bigl(W_1(\mu,\hat\mu'')>\varepsilon_0\bigr)\le \exp\!\Bigl(-\tfrac{\lambda_S N_S\varepsilon_0^2}{4}\Bigr),
\]
and similarly
\[
\mathbb P\bigl(W_1(\nu,\hat\nu')>\varepsilon_0\bigr)\le \exp\!\Bigl(-\tfrac{\lambda_T N_T\varepsilon_0^2}{4}\Bigr),
\qquad
\mathbb P\bigl(W_1(\nu,\hat\nu'')>\varepsilon_0\bigr)\le \exp\!\Bigl(-\tfrac{\lambda_T N_T\varepsilon_0^2}{4}\Bigr).
\]
Therefore, for $N_S,N_T$ large enough,
\[
\mathbb P\bigl(T>4c\varepsilon_0+8\varepsilon_0^2\bigr)
\le
2\exp\!\Bigl(-\tfrac{\lambda_S N_S\varepsilon_0^2}{4}\Bigr)
+
2\exp\!\Bigl(-\tfrac{\lambda_T N_T\varepsilon_0^2}{4}\Bigr).
\]

\paragraph{Step 5: choose $\varepsilon_0$ to match the deviation level $\varepsilon$.}
Recall from Step 2 that
\[
\mathbb P\bigl(|W_1^{deb}-c|>\varepsilon\bigr)\le \mathbb P\bigl(T>t(c,\varepsilon)\bigr),
\qquad
t(c,\varepsilon)=
\begin{cases}
\varepsilon^2, & c<\varepsilon,\\
2c\varepsilon-\varepsilon^2, & c\ge \varepsilon.
\end{cases}
\]
We now select $\varepsilon_0$ so that
\[
4c\varepsilon_0+8\varepsilon_0^2=t(c,\varepsilon).
\]
Let $a:=c/\varepsilon\ge 0$. Solving this quadratic in the two regimes yields the
closed-form expression
\[
\varepsilon_0^2 = \frac{V(a)\,\varepsilon^2}{8},
\]
where
\[
V(a)=
\begin{cases}
a^2+1-a\sqrt{a^2+2}, & a<1,\\
a^2+2a-1-a\sqrt{a^2+4a-2}, & a\ge 1.
\end{cases}
\]
Define $V_\varepsilon := V\!\bigl(c/\varepsilon\bigr)=V\!\bigl(W_1(\mu,\nu)/\varepsilon\bigr)$.
As shown in the ``Range of $V(a)$'' derivation, $V_\varepsilon\in[2-\sqrt3,2)$ and depends
only on $c/\varepsilon$.

With this choice, $\{T>t(c,\varepsilon)\}=\{T>4c\varepsilon_0+8\varepsilon_0^2\}$, hence
\begin{align*}
\mathbb P\bigl(|W_1^{deb}-c|>\varepsilon\bigr)
&\le
\mathbb P\bigl(T>4c\varepsilon_0+8\varepsilon_0^2\bigr)
\\
&\le
2\exp\!\Bigl(-\tfrac{\lambda_S N_S\varepsilon_0^2}{4}\Bigr)
+
2\exp\!\Bigl(-\tfrac{\lambda_T N_T\varepsilon_0^2}{4}\Bigr)
\\
&=
2\exp\!\Bigl(-\tfrac{\lambda_S N_S V_\varepsilon \varepsilon^2}{32}\Bigr)
+
2\exp\!\Bigl(-\tfrac{\lambda_T N_T V_\varepsilon \varepsilon^2}{32}\Bigr).
\end{align*}

Finally, note $c=W_1(\mu,\nu)=W_{\beta}(\mathcal D_X^S,\mathcal D_X^T)$ when $\beta=0$, and
$W_1^{deb}=W_{\beta}^{deb}$ when $\beta=0$, so the above inequality is exactly
\eqref{eq:deb-bd}. This completes the proof.

\paragraph{Range of $V(a)$.}
Recall
\[
V(a)=
\begin{cases}
V_1(a):=a^2+1-a\sqrt{a^2+2}, & 0\le a<1,\\[3pt]
V_2(a):=a^2+2a-1-a\sqrt{a^2+4a-2}, & a\ge 1.
\end{cases}
\]
First, the two branches agree at $a=1$:
\[
V_1(1)=2-\sqrt3,\qquad
V_2(1)=2-\sqrt3.
\]

\smallskip
\noindent\textbf{(i) Monotonicity on $[0,1]$.}
Differentiate $V_1$ on $(0,1)$:
\[
V_1'(a)=2a-\sqrt{a^2+2}-\frac{a^2}{\sqrt{a^2+2}}
=2a-\frac{2(a^2+1)}{\sqrt{a^2+2}}.
\]
For $a\ge 0$, we have $a\sqrt{a^2+2}\le a^2+1$ since
\[
(a^2+1)^2-a^2(a^2+2)=1>0.
\]
Thus $\frac{a^2+1}{\sqrt{a^2+2}}\ge a$, which implies $V_1'(a)\le 0$ on $(0,1)$.
Hence $V_1$ is decreasing on $[0,1]$, so
\[
V_1(a)\in\bigl[V_1(1),\,V_1(0)\bigr]=[\,2-\sqrt3,\;1\,].
\]

\smallskip
\noindent\textbf{(ii) Monotonicity on $[1,\infty)$.}
Let $s(a):=\sqrt{a^2+4a-2}$. Then for $a>1$,
\[
V_2'(a)=2a+2-s(a)-\frac{a(a+2)}{s(a)}.
\]
Since $s(a)>0$, the inequality $V_2'(a)\ge 0$ is equivalent to
\[
(a+1)s(a)\ \ge\ a^2+3a-1.
\]
Squaring both sides (both sides are nonnegative for $a\ge 1$) gives
\[
(a+1)^2(a^2+4a-2)\ \ge\ (a^2+3a-1)^2,
\]
and the difference factors as
\[
(a+1)^2(a^2+4a-2)-(a^2+3a-1)^2 = 3(2a-1)\ \ge\ 0 \qquad (a\ge 1).
\]
Therefore $V_2'(a)\ge 0$ for all $a\ge 1$, i.e., $V_2$ is increasing on $[1,\infty)$.
In particular,
\[
V_2(a)\ge V_2(1)=2-\sqrt3 \qquad (a\ge 1).
\]

\smallskip
\noindent\textbf{(iii) Upper bound $V(a)<2$ and $\lim_{a\to\infty}V(a)=2$.}
For $a\ge 1$, write $s(a)=\sqrt{(a+2)^2-6}$. Then
\[
(a+2-s(a))(a+2+s(a))=(a+2)^2-s(a)^2=6,
\quad\text{so}\quad
a+2-s(a)=\frac{6}{a+2+s(a)}.
\]
Using $V_2(a)=a(a+2)-1-a\,s(a)$, we get
\[
2-V_2(a)=3-a\,(a+2-s(a))
=3-\frac{6a}{a+2+s(a)}.
\]
Since $s(a)>a-2$ for $a\ge 1$ (indeed $s(a)=\sqrt{a^2+4a-2}>a$), we have
$a+2+s(a)>2a$, hence $\frac{6a}{a+2+s(a)}<3$, which implies $2-V_2(a)>0$ and thus
$V_2(a)<2$ for every finite $a$.
Moreover, as $a\to\infty$, one has $a+2+s(a)\sim 2a$, hence
$\frac{6a}{a+2+s(a)}\to 3$ and therefore $V_2(a)\to 2$.

\smallskip
\noindent\textbf{Conclusion.}
Combining (i)--(iii), the global minimum is attained at $a=1$ with
\[
\min_{a\ge 0}V(a)=2-\sqrt3,
\]
and the supremum equals $2$ but is not attained:
\[
V(a)\in[\,2-\sqrt3,\;2\,),\qquad a\ge 0.
\]

\end{proof}

\subsection{Proof of Lemma~\ref{lem:stability-of-entropic-optimal-transport-coupling}}
\begin{proof}
We follow \cite{eckstein2022quantitative}. 
Recall that the entropically regularized OT problem is
\[
S^\varepsilon_{\mathrm{ent}}(\mu,\nu,c)
:=\inf_{\pi\in\Pi(\mu,\nu)} \int c\,d\pi+\varepsilon\,\mathrm{KL}(\pi\,\|\,\mu\otimes\nu),
\]
and for fixed $\varepsilon>0$ one may assume $\varepsilon=1$ by dividing the
objective by $\varepsilon$ and using the cost $c/\varepsilon$.

\smallskip
\noindent\textbf{Step 1 (Reduction to $\varepsilon=1$ and identification of $L=1/\beta$).}
Let $c(x_1,x_2):=\rho_{\mathcal X}(x_1,x_2)$.
The $\beta$-entropic OT objective
\[
\int c\,d\pi+\beta\,\mathrm{KL}(\pi\,\|\,\mu\otimes\nu)
\]
has the \emph{same optimizer} as
\[
\int \frac{1}{\beta}c\,d\pi+\mathrm{KL}(\pi\,\|\,\mu\otimes\nu),
\]
since the two objectives differ only by a multiplicative factor $\beta$.
Hence the optimal coupling for $W_\beta(\mu,\nu)$ equals the optimizer of
$S^1_{\mathrm{ent}}(\mu,\nu,c_\beta)$ with $c_\beta:=c/\beta$.

Consider the product space $\mathcal X\times\mathcal X$ with metric
\[
\rho\big((x_1,x_2),(x_1',x_2')\big):=\rho_{\mathcal X}(x_1,x_1')+\rho_{\mathcal X}(x_2,x_2').
\]
By the triangle inequality,
\[
\big|c(x_1,x_2)-c(x_1',x_2')\big|
=\big|\rho_{\mathcal X}(x_1,x_2)-\rho_{\mathcal X}(x_1',x_2')\big|
\le \rho_{\mathcal X}(x_1,x_1')+\rho_{\mathcal X}(x_2,x_2')
=\rho((x_1,x_2),(x_1',x_2')),
\]
so $c$ is $1$-Lipschitz w.r.t.\ $\rho$, and therefore $c_\beta=c/\beta$ satisfies
the Lipschitz condition (AL) of \citet{eckstein2022quantitative} with constant
\[
L=\mathrm{Lip}(c_\beta)=\frac{1}{\beta}.
\]

\smallskip
\noindent\textbf{Step 2 (Apply Theorem 3.11).}
Let $\mu:=\mathcal D_X^S$, $\nu:=\mathcal D_X^T$, and similarly
$\hat\mu:=\widehat{\mathcal D_X^S}$, $\hat\nu:=\widehat{\mathcal D_X^T}$.
Let $\gamma^*$ and $\hat\gamma^*$ be the optimizers of
$S^1_{\mathrm{ent}}(\mu,\nu,c_\beta)$ and $S^1_{\mathrm{ent}}(\hat\mu,\hat\nu,c_\beta)$,
respectively.

Assume $\mu$ and $\nu$ have finite squared-exponential moments. Then by
Lemma~3.10(ii) in \citet{eckstein2022quantitative}, the pair of marginals
satisfies the inequality (I$_1$) with some finite constant $C_1>0
$ depending only on these squared-exponential moments.

Now apply Theorem~3.11 of \citet{eckstein2022quantitative} with $N=2$ and $p=q=1$.
Since $N^{1/q-1/p}=1$, we obtain
\[
W_1(\gamma^*,\hat\gamma^*)
\le \Delta + C_1\,(2L\Delta)^{1/2},
\qquad
\Delta:=W_1\big((\mu,\nu);(\hat\mu,\hat\nu)\big),
\].

\smallskip
\noindent\textbf{Step 3 (Upper bound $\Delta$ by $\varLambda$).}
Let $\pi_S$ be an optimal coupling between $\mu$ and $\hat\mu$, and $\pi_T$ an
optimal coupling between $\nu$ and $\hat\nu$. Then $\pi_S\otimes\pi_T$ is a coupling
between $(\mu,\nu)$ and $(\hat\mu,\hat\nu)$ on $\mathcal X\times\mathcal X$, and its
expected $\rho$-cost equals
\[
\int \rho_{\mathcal X}(x,x')\,d\pi_S(x,x')+\int \rho_{\mathcal X}(y,y')\,d\pi_T(y,y')
= W_1(\mu,\hat\mu)+W_1(\nu,\hat\nu)=:\varLambda.
\]
Taking the infimum over all couplings yields $\Delta\le \varLambda$. Therefore,
\[
W_1(\gamma^*,\hat\gamma^*)
\le \varLambda + C_1\sqrt{2L\varLambda}
= \varLambda + C_1\sqrt{\frac{2}{\beta}\,\varLambda}.
\]

\smallskip
\noindent\textbf{Step 4 (Definition of $\lambda_{\gamma^*}$ and the factor $2\sqrt{2}$).}
Define
\[
\lambda_{\gamma^*}:=\frac{4}{C_1^2}\quad\Longleftrightarrow\quad C_1=\frac{2}{\sqrt{\lambda_{\gamma^*}}}.
\]
Then
\[
C_1\sqrt{\frac{2}{\beta}\,\varLambda}
=\frac{2}{\sqrt{\lambda_{\gamma^*}}}\sqrt{\frac{2}{\beta}}\sqrt{\varLambda}
=2\sqrt{\frac{2}{\beta\lambda_{\gamma^*}}}\,\sqrt{\varLambda},
\]
which yields the claimed bound. Finally, $\lambda_{\gamma^*}>0$ depends only on the
squared-exponential moments of $\mu$ and $\nu$ because $C_1$ does
via Lemma~3.10(ii) \citep{eckstein2022quantitative}.
\end{proof}

\subsection{Proof of Theorem~\ref{thm:concentration-total-pair}}
\begin{proof} Let
\[
\mu:=\mathcal D_X^S,\qquad \nu:=\mathcal D_X^T,\qquad
\hat\mu:=\widehat{\mathcal D_X^S}=\frac1{N_S}\sum_{i=1}^{N_S}\delta_{X_i^{(S)}},\qquad
\hat\nu:=\widehat{\mathcal D_X^T}=\frac1{N_T}\sum_{j=1}^{N_T}\delta_{X_j^{(T)}}.
\]
Let $\gamma^*$ be the population entropic OT coupling between $\mu$ and $\nu$ (with $\beta>0$),
and let $\hat\gamma^*=(\hat\gamma^*_{ij})\in\mathbb R_+^{N_S\times N_T}$ be the discrete optimal
coupling solving $W_\beta(\hat\mu,\hat\nu)$.
It satisfies the marginal constraints
\[
\sum_{j=1}^{N_T}\hat\gamma^*_{ij}=\frac1{N_S}\quad(\forall i),\qquad
\sum_{i=1}^{N_S}\hat\gamma^*_{ij}=\frac1{N_T}\quad(\forall j).
\]
Associate to $\hat\gamma^*$ a probability measure on $\mathbb R^d\times\mathbb R^d$:
\[
\bar\gamma^* \;:=\; \sum_{i=1}^{N_S}\sum_{j=1}^{N_T}\hat\gamma^*_{ij}\,
\delta_{(X_i^{(S)},\,X_j^{(T)})}
\;\in\;\Gamma(\hat\mu,\hat\nu).
\]

Throughout, Wasserstein--$1$ distances on $\mathbb R^d$ use the Euclidean norm $\|\cdot\|$,
and on $\mathbb R^d\times\mathbb R^d$ we use the metric
\[
\rho\bigl((x_S,x_T),(x_S',x_T')\bigr):=\|x_S-x_S'\|+\|x_T-x_T'\|.
\]

\paragraph{Step 1: define the population quantity we will concentrate around and the bias $\varDelta$.}
Define the function $f:\mathbb R^d\times\mathbb R^d\to\mathbb R_+$ by
\[
f(x_S,x_T)
\;:=\;
\mathbb E_{\,y_S\sim \mathcal D_{Y|X=x_S}^S,\;y_T\sim \mathcal D_{Y|X=x_T}^T}
\bigl[\ \|y_S-y_T\|\ \bigr].
\]
Since $\mathcal Y\subset\mathbb R^{d'}$ is bounded with
$M=\sup_{y,y'\in\mathcal Y}\|y-y'\|$, we have $0\le f(x_S,x_T)\le M$ for all $(x_S,x_T)$.

Recall that
\[
S_{pair}(x_S,x_T)=W_1\bigl(\mathcal D_{Y|X=x_S}^S,\ \mathcal D_{Y|X=x_T}^T\bigr),
\qquad
S_{Cpt}^{\gamma^*}=\mathbb E_{(x_S,x_T)\sim\gamma^*}\bigl[S_{pair}(x_S,x_T)\bigr].
\]
Because $f(x_S,x_T)$ is the transport cost under the \emph{independent} coupling
$\mathcal D_{Y|x_S}^S\otimes \mathcal D_{Y|x_T}^T$, while $S_{pair}(x_S,x_T)$ is the infimum
over all couplings, we have
\[
f(x_S,x_T)-S_{pair}(x_S,x_T)\ge 0.
\]
Define the (constant) bias term
\[
\varDelta \;:=\; \mathbb E_{(x_S,x_T)\sim\gamma^*}\Bigl[f(x_S,x_T)-S_{pair}(x_S,x_T)\Bigr].
\]
Then
\[
S_{Cpt}^{\gamma^*}+\varDelta
=
\mathbb E_{\gamma^*}\bigl[f(x_S,x_T)\bigr].
\]
Hence it suffices to prove concentration of $\hat S_{Cpt}$ around $\mathbb E_{\gamma^*}[f]$.

\paragraph{Step 2: decompose the error into a $Y$-noise term and a coupling-stability term.}
By Definition~\ref{def:estimator-total-pair-yx},
\[
\hat S_{Cpt}
=
\sum_{i=1}^{N_S}\sum_{j=1}^{N_T}\|Y_i^{(S)}-Y_j^{(T)}\|\ \hat\gamma^*_{ij}.
\]
Add and subtract $\mathbb E_{\bar\gamma^*}[f]$:
\[
|\hat S_{Cpt}-\mathbb E_{\gamma^*}[f]|
\le
\underbrace{\bigl|\hat S_{Cpt}-\mathbb E_{\bar\gamma^*}[f]\bigr|}_{=:A}
+
\underbrace{\bigl|\mathbb E_{\bar\gamma^*}[f]-\mathbb E_{\gamma^*}[f]\bigr|}_{=:B}.
\]
We will bound $A$ by McDiarmid's inequality and $B$ by Lipschitzness of $f$ and stability of entropic OT.

\paragraph{Step 3: concentration of $A$ via McDiarmid inequality~\ref{lem:mcdiarmid}.}
Condition on all covariates $\{X_i^{(S)}\}_{i=1}^{N_S}$ and $\{X_j^{(T)}\}_{j=1}^{N_T}$.
Under this conditioning, $\hat\gamma^*$ is fixed (it depends only on the covariates),
and the labels $\{Y_i^{(S)}\}$ are independent with $Y_i^{(S)}\sim \mathcal D_{Y|X=X_i^{(S)}}^S$,
and similarly $\{Y_j^{(T)}\}$ are independent with $Y_j^{(T)}\sim \mathcal D_{Y|X=X_j^{(T)}}^T$.

Define the function of all labels
\[
F\bigl(\{Y_i^{(S)}\}_{i=1}^{N_S},\{Y_j^{(T)}\}_{j=1}^{N_T}\bigr)
:=
\sum_{i=1}^{N_S}\sum_{j=1}^{N_T}\|Y_i^{(S)}-Y_j^{(T)}\|\ \hat\gamma^*_{ij}.
\]
Then $A=\bigl|F-\mathbb E[F\mid \{X\}]\bigr|$.

\smallskip
\noindent\emph{Bounded differences for changing one source label.}
Fix an index $i$ and replace $Y_i^{(S)}$ by another value ${Y_i^{(S)}}'$ in $\mathcal Y$,
keeping all other labels the same. Then, by the triangle inequality,
\begin{align*}
&\Bigl|F(\ldots,Y_i^{(S)},\ldots)-F(\ldots,{Y_i^{(S)}}',\ldots)\Bigr|
\\
&=
\Bigl|\sum_{j=1}^{N_T}\Bigl(\|Y_i^{(S)}-Y_j^{(T)}\|-\|{Y_i^{(S)}}'-Y_j^{(T)}\|\Bigr)\hat\gamma^*_{ij}\Bigr|
\\
&\le
\sum_{j=1}^{N_T}\Bigl|\|Y_i^{(S)}-Y_j^{(T)}\|-\|{Y_i^{(S)}}'-Y_j^{(T)}\|\Bigr|\hat\gamma^*_{ij}
\\
&\le
\sum_{j=1}^{N_T}\|Y_i^{(S)}-{Y_i^{(S)}}'\|\ \hat\gamma^*_{ij}
\;\le\;
M\sum_{j=1}^{N_T}\hat\gamma^*_{ij}
\;=\;\frac{M}{N_S}.
\end{align*}

\smallskip
\noindent\emph{Bounded differences for changing one target label.}
Similarly, replacing one $Y_j^{(T)}$ by ${Y_j^{(T)}}'$ changes $F$ by at most
\[
\frac{M}{N_T}.
\]

\smallskip
\noindent\emph{Apply McDiarmid.}
By McDiarmid nequality~\ref{lem:mcdiarmid}, for any $\varepsilon_1>0$,
\begin{align*}
\mathbb P\Bigl(\,|F-\mathbb E[F\mid \{X\}]|>\varepsilon_1\ \Bigm|\ \{X\}\Bigr)
&\le
2\exp\Bigl(
-\frac{2\varepsilon_1^2}{\sum_{i=1}^{N_S}(M/N_S)^2+\sum_{j=1}^{N_T}(M/N_T)^2}
\Bigr)
\\
&=
2\exp\Bigl(
-\frac{2\varepsilon_1^2}{M^2\bigl(\frac1{N_S}+\frac1{N_T}\bigr)}
\Bigr)
\\
&=
2\exp\Bigl(
-\frac{2N_S N_T\,\varepsilon_1^2}{(N_S+N_T)\,M^2}
\Bigr).
\end{align*}
Taking expectation over $\{X\}$ gives the unconditional bound
\[
\mathbb P(A>\varepsilon_1)
\le
2\exp\Bigl(
-\frac{2N_S N_T\,\varepsilon_1^2}{(N_S+N_T)\,M^2}
\Bigr).
\]

\smallskip
\noindent\emph{Compute the conditional mean $\mathbb E[F\mid \{X\}]$ and identify $\mathbb E_{\bar\gamma^*}[f]$.}
Since $Y_i^{(S)}$ and $Y_j^{(T)}$ are independent given $\{X\}$, we have
\[
\mathbb E\bigl[\|Y_i^{(S)}-Y_j^{(T)}\|\mid \{X\}\bigr]
=
f\bigl(X_i^{(S)},X_j^{(T)}\bigr),
\]
hence
\[
\mathbb E[F\mid \{X\}]
=
\sum_{i=1}^{N_S}\sum_{j=1}^{N_T} f\bigl(X_i^{(S)},X_j^{(T)}\bigr)\ \hat\gamma^*_{ij}
=
\mathbb E_{(x_S,x_T)\sim \bar\gamma^*}\bigl[f(x_S,x_T)\bigr]
=\mathbb E_{\bar\gamma^*}[f].
\]
Therefore $A=|F-\mathbb E[F\mid\{X\}]|$ is exactly the deviation term we bounded.

\paragraph{Step 4: bound $B$ using Lipschitzness of $f$ and stability of entropic OT.}

\subparagraph{Step 4.1: show $f$ is $(M L_{Y|X})$-Lipschitz on $\mathbb R^d\times\mathbb R^d$.}
We prove Lipschitzness in the first coordinate; the second is symmetric.
Fix $x_T$ and let $x_S,x_S'\in\mathbb R^d$. Write $\pi_{x_S}^S:=\mathcal D_{Y|X=x_S}^S$ and
$\pi_{x_T}^T:=\mathcal D_{Y|X=x_T}^T$. Then
\begin{align*}
&f(x_S,x_T)-f(x_S',x_T)
\\
&=
\int_{\mathcal Y}\int_{\mathcal Y}\|y_S-y_T\|\,
\bigl(\mathrm d\pi_{x_S}^S(y_S)-\mathrm d\pi_{x_S'}^S(y_S)\bigr)\,\mathrm d\pi_{x_T}^T(y_T).
\end{align*}
Taking absolute values and using Fubini,
\begin{align*}
\bigl|f(x_S,x_T)-f(x_S',x_T)\bigr|
&\le
\int_{\mathcal Y}\Bigl|
\int_{\mathcal Y}\|y_S-y_T\|\,
\bigl(\mathrm d\pi_{x_S}^S(y_S)-\mathrm d\pi_{x_S'}^S(y_S)\bigr)
\Bigr|\,\mathrm d\pi_{x_T}^T(y_T).
\end{align*}
For each fixed $y_T$, define
\[
g_{y_T}(y_S):=\frac{2}{M}\|y_S-y_T\|-1.
\]
Since $\|y_S-y_T\|\in[0,M]$ for $y_S,y_T\in\mathcal Y$, we have $g_{y_T}\in[-1,1]$ and thus
$\|g_{y_T}\|_\infty\le 1$. Also note that
\[
\|y_S-y_T\|=\frac{M}{2}\bigl(g_{y_T}(y_S)+1\bigr),
\]
and $\int 1\,(\mathrm d\pi_{x_S}^S-\mathrm d\pi_{x_S'}^S)=0$, hence
\begin{align*}
\Bigl|
\int_{\mathcal Y}\|y_S-y_T\|\,
\bigl(\mathrm d\pi_{x_S}^S-\mathrm d\pi_{x_S'}^S\bigr)
\Bigr|
&=
\frac{M}{2}\Bigl|
\int_{\mathcal Y}g_{y_T}(y_S)\,
\bigl(\mathrm d\pi_{x_S}^S-\mathrm d\pi_{x_S'}^S\bigr)
\Bigr|
\\
&\le
M\ d_{\mathrm{TV}}(\pi_{x_S}^S,\pi_{x_S'}^S),
\end{align*}
where we used the dual representation
\[
d_{\mathrm{TV}}(P,Q)=\frac12\sup_{\|g\|_\infty\le 1}\Bigl|\int g\,\mathrm d(P-Q)\Bigr|.
\]
Plugging this into the previous bound and using that $\pi_{x_T}^T$ has total mass $1$ gives
\[
\bigl|f(x_S,x_T)-f(x_S',x_T)\bigr|
\le
M\ d_{\mathrm{TV}}\bigl(\mathcal D_{Y|X=x_S}^S,\mathcal D_{Y|X=x_S'}^S\bigr)
\le
M L_{Y|X}\|x_S-x_S'\|.
\]
Similarly,
\[
\bigl|f(x_S,x_T)-f(x_S,x_T')\bigr|
\le
M L_{Y|X}\|x_T-x_T'\|.
\]
Combining both, for all $(x_S,x_T),(x_S',x_T')$,
\[
\bigl|f(x_S,x_T)-f(x_S',x_T')\bigr|
\le
M L_{Y|X}\bigl(\|x_S-x_S'\|+\|x_T-x_T'\|\bigr)
=
M L_{Y|X}\,\rho\bigl((x_S,x_T),(x_S',x_T')\bigr).
\]
Thus $f$ is $(M L_{Y|X})$-Lipschitz on $(\mathbb R^d\times\mathbb R^d,\rho)$.

\subparagraph{Step 4.2: convert $B$ to a Wasserstein distance on couplings.}
By the Kantorovich--Rubinstein duality for $W_1$ on $(\mathbb R^d\times\mathbb R^d,\rho)$,
for any $K$-Lipschitz function $\varphi$,
\[
\Bigl|\mathbb E_{\pi}[\varphi]-\mathbb E_{\pi'}[\varphi]\Bigr|\le K\,W_1(\pi,\pi').
\]
Apply this with $\varphi=f$ and $K=M L_{Y|X}$, $\pi=\bar\gamma^*$, $\pi'=\gamma^*$:
\[
B=\bigl|\mathbb E_{\bar\gamma^*}[f]-\mathbb E_{\gamma^*}[f]\bigr|
\le
M L_{Y|X}\,W_1(\gamma^*,\bar\gamma^*).
\]

\subparagraph{Step 4.3: apply entropic OT stability and then concentration of marginals.}
By Lemma~\ref{lem:stability-of-entropic-optimal-transport-coupling},
\[
W_1(\gamma^*,\bar\gamma^*)
\le
\varLambda +2\sqrt{\frac{2}{\beta \lambda_{\gamma^*}}}\sqrt{\varLambda},
\qquad
\varLambda:=W_1(\mu,\hat\mu)+W_1(\nu,\hat\nu).
\]
Therefore,
\[
B
\le
M L_{Y|X}\Bigl(\varLambda +2\sqrt{\frac{2}{\beta \lambda_{\gamma^*}}}\sqrt{\varLambda}\Bigr).
\]

Fix $\varepsilon_2>0$. If $\varLambda\le 2\varepsilon_2$, then
\[
\sqrt{\varLambda}\le \sqrt{2\varepsilon_2},
\qquad
\varLambda +2\sqrt{\frac{2}{\beta \lambda_{\gamma^*}}}\sqrt{\varLambda}
\le
2\varepsilon_2 + 4\sqrt{\frac{\varepsilon_2}{\beta \lambda_{\gamma^*}}},
\]
and hence
\[
B \le 2M L_{Y|X}\varepsilon_2 + 4M L_{Y|X}\sqrt{\frac{\varepsilon_2}{\beta \lambda_{\gamma^*}}}.
\]
Consequently,
\[
\Bigl\{B>2M L_{Y|X}\varepsilon_2 + 4M L_{Y|X}\sqrt{\frac{\varepsilon_2}{\beta \lambda_{\gamma^*}}}\Bigr\}
\subseteq
\{\varLambda>2\varepsilon_2\}.
\]
By the union bound,
\[
\mathbb P(\varLambda>2\varepsilon_2)
\le
\mathbb P\bigl(W_1(\mu,\hat\mu)>\varepsilon_2\bigr)
+\mathbb P\bigl(W_1(\nu,\hat\nu)>\varepsilon_2\bigr).
\]

Since $\mu,\nu$ have finite squared-exponential moments on $\mathbb R^d$, by Lemma~\ref{lem:sqexp-implies-T1} and ~\ref{lem:bolley}, there exist constants
$\lambda_S,\lambda_T>0$ depending only on these moments such that for any $\varepsilon_2>0$
there exists $N$ with, whenever $N_S,N_T>N$,
\[
\mathbb P\bigl(W_1(\mu,\hat\mu)>\varepsilon_2\bigr)\le \exp\Bigl(-\frac{\lambda_S N_S\varepsilon_2^2}{2}\Bigr),
\qquad
\mathbb P\bigl(W_1(\nu,\hat\nu)>\varepsilon_2\bigr)\le \exp\Bigl(-\frac{\lambda_T N_T\varepsilon_2^2}{2}\Bigr).
\]
Thus we conclude
\[
\mathbb P\Bigl(B>2M L_{Y|X}\varepsilon_2 + 4M L_{Y|X}\sqrt{\frac{\varepsilon_2}{\beta \lambda_{\gamma^*}}}\Bigr)
\le
\exp\Bigl(-\frac{\lambda_S N_S\varepsilon_2^2}{2}\Bigr)
+\exp\Bigl(-\frac{\lambda_T N_T\varepsilon_2^2}{2}\Bigr).
\]

\paragraph{Step 5: combine the two parts.}
For any $\varepsilon_1,\varepsilon_2>0$, by the union bound,
\begin{align*}
\mathbb P\Bigl(|\hat S_{Cpt}-\mathbb E_{\gamma^*}[f]|>\varepsilon_1
+2M L_{Y|X}\varepsilon_2 + 4M L_{Y|X}\sqrt{\tfrac{\varepsilon_2}{\beta \lambda_{\gamma^*}}}\Bigr)
&\le
\mathbb P(A>\varepsilon_1)
\\
&\quad+
\mathbb P\Bigl(B>2M L_{Y|X}\varepsilon_2 + 4M L_{Y|X}\sqrt{\tfrac{\varepsilon_2}{\beta \lambda_{\gamma^*}}}\Bigr)
\\
&\le
2\exp\Bigl(-\frac{2N_S N_T\,\varepsilon_1^2}{(N_S+N_T)\,M^2}\Bigr)
\\
&\quad+
\exp\Bigl(-\frac{\lambda_S N_S\varepsilon_2^2}{2}\Bigr)
+\exp\Bigl(-\frac{\lambda_T N_T\varepsilon_2^2}{2}\Bigr).
\end{align*}
Recalling $\mathbb E_{\gamma^*}[f]=S_{Cpt}^{\gamma^*}+\varDelta$, the left-hand side is
exactly $\mathbb P(|\hat S_{Cpt}-S_{Cpt}^{\gamma^*}-\varDelta|>\cdots)$.

\paragraph{Step 6: parameter tying and the explicit $\varPhi$.}
Introduce a single auxiliary parameter $\varepsilon_0>0$ and set
\[
\varepsilon_1:=M\varepsilon_0,
\qquad
\varepsilon_2:=\lambda_S^{-1/4}\lambda_T^{-1/4}\,\varepsilon_0.
\]
Then the three exponential terms become
\begin{align*}
2\exp\Bigl(-\frac{2N_S N_T\,\varepsilon_1^2}{(N_S+N_T)\,M^2}\Bigr)
&=
2\exp\Bigl(-\frac{2N_S N_T\,\varepsilon_0^2}{N_S+N_T}\Bigr),
\\
\exp\Bigl(-\frac{\lambda_S N_S\varepsilon_2^2}{2}\Bigr)
&=
\exp\Bigl(-\frac{\lambda_S^{1/2}N_S\,\varepsilon_0^2}{2\lambda_T^{1/2}}\Bigr),
\\
\exp\Bigl(-\frac{\lambda_T N_T\varepsilon_2^2}{2}\Bigr)
&=
\exp\Bigl(-\frac{\lambda_T^{1/2}N_T\,\varepsilon_0^2}{2\lambda_S^{1/2}}\Bigr).
\end{align*}
The deviation level on the left-hand side becomes
\begin{align*}
\varepsilon_1 +2M L_{Y|X}\varepsilon_2 + 4M L_{Y|X}\sqrt{\tfrac{\varepsilon_2}{\beta \lambda_{\gamma^*}}}
&=
M\varepsilon_0
+2M L_{Y|X}\lambda_S^{-1/4}\lambda_T^{-1/4}\varepsilon_0
+4M L_{Y|X}\beta^{-1/2}\lambda_{\gamma^*}^{-1/2}\lambda_S^{-1/8}\lambda_T^{-1/8}\varepsilon_0^{1/2}.
\end{align*}
Define the constants
\[
c_2:=1+2L_{Y|X}\lambda_S^{-1/4}\lambda_T^{-1/4},
\qquad
c_1:=16L_{Y|X}^2\beta^{-1}\lambda_{\gamma^*}^{-1}\lambda_S^{-1/4}\lambda_T^{-1/4}.
\]
Then the deviation level can be written compactly as
\[
M\Bigl(c_2\varepsilon_0+\sqrt{c_1\varepsilon_0}\Bigr).
\]
Now choose $\varepsilon_0$ so that
\[
M\Bigl(c_2\varepsilon_0+\sqrt{c_1\varepsilon_0}\Bigr)=\varepsilon.
\]
Let $u:=\sqrt{\varepsilon_0}$. The equation becomes $c_2u^2+\sqrt{c_1}\,u=\varepsilon/M$,
whose positive solution is
\[
u=\frac{\sqrt{c_1+4c_2(\varepsilon/M)}-\sqrt{c_1}}{2c_2},
\qquad
\varepsilon_0=u^2.
\]
It is convenient to express $\varepsilon_0^2$ in the form $\varepsilon_0^2=\frac{\varPhi\varepsilon^2}{2M^2}$.
Let
\[
a:=\frac{c_1M}{c_2\varepsilon}\quad(>0),
\qquad
\varLambda(a):=a^2+4a+2-(a+2)\sqrt{a^2+4a},
\qquad
\varPhi:=\frac{\varLambda(a)}{c_2^2}.
\]
A direct algebraic simplification (expanding $(a+2-\sqrt{a^2+4a})^2$) shows
\[
\varepsilon_0^2=\frac{\varPhi\,\varepsilon^2}{2M^2}.
\]

Substituting this $\varepsilon_0^2$ into the exponential terms yields
\begin{align*}
2\exp\Bigl(-\frac{2N_SN_T\varepsilon_0^2}{N_S+N_T}\Bigr)
&=
2\exp\Bigl(-\frac{N_S N_T\,\varPhi\,\varepsilon^2}{(N_S+N_T)\,M^2}\Bigr),
\\
\exp\Bigl(-\frac{\lambda_S^{1/2}N_S\varepsilon_0^2}{2\lambda_T^{1/2}}\Bigr)
&=
\exp\Bigl(-\frac{\lambda_S^{1/2}N_S\,\varPhi\,\varepsilon^2}{4\lambda_T^{1/2}M^2}\Bigr),
\\
\exp\Bigl(-\frac{\lambda_T^{1/2}N_T\varepsilon_0^2}{2\lambda_S^{1/2}}\Bigr)
&=
\exp\Bigl(-\frac{\lambda_T^{1/2}N_T\,\varPhi\,\varepsilon^2}{4\lambda_S^{1/2}M^2}\Bigr).
\end{align*}
This is exactly the claimed bound, completing the proof.
\end{proof}

\subsection{Proof of Proposition~\ref{pro:varDelta-in-deterministic-labeling}}
\begin{proof}
Recall the notation used in the proof of Theorem~\ref{thm:concentration-total-pair}:
for $(x_S,x_T)\in\mathcal X\times\mathcal X$ define
\[
f(x_S,x_T)
:=\mathbb E_{y_S\sim\mathcal D_{Y|X=x_S}^{S},\,y_T\sim\mathcal D_{Y|X=x_T}^{T}}
\bigl[\rho_{\mathcal Y}(y_S,y_T)\bigr],
\qquad
S_{pair}(x_S,x_T)
:=W_1\!\bigl(\mathcal D_{Y|X=x_S}^{S},\mathcal D_{Y|X=x_T}^{T}\bigr),
\]
and the bias is
\[
\varDelta
=\mathbb E_{(x_S,x_T)\sim\gamma^*}\!\bigl[f(x_S,x_T)-S_{pair}(x_S,x_T)\bigr].
\]

Under deterministic labeling,
$\mathcal D_{Y|X=x_S}^{S}=\delta_{f_S(x_S)}$ and
$\mathcal D_{Y|X=x_T}^{T}=\delta_{f_T(x_T)}$.
Hence the random variables $y_S,y_T$ are almost surely equal to $f_S(x_S),f_T(x_T)$, and thus
\[
f(x_S,x_T)
=\mathbb E\bigl[\rho_{\mathcal Y}(y_S,y_T)\bigr]
=\rho_{\mathcal Y}\bigl(f_S(x_S),f_T(x_T)\bigr).
\]
On the other hand, the Wasserstein-$1$ distance between two Dirac measures is exactly the ground metric:
\[
S_{pair}(x_S,x_T)
=W_1\!\bigl(\delta_{f_S(x_S)},\delta_{f_T(x_T)}\bigr)
=\rho_{\mathcal Y}\bigl(f_S(x_S),f_T(x_T)\bigr).
\]
Therefore $f(x_S,x_T)-S_{pair}(x_S,x_T)=0$ for all $(x_S,x_T)$, and so
\[
\varDelta
=\mathbb E_{\gamma^*}\!\bigl[f(x_S,x_T)-S_{pair}(x_S,x_T)\bigr]
=0.
\]
This concludes the proof.
\end{proof}

\subsection{Proof of Proposition~\ref{pro:varDelta-in-stochastic-labeling}}
\begin{proof}
We follow the notation used in the proof of Theorem~\ref{thm:concentration-total-pair}.
For $(x_S,x_T)\in\mathcal X\times\mathcal X$, define
\[
f(x_S,x_T)
:=
\mathbb E_{y_S\sim\mathcal D_{Y|X=x_S}^{S},\;y_T\sim\mathcal D_{Y|X=x_T}^{T}}
\bigl[\|y_S-y_T\|\bigr],
\qquad
S_{pair}(x_S,x_T)
:=
W_1\!\bigl(\mathcal D_{Y|X=x_S}^{S},\mathcal D_{Y|X=x_T}^{T}\bigr),
\]
and recall that
\[
\varDelta
=
\mathbb E_{(x_S,x_T)\sim\gamma^*}\bigl[f(x_S,x_T)-S_{pair}(x_S,x_T)\bigr].
\]

\paragraph{Lower bound $\varDelta\ge 0$.}
For each fixed $(x_S,x_T)$, the product measure
$\mathcal D_{Y|X=x_S}^{S}\otimes \mathcal D_{Y|X=x_T}^{T}$
belongs to
$\Gamma\!\bigl(\mathcal D_{Y|X=x_S}^{S},\mathcal D_{Y|X=x_T}^{T}\bigr)$.
Since $S_{pair}(x_S,x_T)$ is the infimum of $\mathbb E[\|y_S-y_T\|]$ over all such couplings,
\[
S_{pair}(x_S,x_T)
\le
\mathbb E_{\mathcal D_{Y|X=x_S}^{S}\otimes \mathcal D_{Y|X=x_T}^{T}}
\bigl[\|y_S-y_T\|\bigr]
=
f(x_S,x_T).
\]
Thus $f(x_S,x_T)-S_{pair}(x_S,x_T)\ge 0$ pointwise, hence $\varDelta\ge 0$.

\paragraph{An upper bound for each pair $(x_S,x_T)$.}
Fix $(x_S,x_T)$ and write
$P:=\mathcal D_{Y|X=x_S}^{S}$ and $Q:=\mathcal D_{Y|X=x_T}^{T}$.
Let their means be
\[
m_S(x_S):=\mathbb E_{y\sim P}[y],
\qquad
m_T(x_T):=\mathbb E_{y\sim Q}[y],
\]
For independent draws $y_S\sim P$ and $y_T\sim Q$, the triangle inequality yields
\[
\|y_S-y_T\|
\le
\|y_S-m_S(x_S)\|
+\|m_S(x_S)-m_T(x_T)\|
+\|m_T(x_T)-y_T\|.
\]
Taking expectation gives
\[
f(x_S,x_T)
\le
\mathbb E_{y\sim P}\bigl[\|y-m_S(x_S)\|\bigr]
+\|m_S(x_S)-m_T(x_T)\|
+\mathbb E_{y\sim Q}\bigl[\|y-m_T(x_T)\|\bigr].
\]

Next we lower bound $S_{pair}(x_S,x_T)=W_1(P,Q)$ by the distance between the means.
Let $\pi\in\Gamma(P,Q)$ be any coupling, and denote
$m_P:=\mathbb E_{y\sim P}[y]$, $m_Q:=\mathbb E_{y\sim Q}[y]$.
Since $\|\cdot\|$ is convex, Jensen's inequality yields
\[
\mathbb E_{(y,y')\sim\pi}\bigl[\|y-y'\|\bigr]
\;\ge\;
\Bigl\|\mathbb E_{(y,y')\sim\pi}[\,y-y'\,]\Bigr\|.
\]
By the marginal constraints of $\pi$, we have
$\mathbb E_{(y,y')\sim\pi}[y]=m_P$ and $\mathbb E_{(y,y')\sim\pi}[y']=m_Q$, hence
\[
\Bigl\|\mathbb E_{(y,y')\sim\pi}[\,y-y'\,]\Bigr\|
=
\|m_P-m_Q\|.
\]
Therefore, for every $\pi\in\Gamma(P,Q)$,
\[
\mathbb E_{(y,y')\sim\pi}\bigl[\|y-y'\|\bigr]\;\ge\;\|m_P-m_Q\|.
\]
Taking the infimum over $\pi\in\Gamma(P,Q)$ gives
\[
W_1(P,Q)
=
\inf_{\pi\in\Gamma(P,Q)}\mathbb E_{(y,y')\sim\pi}\bigl[\|y-y'\|\bigr]
\;\ge\;
\|m_P-m_Q\|.
\]
Combining the two displays and canceling the middle term yields
\[
f(x_S,x_T)-S_{pair}(x_S,x_T)
\le
\mathbb E_{y\sim P}\bigl[\|y-m_S(x_S)\|\bigr]
+
\mathbb E_{y\sim Q}\bigl[\|y-m_T(x_T)\|\bigr].
\]

\paragraph{Integrate over $\gamma^*$ and relate to irreducible errors.}
Taking expectation over $(x_S,x_T)\sim\gamma^*$ and using that the marginals of $\gamma^*$
are $\mathcal D_X^{S}$ and $\mathcal D_X^{T}$,
\[
\varDelta
\le
\mathbb E_{x\sim\mathcal D_X^{S}}
\Bigl[\mathbb E\bigl[\|Y^{(S)}-m_S(x)\|\mid X^{(S)}=x\bigr]\Bigr]
+
\mathbb E_{x\sim\mathcal D_X^{T}}
\Bigl[\mathbb E\bigl[\|Y^{(T)}-m_T(x)\|\mid X^{(T)}=x\bigr]\Bigr].
\]
For any random variable $Z\ge 0$, $\mathbb E[Z]\le \sqrt{\mathbb E[Z^2]}$, hence
\[
\mathbb E\bigl[\|Y^{(S)}-m_S(x)\|\mid X^{(S)}=x\bigr]
\le
\sqrt{\mathbb E\bigl[\|Y^{(S)}-m_S(x)\|^2\mid X^{(S)}=x\bigr]},
\]
and similarly for the target term. Thus
\[
\varDelta
\le
\mathbb E_{x\sim\mathcal D_X^{S}}
\Bigl[\sqrt{\mathbb E\bigl[\|Y^{(S)}-m_S(x)\|^2\mid X^{(S)}=x\bigr]}\Bigr]
+
\mathbb E_{x\sim\mathcal D_X^{T}}
\Bigl[\sqrt{\mathbb E\bigl[\|Y^{(T)}-m_T(x)\|^2\mid X^{(T)}=x\bigr]}\Bigr].
\]
Since $\sqrt{\cdot}$ is concave on $\mathbb R_+$, Jensen's inequality gives
\begin{align*}
\mathbb E_{x\sim\mathcal D_X^{S}}
\Bigl[\sqrt{\mathbb E\bigl[\|Y^{(S)}-m_S(x)\|^2\mid X^{(S)}=x\bigr]}\Bigr]
&\le
\sqrt{
\mathbb E_{(x,y)\sim\mathcal D_{XY}^{S}}
\bigl[\|y-m_S(x)\|^2\bigr]
},\\
\mathbb E_{x\sim\mathcal D_X^{T}}
\Bigl[\sqrt{\mathbb E\bigl[\|Y^{(T)}-m_T(x)\|^2\mid X^{(T)}=x\bigr]}\Bigr]
&\le
\sqrt{
\mathbb E_{(x,y)\sim\mathcal D_{XY}^{T}}
\bigl[\|y-m_T(x)\|^2\bigr]
}.
\end{align*}

Finally, under squared loss on $\mathbb R^{d'}$, the minimizer of
$\mathbb E[\|Y-g(x)\|^2\!\mid\!X=x]$ is $g(x)=\mathbb E[Y\!\mid\!X=x]$.
Therefore the functions $m_S(x)=\mathbb E[Y^{(S)}\!\mid\!X^{(S)}=x]$ and
$m_T(x)=\mathbb E[Y^{(T)}\!\mid\!X^{(T)}=x]$ achieve the infima of
$\mathrm I(\mathcal D_{XY}^{S})$ and $\mathrm I(\mathcal D_{XY}^{T})$, i.e.,
\[
\mathbb E_{(x,y)\sim\mathcal D_{XY}^{S}}\bigl[\|y-m_S(x)\|^2\bigr]
=\mathrm I(\mathcal D_{XY}^{S}),
\qquad
\mathbb E_{(x,y)\sim\mathcal D_{XY}^{T}}\bigl[\|y-m_T(x)\|^2\bigr]
=\mathrm I(\mathcal D_{XY}^{T}).
\]
Putting the above bounds together yields
\[
\varDelta
\le
\sqrt{\mathrm I(\mathcal D_{XY}^{S})}
+
\sqrt{\mathrm I(\mathcal D_{XY}^{T})},
\]
which completes the proof.
\end{proof}

\section{Analysis of Lipschitz Constants: Results and Proofs}
\label{app:lipschitz}

\subsection{Lipschitz Constant of the Sigmoid Function}
\label{app:lipschitz-sigmoid-function}
\begin{proposition}[Lipschitz constant of the sigmoid]
\label{prop:sigmoid-lipschitz}
Let $\sigma:\mathbb{R}\to\mathbb{R}$ be the sigmoid function
\[
\sigma(x)\;=\;\frac{1}{1+e^{-x}}.
\]
Then $\sigma$ is globally $\frac{1}{4}$-Lipschitz on $\mathbb{R}$.
\end{proposition}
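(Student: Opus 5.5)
The plan is to bound the derivative of $\sigma$ uniformly and then apply the mean value theorem. Since $\sigma$ is smooth on $\mathbb{R}$, a direct computation gives
\[
\sigma'(x)=\frac{e^{-x}}{(1+e^{-x})^2}=\sigma(x)\bigl(1-\sigma(x)\bigr).
\]
The first step is to verify this identity by writing $1-\sigma(x)=\frac{e^{-x}}{1+e^{-x}}$. Because $\sigma(x)\in(0,1)$ for every $x$, the product $\sigma(x)(1-\sigma(x))$ is of the form $t(1-t)$ with $t\in(0,1)$.

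The second step bounds $t(1-t)$ on $[0,1]$. By AM--GM, or equivalently from $t(1-t)=\tfrac14-(t-\tfrac12)^2$, we get $0<\sigma'(x)\le \tfrac14$ for all $x\in\mathbb{R}$. Equality holds exactly when $\sigma(x)=\tfrac12$, that is, at $x=0$, so the constant $\tfrac14$ is attained. This shows the constant cannot be improved, although only the upper bound is needed.

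The third step takes arbitrary $x_1<x_2$ and applies the mean value theorem: there is $\xi\in(x_1,x_2)$ with
\[
|\sigma(x_1)-\sigma(x_2)|=|\sigma'(\xi)|\,|x_1-x_2|\le \tfrac14|x_1-x_2|.
\]
This yields global $\tfrac14$-Lipschitz continuity with respect to $|\cdot|$.

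There is no real obstacle here; the only care needed is the algebraic identity for $\sigma'$, and everything afterwards is elementary.
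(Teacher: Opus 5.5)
Your proof is correct and matches the paper's argument step for step: the identity $\sigma'=\sigma(1-\sigma)$, the bound $t(1-t)\le \tfrac14$ on $[0,1]$, and then the mean value theorem. Your remark that equality holds at $x=0$, so $\tfrac14$ is the optimal constant, is a small addition; the paper's proof notes only that the maximum of $t-t^2$ is attained at $t=\tfrac12$.
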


\begin{proof}
We first compute the derivative:
\[
\sigma'(x)
= \frac{d}{dx}\Bigl(1+e^{-x}\Bigr)^{-1}
= \frac{e^{-x}}{\bigl(1+e^{-x}\bigr)^2}.
\]
Using $\sigma(x)=\frac{1}{1+e^{-x}}$ and $1-\sigma(x)=\frac{e^{-x}}{1+e^{-x}}$, we can rewrite
\[
\sigma'(x)=\sigma(x)\bigl(1-\sigma(x)\bigr).
\]
Let $u=\sigma(x)\in(0,1)$. Then $\sigma'(x)=u(1-u)$, and for all $u\in[0,1]$,
\[
u(1-u)=u-u^2 \;\le\; \max_{t\in[0,1]}(t-t^2)=\frac{1}{4},
\]
where the maximum is attained at $t=\frac{1}{2}$. Hence $0<\sigma'(x)\le \frac{1}{4}$ for all $x$.

Now fix any $x,y\in\mathbb{R}$. Since $\sigma$ is differentiable on $\mathbb{R}$ (hence continuous), by the mean value theorem, there exists $c$ between $x$ and $y$ such that
\[
\sigma(x)-\sigma(y)=\sigma'(c)\,(x-y).
\]
Taking absolute values and using $\sigma'(c)\le \frac{1}{4}$ yields
\[
|\sigma(x)-\sigma(y)|
=|\sigma'(c)|\,|x-y|
\le \frac{1}{4}|x-y|,
\]
so $\sigma$ is $\frac{1}{4}$-Lipschitz.
\end{proof}

\subsection{Lipschitz Constant of the Logistic Regression}
\label{app:lipschitz-logistic-regression}
\begin{proposition}[Lipschitz constant of logistic regression under $\ell_p$/$\ell_q$]
\label{prop:logreg-lipschitz-pq}
Fix $w\in\mathbb{R}^d$ and $b\in\mathbb{R}$. Define
\[
h_{w,b}(x)\;=\;\sigma\!\bigl(w^\top x+b\bigr),
\qquad
\sigma(t)=\frac{1}{1+e^{-t}}.
\]
Let $p\in[1,\infty]$ and let $q\in[1,\infty]$ be its H\"older conjugate, i.e., $\frac{1}{p}+\frac{1}{q}=1$.
Then $h_{w,b}$ is globally $\frac{\|w\|_q}{4}$-Lipschitz with respect to $\|\cdot\|_p$, namely for all $x,x'\in\mathbb{R}^d$,
\[
\bigl|h_{w,b}(x)-h_{w,b}(x')\bigr|
\;\le\;
\frac{\|w\|_q}{4}\,\|x-x'\|_p.
\]
In particular, under $\|\cdot\|_2$, the Lipschitz constant equals $\|w\|_2/4$.
\end{proposition}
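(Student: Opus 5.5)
The plan is to write $h_{w,b}=\sigma\circ\ell_{w,b}$ with the affine map $\ell_{w,b}(x)=w^\top x+b$. Lipschitz constants multiply under composition, so it suffices to control the two factors separately. For the outer factor, Proposition~\ref{prop:sigmoid-lipschitz} already shows that $\sigma$ is $\tfrac14$-Lipschitz on $\mathbb R$ with respect to $|\cdot|$.

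For the inner factor, fix $x,x'\in\mathbb R^d$. The bias cancels, since $\ell_{w,b}(x)-\ell_{w,b}(x')=w^\top(x-x')$. Hölder's inequality with conjugate exponents $p,q$ gives $|w^\top(x-x')|\le\|w\|_q\|x-x'\|_p$, so the affine map is $\|w\|_q$-Lipschitz from $(\mathbb R^d,\|\cdot\|_p)$ to $(\mathbb R,|\cdot|)$. The endpoint pairs $(p,q)=(1,\infty)$ and $(\infty,1)$ are covered by the same inequality, read as the trivial bounds $|\sum_i w_i z_i|\le\max_i|w_i|\sum_i|z_i|$ and $|\sum_i w_i z_i|\le\sum_i|w_i|\max_i|z_i|$.

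Chaining the two bounds gives
\[
|h_{w,b}(x)-h_{w,b}(x')|\le\tfrac14|w^\top x-w^\top x'|\le\tfrac{\|w\|_q}{4}\|x-x'\|_p .
\]
Taking $p=q=2$ recovers the Euclidean statement with constant $\|w\|_2/4$.

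For the phrase ``equals'', I would also show the constant is sharp. If $w\neq0$, choose $z$ attaining equality in Hölder's inequality with $w^\top z>0$, and set $x'=-b\,w/\|w\|_2^2$, which makes $w^\top x'+b=0$. Take $x=x'+tz$. Then $|h(x)-h(x')|/\|x-x'\|_p=|\sigma(tw^\top z)-\sigma(0)|/(t\|z\|_p)$, which tends to $\sigma'(0)\|w\|_q=\|w\|_q/4$ as $t\to0$. If $w=0$, $h$ is constant and the constant is $0=\|w\|_q/4$.

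No step is a real obstacle. The only care needed is the $p\in\{1,\infty\}$ endpoints in Hölder's inequality and the sharpness argument, and the latter is only required if ``equals'' is read as the optimal constant rather than just an upper bound.
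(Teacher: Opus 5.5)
Your proof is correct and takes essentially the paper's route. The paper applies the mean value theorem to $\sigma$ on the composite to get $\sigma'(\xi)\,w^\top(x-x')$, then uses $|\sigma'|\le\frac14$ together with H\"older, which is your composition-of-Lipschitz-maps argument written out directly. Your sharpness argument (letting $t\to0$ around a point with $w^\top x'+b=0$) is an addition the paper does not supply; it is what justifies reading ``equals'' as optimality of the constant.
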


\begin{proof}
For any $x,x'\in\mathbb{R}^d$, apply the mean value theorem to $\sigma$ to obtain some $\xi$ between
$w^\top x+b$ and $w^\top x'+b$ such that
\[
h_{w,b}(x)-h_{w,b}(x')
=\sigma'(\xi)\,w^\top(x-x').
\]
Thus
\[
\bigl|h_{w,b}(x)-h_{w,b}(x')\bigr|
\le
|\sigma'(\xi)|\,\bigl|w^\top(x-x')\bigr|.
\]
By Proposition~\ref{prop:sigmoid-lipschitz}, we have $|\sigma'(\xi)|\le \frac14$.
By H\"older's inequality with conjugate exponents $(p,q)$,
\[
\bigl|w^\top(x-x')\bigr|
\le
\|w\|_q\,\|x-x'\|_p.
\]
Combining the two inequalities yields
\[
\bigl|h_{w,b}(x)-h_{w,b}(x')\bigr|
\le
\frac14\,\|w\|_q\,\|x-x'\|_p,
\]
which proves the claim.
\end{proof}

\subsection{Lipschitz Constant of the Linear Multi-class Classifier}
\label{app:lipschitz-linear-classifier}
\begin{subproposition}[Lipschitz constant of the linear classifier under $\|\cdot\|_{2}$]
\label{prop:softmax-linear-l2}
Let $W\in\mathbb{R}^{N\times d}$ and $b\in\mathbb{R}^N$. Define
\[
f(x)=\mathrm{softmax}(Wx+b)\in\Delta^{N-1},\qquad x\in\mathbb{R}^d.
\]
Here $\Delta^{N-1}:=\{p\in\mathbb{R}^N:\ p_i\ge 0,\ \forall i,\ \sum_{i=1}^N p_i=1\}$ denotes the probability simplex. Let $w_i^\top$ denote the $i$-th row of $W$ and define
\[
D(W):=\max_{i,j\in[N]}\|w_i-w_j\|_2.
\]
Then $f$ is Lipschitz from $(\mathbb{R}^d,\|\cdot\|_2)$ to $(\mathbb{R}^N,\|\cdot\|_2)$ and its optimal Lipschitz constant equals
\[
\sup_{x\in\mathbb{R}^d}\|\nabla f(x)\|_{2}
=\frac{1}{2\sqrt{2}}\,D(W),
\]
where $\|\cdot\|_{2}$ denotes the matrix operator norm induced by the vector $\|\cdot\|_{2}$ norm.
\end{subproposition}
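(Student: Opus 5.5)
The plan is to prove the two inequalities $\sup_x\|\nabla f(x)\|_2\le D(W)/(2\sqrt2)$ and $\sup_x\|\nabla f(x)\|_2\ge D(W)/(2\sqrt2)$ separately. The first is a clean algebraic argument. The second is where I expect the real difficulty, and I am not sure it holds in full generality.

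\textbf{Step 1 (Jacobian).} Write $p=p(x)=\mathrm{softmax}(Wx+b)$. The chain rule and the standard softmax derivative give
\[
\nabla f(x)=A(p)\,W,\qquad A(p):=\mathrm{diag}(p)-pp^\top .
\]
For a unit vector $v\in\mathbb R^d$, set $z:=Wv$, so that $z_i=w_i^\top v$. Then
\[
\bigl(\nabla f(x)v\bigr)_i=p_i\bigl(z_i-p^\top z\bigr),
\qquad
\|\nabla f(x)v\|_2^2=\sum_i p_i^2\bigl(z_i-p^\top z\bigr)^2=z^\top A(p)^2 z .
\]

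\textbf{Step 2 (upper bound).} The key lemma is:
\[
p\in\Delta^{N-1},\ \max_i z_i-\min_i z_i\le R
\quad\Longrightarrow\quad
z^\top A(p)^2z\le \frac{R^2}{8}.
\]
\begin{itemize}
\item The quantity is invariant under $z\mapsto z+c\mathbf 1$, because $A(p)\mathbf 1=0$. So we may assume $z\in[0,R]^N$.
\item $A(p)$ is positive semidefinite, so $z\mapsto z^\top A(p)^2z$ is convex. Its maximum over the box $[0,R]^N$ is therefore attained at a vertex $z=R\,\mathbf 1_S$ for some index set $S$.
\item Let $s:=\sum_{i\in S}p_i$. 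Then $p^\top z=Rs$, and the value equals
\[
R^2\Bigl[(1-s)^2\sum_{i\in S}p_i^2+s^2\sum_{i\notin S}p_i^2\Bigr].
\]
\item Using $\sum_{S}p_i^2\le s^2$ and $\sum_{S^c}p_i^2\le(1-s)^2$, this is at most $2R^2s^2(1-s)^2\le R^2/8$.
\end{itemize}
To apply the lemma, note that for every $i,j$ we have $|z_i-z_j|=|(w_i-w_j)^\top v|\le D(W)$, so $R\le D(W)$. Taking the supremum over unit $v$ and over $x$ yields $\|\nabla f(x)\|_2\le D(W)/(2\sqrt2)$.

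\textbf{Step 3 (lower bound).} The equality case of the lemma is $p=\tfrac12(e_i+e_j)$ with $z_i-z_j=R$. So I would choose a pair $(i,j)$ attaining $D(W)$ and take $v=(w_i-w_j)/D(W)$. The goal is then a sequence $x_t$ along which $p(x_t)\to\tfrac12(e_i+e_j)$. Continuity of the quantity in $p$ then gives $\sup_x\|\nabla f(x)\|_2\ge D(W)/(2\sqrt2)$.

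The construction I would try first is $x_t=x_0+tu$, with $u$ chosen so that
\[
w_i^\top u=w_j^\top u>w_k^\top u\quad\text{for all }k\ne i,j,
\]
and $x_0$ chosen so that $w_i^\top x_0+b_i=w_j^\top x_0+b_j$.

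\textbf{Main obstacle.} Such a $u$ exists exactly when the segment $[w_i,w_j]$ is an exposed edge of $\mathrm{conv}\{w_k\}$ containing no other $w_k$. A diameter pair is always a pair of hull vertices, but it need not form an edge. Examples where it fails:
\begin{itemize}
\item a square, where the diameter is a diagonal;
\item collinear rows $-1,0,1$ in $d=1$.
\end{itemize}
In the collinear example a direct check suggests the supremum is strictly smaller than $D/(2\sqrt2)$.

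So at this step I would first verify the edge condition in the generic position relevant to the paper. If it can fail, I would fall back on exploiting that $x$ is unrestricted together with the offset $b$ and a limiting argument. Failing that, I would have to record that equality requires this geometric condition on the rows of $W$. Resolving this point is the crux of the stated equality.
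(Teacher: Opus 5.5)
Your upper bound (Steps 1--2) is correct. It is more rigorous than the paper's version: the paper asserts $\sup_{p\in\Delta^{N-1}}\sum_i p_i^2(v_i-p^\top v)^2=r^2/8$ with only the heuristic remark that mass at intermediate values cannot help, whereas your shift-invariance plus convexity-on-the-box argument actually proves it.

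Your doubt about Step 3 is also justified. The equality as stated is false, so no argument can close that step in general. The paper reaches equality only through its first display, $\sup_x\|\nabla f(x)\|_2=\sup_{p\in\Delta^{N-1}}\|J(p)W\|_2$. That step tacitly assumes every point of the simplex is (a limit of) some $\mathrm{softmax}(Wx+b)$; only ``$\le$'' holds there. Your collinear example is an exact counterexample. Take $W=(-1,0,1)^\top$, $b=0$ and $Z=1+e^{x}+e^{-x}$. Then $\nabla f(x)=Z^{-2}\bigl(-(e^{-x}+2),\,e^{-x}-e^{x},\,e^{x}+2\bigr)^\top$, and using $e^{x}e^{-x}=1$ its squared norm simplifies to $2Z^2/Z^4$. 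Hence $\|\nabla f(x)\|_2=\sqrt2/Z$, and the optimal constant is $\sqrt2/3\approx0.471$, not $D(W)/(2\sqrt2)=1/\sqrt2$.

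Your fallback through $b$ and limits cannot rescue the statement, because $b$ is fixed and the obstruction does not depend on it. Suppose no $u$ satisfies $w_i^\top u=w_j^\top u>w_k^\top u$ for all $k\ne i,j$. Motzkin's transposition theorem then gives $\lambda\ge0$, $\lambda\ne0$, and $\mu\in\mathbb R$ with $\sum_{k\ne i,j}\lambda_k(w_k-w_i)+\mu(w_j-w_i)=0$. Pairing this with $x_t$ contradicts $\log(p_k/p_i)(x_t)\to-\infty$ holding together with $\log(p_j/p_i)(x_t)\to0$. So $\tfrac12(e_i+e_j)$ is not in the closure of the image of $x\mapsto\mathrm{softmax}(Wx+b)$.

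Moreover, your lemma is strict unless $p=\tfrac12(e_i+e_j)$ with $|z_i-z_j|=R$. With three or more positive entries, at every vertex with $s\in(0,1)$ one of $\sum_S p_i^2\le s^2$ or $\sum_{S^c}p_i^2\le(1-s)^2$ is strict. With exactly two positive entries, $2s^2(1-s)^2=1/8$ forces $s=\tfrac12$. Now maximize the continuous map $(p,v)\mapsto\|A(p)Wv\|_2$ over the compact set $\overline{\{p(x)\}}\times S^{d-1}$. This shows $\sup_x\|\nabla f(x)\|_2=D(W)/(2\sqrt2)$ if and only if some diameter pair satisfies your edge condition, which always holds for $N=2$. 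Letting $b$ vary ($b_i=b_j=0$, $b_k\to-\infty$) only shows the constant is sharp as a supremum over $x$ and $b$ jointly.

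So the right resolution is the last one you list. State the result as the bound $\|\nabla f(x)\|_2\le D(W)/(2\sqrt2)$, with equality characterized by the edge condition. The bound is all the learning bound needs, since $L_h$ only has to be a valid Lipschitz constant.
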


\begin{proof}
For $z\in\mathbb{R}^N$, write $p=\mathrm{softmax}(z)\in\Delta^{N-1}$. The Jacobian of softmax is
\[
J(p)=\nabla_z\,\mathrm{softmax}(z)=\mathrm{Diag}(p)-pp^\top,
\]
where $\mathrm{Diag}(p)$ is the diagonal matrix with diagonal entries $p_1,\dots,p_N$.
By the chain rule, for $p=\mathrm{softmax}(Wx+b)$,
\[
\nabla f(x)=J(p)\,W.
\]
Therefore,
\begin{equation}
\label{eq:lip-sup-p}
\sup_{x\in\mathbb{R}^d}\|\nabla f(x)\|_2
=\sup_{p\in\Delta^{N-1}}\|J(p)W\|_2.
\end{equation}

Fix $p\in\Delta^{N-1}$ and a unit vector $u\in\mathbb{R}^d$ with $\|u\|_2=1$. Let
\[
v:=Wu\in\mathbb{R}^N,\qquad \mu:=p^\top v.
\]
and $\mu$ represents the mean of $v$ under the probability distribution $p$.
We have
\[
J(p)v=\bigl(\mathrm{Diag}(p)-pp^\top\bigr)v
=\mathrm{Diag}(p)\bigl(v-\mu\mathbf{1}\bigr),
\]
hence
\begin{equation}
\label{eq:norm-square}
\|J(p)Wu\|_2^2=\|J(p)v\|_2^2=\sum_{i=1}^N p_i^2\,(v_i-\mu)^2.
\end{equation}

Now fix $v\in\mathbb{R}^N$ and consider the quantity
\[
\Phi(v):=\sup_{p\in\Delta^{N-1}} \sum_{i=1}^N p_i^2\,(v_i-p^\top v)^2.
\]
Indeed, for any $p\in\Delta^{N-1}$,
\[
\sum_{i=1}^N p_i^2 (v_i-\mu)^2
\le \Big(\max_{i}p_i\Big)\sum_{i=1}^N p_i (v_i-\mu)^2
=\Big(\max_{i}p_i\Big)\mathrm{Var}_p(v),
\]
where $\mathrm{Var}_p(v):=\sum_i p_i (v_i-\mu)^2$, which represents the variance of $v$ under the probability distribution $p$. If $p$ is supported on two values $m:=\min_i v_i$ and $M:=\max_i v_i$ with masses $\alpha$ and $1-\alpha$, let $r:=M-m$, then
\[
\sum_{i=1}^N p_i^2 (v_i-\mu)^2 = 2\alpha^2(1-\alpha)^2 r^2 \le \frac{r^2}{8},
\]
with equality at $\alpha=\tfrac12$. Since any mass placed at intermediate values of $v$ cannot increase the range-based extremum, we obtain $\Phi(v)=r^2/8$, and the optimal $p$ is supported on indices attaining $M$ and $m$.

Combining \eqref{eq:norm-square}, for any unit $u$,
\begin{equation}
\label{eq:sup-p-fixed-u}
\sup_{p\in\Delta^{N-1}}\|J(p)Wu\|_2
=\frac{1}{2\sqrt{2}}\Big(\max_i (Wu)_i-\min_j (Wu)_j\Big).
\end{equation}

Finally, maximize the range over $u$. Writing $(Wu)_i=w_i^\top u$, we have
\[
\max_i (Wu)_i-\min_j (Wu)_j
=\max_{i,j}(w_i-w_j)^\top u.
\]
Thus,
\[
\sup_{\|u\|_2=1}\Big(\max_i (Wu)_i-\min_j (Wu)_j\Big)
=\max_{i,j}\sup_{\|u\|_2=1}(w_i-w_j)^\top u
=\max_{i,j}\|w_i-w_j\|_2
=D(W).
\]
Plugging this into \eqref{eq:sup-p-fixed-u} and then into \eqref{eq:lip-sup-p} yields
\begin{align*}
&\sup_{x\in\mathbb{R}^d}\|\nabla f(x)\|_2 \\
=&\sup_{p\in\Delta^{N-1}}\|J(p)W\|_2 \\
=&\sup_{p\in\Delta^{N-1}}\sup_{\|u\|_2=1}\|J(p)Wu\|_2 \\
=&\sup_{\|u\|_2=1}\sup_{p\in\Delta^{N-1}}\|J(p)Wu\|_2 \\
=&\frac{1}{2\sqrt{2}}\,D(W)
\end{align*}
This completes the proof.
\end{proof}

\begin{subproposition}[Lipschitz constant of the linear classifier under $\|\cdot\|_{2}\to\|\cdot\|_{1}$]
\label{prop:softmax-linear-l21}
Let $W\in\mathbb{R}^{N\times d}$ and $b\in\mathbb{R}^N$. Define
\[
f(x)=\mathrm{softmax}(Wx+b)\in\Delta^{N-1},\qquad x\in\mathbb{R}^d.
\]
Here $\Delta^{N-1}:=\{p\in\mathbb{R}^N:\ p_i\ge 0,\ \forall i,\ \sum_{i=1}^N p_i=1\}$ denotes the probability simplex. Let $w_i^\top$ denote the $i$-th row of $W$ and define
\[
D(W):=\max_{i,j\in[N]}\|w_i-w_j\|_2.
\]
Then $f$ is Lipschitz from $(\mathbb{R}^d,\|\cdot\|_2)$ to $(\mathbb{R}^N,\|\cdot\|_1)$, and its optimal Lipschitz constant equals
\[
\sup_{x\in\mathbb{R}^d}\|\nabla f(x)\|_{2\to 1}
=\frac{1}{2}\,D(W),
\]
where $\|\cdot\|_{2\to 1}$ denotes the operator norm induced by $\|\cdot\|_{2}$ on the input and $\|\cdot\|_{1}$ on the output.
\end{subproposition}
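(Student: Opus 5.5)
Following the proof of Subproposition~\ref{prop:softmax-linear-l2}, I would reduce the claim to a supremum of a Jacobian norm. For $p=\mathrm{softmax}(Wx+b)$ we have $\nabla f(x)=J(p)W$ with $J(p)=\mathrm{Diag}(p)-pp^\top$, so
\[
\sup_{x}\|\nabla f(x)\|_{2\to1}=\sup_{p}\sup_{\|u\|_2=1}\|J(p)Wu\|_1 .
\]
Here $p$ ranges over the image of softmax, which is the interior of $\Delta^{N-1}$. By continuity the supremum equals the supremum over the closed simplex. Since $f$ is differentiable and $\mathbb R^d$ is convex, the mean value inequality shows that this supremum is the optimal Lipschitz constant for $\|\cdot\|_2\to\|\cdot\|_1$.

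Next, fix a unit $u$ and set $v=Wu$ and $\mu=p^\top v$. Then $J(p)v=\mathrm{Diag}(p)(v-\mu\mathbf 1)$, so
\[
\|J(p)v\|_1=\sum_i p_i|v_i-\mu| ,
\]
which is the mean absolute deviation of $v$ under $p$. The key step is to show that its supremum over $p\in\Delta^{N-1}$ equals $r/2$, where $r=\max_i v_i-\min_i v_i$.

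\emph{Upper bound.} Let $A=\{i: v_i>\mu\}$ and $a=\sum_{i\in A}p_i$. Because deviations around the mean sum to zero, the mean absolute deviation equals
\[
2\sum_{i\in A}p_i(v_i-\mu)\le 2a(M-\mu),
\]
where $M=\max_i v_i$ and $m=\min_i v_i$. Symmetrically it is at most $2(1-a)(\mu-m)$. Taking the minimum of these two bounds gives at most $2\max_a\min\{a s,(1-a)t\}$ with $s+t=r$. I would bound the minimum by the convex combination with weights $t/r$ and $s/r$, which gives at most $2\,st/r\le r/2$ by AM--GM.

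\emph{Attainment.} Put mass $\tfrac12$ on an index attaining $M$ and mass $\tfrac12$ on an index attaining $m$. This gives exactly $r/2$. Since the mean absolute deviation is continuous in $p$, the supremum over the open simplex is still $r/2$.

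Finally, I would maximize over $u$ exactly as in the $\ell_2$ case. Since $r=\max_{i,j}(w_i-w_j)^\top u$, taking the supremum over unit $u$ gives $D(W)$. Exchanging the two suprema then yields $\tfrac12 D(W)$.

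The main obstacle is the upper bound $\sum_i p_i|v_i-\mu|\le r/2$. Everything else transfers directly from Subproposition~\ref{prop:softmax-linear-l2}. The two-sided bound via the set $A$ above settles it cleanly, which avoids the vaguer "intermediate values" argument used earlier in the paper.
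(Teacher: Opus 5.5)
\textbf{The reduction step fails, and the claimed equality does not hold in general.} You write that $p=\mathrm{softmax}(Wx+b)$ ranges over the interior of $\Delta^{N-1}$. This is false in general. The range is $\mathrm{softmax}(b+\operatorname{range}W)$. That set is the whole open simplex only if $\operatorname{range}W+\operatorname{span}\{\mathbf 1\}=\mathbb R^N$, which forces $d\ge N-1$.

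The paper makes the same identification when it writes $\sup_x\|\nabla f(x)\|_{2\to1}=\sup_{p\in\Delta^{N-1}}\|J(p)W\|_{2\to1}$. Without such a hypothesis the equality in the statement is false. Take $d=1$, $N=3$, $W=(1,0,-1)^\top$, $b=0$, so $\tfrac12D(W)=1$.
\begin{itemize}
\item Here $p(x)\propto(e^{x},1,e^{-x})$, so $p_1p_3=p_2^2$ along the whole curve.
\item With $\mu=p_1-p_3$, a direct computation gives $\|\nabla f(x)\|_{2\to1}=\sum_i p_i|w_i-\mu|=1-\mu^2-p_2(1-|\mu|)$.
\item This expression equals $1$ only at $p=(\tfrac12,0,\tfrac12)$. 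That point is not in the closure of the curve, since the closure adds only $e_1$ and $e_3$, where the expression is $0$.
\item By compactness, $\sup_x\|\nabla f(x)\|_{2\to1}<1$; numerically it is about $0.70$.
\end{itemize}

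\textbf{What your argument does prove.} Your argument, like the paper's, proves $\sup_x\|\nabla f(x)\|_{2\to1}\le\tfrac12D(W)$ for all $W,b$, because the realized $p$'s form a subset of the simplex. This is the direction needed to plug $L_h$ into Theorem~\ref{thm:learning-bound}. Optimality needs an extra hypothesis, for example $\operatorname{range}W+\operatorname{span}\{\mathbf 1\}=\mathbb R^N$. That holds for $N=2$ whenever $w_1\neq w_2$, and generically when $d\ge N-1$. Under it, your continuity step from the open to the closed simplex is valid.

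\textbf{Smaller points.}
\begin{itemize}
\item The mean value inequality only shows that the Jacobian supremum is \emph{a} Lipschitz constant. Optimality also needs $\|f(x+tu)-f(x)\|_1/t\to\|\nabla f(x)u\|_1$ as $t\to0$.
\item Your bound on the mean absolute deviation is correct. You split at the mean, bound $\min\{2as,\,2(1-a)t\}$ by the $(t/r,\,s/r)$ convex combination, and use $st\le r^2/4$. This is an elementary, self-contained substitute for the paper's Cauchy--Schwarz plus Popoviciu step.
\item For this particular statement the paper does not use the ``intermediate values'' argument. Apart from the MAD bound, your route is the paper's.
\end{itemize}
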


\stepcounter{theorem}

\begin{proof}
For $z\in\mathbb{R}^N$, write $p=\mathrm{softmax}(z)\in\Delta^{N-1}$ and recall
\[
J(p)=\nabla_z\,\mathrm{softmax}(z)=\mathrm{Diag}(p)-pp^\top.
\]
By the chain rule, for $p=\mathrm{softmax}(Wx+b)$ we have
\[
\nabla f(x)=J(p)\,W.
\]
Hence
\begin{equation}
\label{eq:l21-sup-p}
\sup_{x\in\mathbb{R}^d}\|\nabla f(x)\|_{2\to 1}
=\sup_{p\in\Delta^{N-1}}\|J(p)W\|_{2\to 1}.
\end{equation}

Fix $p\in\Delta^{N-1}$ and $v\in\mathbb{R}^N$. Let $\mu:=p^\top v$. Using
\[
J(p)v=(\mathrm{Diag}(p)-pp^\top)v=\mathrm{Diag}(p)\bigl(v-\mu\mathbf 1\bigr),
\]
we obtain
\begin{equation}
\label{eq:l1-mad}
\|J(p)v\|_1=\sum_{i=1}^N p_i\,|v_i-\mu|.
\end{equation}
The right-hand side is the mean absolute deviation of $v$ under the probability distribution $p$.

Fix $v\in\mathbb{R}^N$ and define $m:=\min_i v_i$, $M:=\max_i v_i$, and $r:=M-m$.
We claim that
\begin{equation}
\label{eq:sup-p-mad}
\sup_{p\in\Delta^{N-1}} \sum_{i=1}^N p_i\,|v_i-p^\top v|
=\frac{r}{2}.
\end{equation}

\emph{Upper bound.}
Let $\mu=p^\top v$. By Cauchy--Schwarz,
\[
\sum_{i=1}^N p_i|v_i-\mu|
\le \Big(\sum_{i=1}^N p_i\Big)^{1/2}\Big(\sum_{i=1}^N p_i(v_i-\mu)^2\Big)^{1/2}
=\sqrt{\mathrm{Var}_p(v)}.
\]
By Popoviciu's inequality for bounded random variables, $\mathrm{Var}_p(v)\le r^2/4$, hence
\[
\sum_{i=1}^N p_i|v_i-\mu|\le \frac{r}{2}.
\]

\emph{Lower bound (attainability).}
Let $i_{\max}\in\arg\max_i v_i$ and $i_{\min}\in\arg\min_i v_i$, and take
\[
p=\frac12(e_{i_{\max}}+e_{i_{\min}}).
\]
Here $e_i\in\mathbb{R}^N$ denotes the $i$-th standard basis vector, i.e., $(e_i)_j=\mathbf{1}\{j=i\}$.
Then $\mu =(M+m)/2$, and
\[
\sum_{i=1}^N p_i|v_i-\mu|
=\frac12\Big|M-\frac{M+m}{2}\Big|+\frac12\Big|m-\frac{M+m}{2}\Big|
=\frac{r}{2}.
\]
This proves \eqref{eq:sup-p-mad}.

Combining \eqref{eq:l1-mad} and \eqref{eq:sup-p-mad}, for any $u\in\mathbb{R}^d$ with $\|u\|_2=1$,
\begin{equation}
\label{eq:sup-p-fixed-u-l1}
\sup_{p\in\Delta^{N-1}}\|J(p)Wu\|_1
=\frac12\Big(\max_i (Wu)_i-\min_j (Wu)_j\Big).
\end{equation}

Writing $(Wu)_i=w_i^\top u$, we have
\[
\max_i (Wu)_i-\min_j (Wu)_j
=\max_{i,j}(w_i-w_j)^\top u.
\]
Therefore,
\[
\sup_{\|u\|_2=1}\Big(\max_i (Wu)_i-\min_j (Wu)_j\Big)
=\max_{i,j}\sup_{\|u\|_2=1}(w_i-w_j)^\top u
=\max_{i,j}\|w_i-w_j\|_2
=D(W).
\]

From \eqref{eq:l21-sup-p} and \eqref{eq:sup-p-fixed-u-l1},
\begin{align*}
&\sup_{x\in\mathbb{R}^d}\|\nabla f(x)\|_{2\to 1} \\
=&\sup_{p\in\Delta^{N-1}}\sup_{\|u\|_2=1}\|J(p)Wu\|_1 \\
=&\sup_{\|u\|_2=1}\sup_{p\in\Delta^{N-1}}\|J(p)Wu\|_1 \\
=&\frac12\sup_{\|u\|_2=1}\Big(\max_i (Wu)_i-\min_j (Wu)_j\Big) \\
=&\frac12\,D(W).
\end{align*}
This completes the proof.
\end{proof}

\subsection{Lipschitz Constant of MLPs}
\label{app:lipschitz-mlp}
Based on the previous theory for the Lipschitz constant of MLPs by \citet{fazlyab2020safety}, we obtain the following direct result:

\begin{proposition}[SDP-certified Lipschitz constant for MLPs]
Consider the $\ell$-hidden-layer MLP
\[
x_0=x\in\mathbb R^{n_0},\qquad
x_{k}=\phi(W_{k-1}x_{k-1}+b_{k-1})\in\mathbb R^{n_k}\ (k=1,\dots,\ell),
\]
\[
f(x)=W_{\ell}x_{\ell}+b_{\ell}\in\mathbb R^{m},
\]
where $\phi(z)=[\varphi(z_1),\dots,\varphi(z_{n_k})]^\top$ acts componentwise and the scalar activation
$\varphi$ is slope-restricted on $[0,\beta]$ (i.e., $0\le \frac{\varphi(u)-\varphi(v)}{u-v}\le \beta$ for all $u\neq v$).

Let $N:=n_0+n_1+\cdots+n_{\ell}$ and $n:=n_1+\cdots+n_{\ell}$.
Define the matrices $A\in\mathbb R^{n\times N}$ and $B\in\mathbb R^{n\times N}$ by
\[
A=\left[
\begin{matrix}
W_0 & 0 & \cdots & 0 & 0\\
0 & W_1 & \cdots & 0 & 0\\
\vdots & \vdots & \ddots & \vdots & \vdots\\
0 & 0 & \cdots & W_{\ell-1} & 0
\end{matrix}\right],\qquad
B=\left[
\begin{matrix}
0 & I_{n_1} & 0 & \cdots & 0\\
0 & 0 & I_{n_2} & \cdots & 0\\
\vdots & \vdots & \vdots & \ddots & \vdots\\
0 & 0 & 0 & \cdots & I_{n_{\ell}}
\end{matrix}\right].
\]
Let the decision variable $T_n\in\mathbb S^{n}$ be block-diagonal across layers:
\[
T_n=\mathrm{blkdiag}(T_{n_1},\dots,T_{n_{\ell}}),\qquad
T_{n_k}=\mathrm{diag}(t_{k,1},\dots,t_{k,n_k}),\quad t_{k,i}\ge 0.
\]

Consider the SDP (with variable $H\ge 0$):
\[
\begin{aligned}
\min_{\{t_{k,i}\},\,H}\quad & H\\
\mathrm{s.t.}\quad & M \succeq 0,\\
& M= -\Big(M_L(H)+M_A(T_n)\Big),\\[2mm]
& M_L(H)=
\left[
\begin{matrix}
-H I_{n_0} & 0 & \cdots & 0\\
0 & 0 & \cdots & 0\\
\vdots & \vdots & \ddots & \vdots\\
0 & 0 & \cdots & W_{\ell}^{\top}W_{\ell}
\end{matrix}\right]\in\mathbb S^{N},\\[2mm]
& M_A(T_n)=
\left[\begin{matrix}A\\ B\end{matrix}\right]^{\!\top}
\left[\begin{matrix}
0 & \beta T_n\\
\beta T_n & -2T_n
\end{matrix}\right]
\left[\begin{matrix}A\\ B\end{matrix}\right]\in\mathbb S^{N}.
\end{aligned}
\]
Let $H^\star$ be the optimal value and define $L_{\mathrm{SDP}}:=\sqrt{H^\star}$.
Then $L_{\mathrm{SDP}}$ is a global $\|\cdot\|_2$-Lipschitz constant of $f$, i.e.,
\[
\|f(x)-f(y)\|_2 \le L_{\mathrm{SDP}}\|x-y\|_2,\qquad \forall x,y\in\mathbb R^{n_0}.
\]
\end{proposition}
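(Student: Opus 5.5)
The plan is to follow the incremental quadratic-constraint (S-procedure) argument of \citet{fazlyab2020safety}. Fix $x,y\in\mathbb R^{n_0}$ and run both inputs through the network, producing layer activations $x_k,y_k$. Define the stacked difference vector
\[
\Delta\xi:=\bigl[(x_0-y_0)^\top,(x_1-y_1)^\top,\dots,(x_\ell-y_\ell)^\top\bigr]^\top\in\mathbb R^N .
\]
By the block structure of $A$ and $B$, the vector $A\Delta\xi$ stacks the pre-activation differences $\Delta z_k:=W_{k-1}(x_{k-1}-y_{k-1})$; the biases $b_{k-1}$ cancel. Likewise $B\Delta\xi$ stacks the post-activation differences $\Delta x_k=\phi(z_k^x)-\phi(z_k^y)$ for $k=1,\dots,\ell$. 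Also $f(x)-f(y)=W_\ell(x_\ell-y_\ell)$, again with the bias cancelling.

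\textbf{Step 1: a quadratic constraint for each neuron.} For a scalar neuron with pre-activation difference $\delta z$ and output difference $\delta u=\varphi(u)-\varphi(v)$, slope restriction on $[0,\beta]$ says that $\delta u$ lies between $0$ and $\beta\,\delta z$. Hence
\[
\delta u\,(\beta\,\delta z-\delta u)\ge 0,
\]
and this also holds trivially when $\delta z=0$, since then $\delta u=0$. Multiplying by $2t_{k,i}\ge 0$ and summing over all neurons and layers gives
\[
\begin{bmatrix}A\Delta\xi\\ B\Delta\xi\end{bmatrix}^{\!\top}
\begin{bmatrix}0&\beta T_n\\ \beta T_n&-2T_n\end{bmatrix}
\begin{bmatrix}A\Delta\xi\\ B\Delta\xi\end{bmatrix}
=\Delta\xi^\top M_A(T_n)\,\Delta\xi\;\ge\;0 .
\]
The diagonal structure of $T_n$ is exactly what lets this sum decouple into per-neuron terms, which is why only diagonal multipliers are used.

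\textbf{Step 2: the S-procedure.} Take any feasible pair $(H,T_n)$. Feasibility means $M_L(H)+M_A(T_n)\preceq 0$, so
\[
\Delta\xi^\top M_L(H)\Delta\xi+\Delta\xi^\top M_A(T_n)\Delta\xi\le 0 .
\]
Expanding $M_L(H)$ gives $\Delta\xi^\top M_L(H)\Delta\xi=-H\|x-y\|_2^2+\|W_\ell(x_\ell-y_\ell)\|_2^2$. Dropping the nonnegative term from Step 1 then yields
\[
\|f(x)-f(y)\|_2^2\le H\|x-y\|_2^2 .
\]
Therefore $\sqrt H$ is a Lipschitz constant for every feasible $H$.

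\textbf{Step 3: passing to the optimum.} Since the bound holds for every feasible $H$, taking the infimum gives $\|f(x)-f(y)\|_2^2\le H^\star\|x-y\|_2^2$ even if the minimum is not attained. This conclusion uses only the infimum, so no attainment argument is needed, and it gives $L_{\mathrm{SDP}}=\sqrt{H^\star}$.

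The main obstacle I expect is bookkeeping rather than anything conceptual. The blocks of $A$, $B$ and $M_L$ must line up with the ordering of $\Delta\xi$; in particular the last block of $M_L$ must hit $\Delta x_\ell$ and the first must hit $\Delta x_0$. The factor conventions, $\beta T$ off-diagonal versus $-2T$ on the diagonal, must reproduce exactly $2t\,\delta u(\beta\delta z-\delta u)$. I will verify this by writing out the single-hidden-layer case explicitly before stating the general block identity.
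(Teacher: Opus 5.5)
Your proposal is correct and follows essentially the same approach as the paper. The paper gives no separate proof; it presents the result as a direct consequence of the incremental quadratic-constraint and S-procedure result of \citet{fazlyab2020safety}, specialized to slope bounds $[0,\beta]$ and diagonal multipliers, and your Steps 1--2 reconstruct exactly that argument. Your Step 3 point, that only the infimum $H^\star$ is needed and no attainment argument is required, is a correct detail the paper leaves implicit.
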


\section{Additional Experiments}
\label{app:additional-experiments}

\subsection{Experiments on Estimator Sensitivity}
\label{app:sensitivity}
In this section, we conduct a sensitivity analysis of the proposed estimators $\hat S_{Cov}$ and $\hat S_{Cpt}$ with respect to the parameter $\beta$. We take the ''-90\%'' domain of ColoredMNIST as the target domain and regard the mixture of the other two domains as the source domain. We train the model on the source domain using ERM with the best hyperparameters in DomainBed. At the last checkpoint, we store all representation-label pairs from both the source and target domains. For each $\beta$, we randomly sample 10,000 pairs from each domain, run the DataShifts algorithm to obtain the estimated results, repeat this procedure 50 times, and report the average. The results are shown below:

\begin{table}[h]
\centering
\caption{Sensitivity analysis with respect to $\beta$.}
\label{tab}
\small
\setlength{\tabcolsep}{6pt}
\begin{tabular}{c c c c c c c c c}
\toprule
$\beta$ & 0.001 & 0.002 & 0.005 & 0.01 & 0.02 & 0.05 & 0.1 & 0.2 \\
\midrule
$\hat S_{Cov}$ & 0.5081 & 0.5080 & 0.5078 & 0.5074 & 0.5066 & 0.5041 & 0.4995 & 0.4916 \\
$\hat S_{Cpt}$ & 1.4456&  1.4458& 1.4467& 1.4478& 1.4501& 1.4567& 1.4674& 1.4895 \\
Time (s) & 15.12 &14.09 &12.62 &11.60 &10.54  &9.09  &8.03  &6.83 \\
\bottomrule
\end{tabular}
\end{table}

Across magnitude changes in $\beta$, the coefficients of variation of $\hat S_{Cov}$ and $\hat S_{Cpt}$ are 1.076\% and 0.990\%, so both estimators are stable for small $\beta$. Since entropic OT gets faster as $\beta$ grows, we use $\beta=0.2$ in the main experiments to balance speed and accuracy.

\subsection{Experiments on the Bias under Stochastic Labeling}
\label{app:overest}
We test the bias of our $\gamma^*$-Y$\mid$X shift estimator $\hat S_{Cpt}$ on the following stochastic-labeling synthetic data. In both source and target domains, covariates are drawn from a 10-dim standard normal distributions. At each $x$, scalar labels are sampled from normal distribution $\mathcal{N}(\|x\|_2,\sigma^2)$ in the source and $\mathcal{N}(\|x\|_2+1.0,\sigma^2)$ in the target. Thus, the true Y$\mid$X shift $S_{Cpt}^{\gamma^{*}}=1.0$, and standard deviation $\sigma$ controls the noise. By Proposition~\ref{pro:varDelta-in-stochastic-labeling}, the irreducible errors of both domains are $\sigma^2$, and the bias $\Delta\geq0$ of our estimator should be bounded by $2\sigma$. $\hat \Delta=\hat S_{Cpt}-S_{Cpt}^{\gamma^{*}}$ is an estimate of the bias $\Delta$. For each $\sigma$, we randomly sample 10,000 points from each domain, and run DataShifts algorithm with $\beta=0.2$, repeat this procedure 50 times, and report the average $\hat \Delta$. The results are shown below:

\begin{table}[h]
\centering
\caption{Bias of $\hat S_{Cpt}$ under different noise levels $\sigma$.}
\label{tab}
\small
\setlength{\tabcolsep}{6pt}
\begin{tabular}{c c c c c c c c c}
\toprule
$\sigma$ & 0.01 & 0.1 & 0.3 & 0.5 & 1.0 & 2.0 & 5.0 & 10.0 \\
\midrule
Irreducible error bound ($2\sigma$)
& 0.02 & 0.2 & 0.6 & 1.0 & 2.0 & 4.0 & 10.0 & 20.0 \\
Bias ($\hat \Delta$)
& 0.0036&  0.0389&   0.0702&     0.1440&  0.4865&  1.4524&  4.7314& 10.3467 \\
\bottomrule
\end{tabular}
\end{table}

When the noise $\sigma$ is small, the bias $\hat \Delta$ is also very small, which means our estimator $\hat S_{Cpt}$ stays close to the true Y$\mid$X shift $S_{Cpt}^{\gamma^{*}}=1.0$. Even when the noise in both domains reaches half of the true Y$\mid$X shift ($\sigma=0.5$), the bias $\hat{\Delta}$ remains small, and the $\hat S_{Cpt}$ still does not notably overestimate. As $\sigma$ grows far beyond 1.0, Y$\mid$X shift becomes less identifiable. The bias increases and overestimation appears, but it stays below the bound set by irreducible error, as proved in Proposition~\ref{pro:varDelta-in-stochastic-labeling}.


\end{document}